\definecolor{cadmiumgreen}{rgb}{0.0, 0.40, 0.25}
\definecolor{lightcadmiumgreen}{rgb}{0.0, 0.60, 0.30}
\definecolor{cadmiumorange}{rgb}{0.93, 0.53, 0.18}
\definecolor{burgundy}{rgb}{0.50, 0.0, 0.13}
\definecolor{airforceblue}{rgb}{0.36, 0.54, 0.66}
\definecolor{niceblue}{rgb}{0.13, 0.33, 0.83}
\definecolor{nicegreen}{rgb}{0.13, 0.55, 0.17}
    \newcommand{\del}[1]{\colorlet{saved}{.}\color{olive}\st{#1}\color{saved}\xspace}
    \newcommand{\itodo}[1]{\colorlet{saved}{.}\color{magenta}\textbf{TODO}: {#1} \color{saved}\xspace}
    \newcommand{\todox}[1]{\todo[inline,linecolor=magenta,backgroundcolor=magenta!25,bordercolor=magenta]{\color{magenta}\textbf{TODO}: \color{black} #1}}
    \newcommand{\EA}[1]{\colorlet{saved}{.}\color{cadmiumgreen}{#1}\color{saved}\xspace} % Emanuele Albini
    \newcommand{\del}[1]{}
    \newcommand{\itodo}[1]{}
    \newcommand{\todox}[1]{}
    \newcommand{\EA}[1]{{#1}\xspace}
\theoremstyle{plain}
\newtheorem{theorem}{Theorem}[section]
\newtheorem{proposition}[theorem]{Proposition}
\newtheorem{corollary}[theorem]{Corollary}
\theoremstyle{definition}
\newtheorem{definition}[theorem]{Definition}
\theoremstyle{remark}
\newcommand{\customlabelnref}[2]{\protected@write \@auxout {}{\string \newlabel {#1}{{#2}{\thepage}{#2}{#1}{}}}\hypertarget{#1}{#2}}
\newcommand{\customlabel}[2]{\protected@write \@auxout {}{\string \newlabel {#1}{{#2}{\thepage}{#2}{#1}{}}}}
\newcommand{\norm}[1]{\left\lVert#1\right\rVert} % \norm{ \biggl(\sum_{n=1}^N \mathbf{P}_{n}\biggr) }
\newcommand{\card}[1]{\left|#1\right|}
\newcommand{\set}[1]{\left\{#1\right\}}
\newcommand{\lrsquare}[1][]{\left[ #1 \right]}
\newcommand{\lrround}[1]{\left( #1 \right)}
\newcommand{\lrangle}[1]{\left\langle #1 \right\rangle}
\newcommand{\round}{\lrround}
\newcommand{\tuple}{\lrangle}
\DeclareMathOperator*{\Exp}{\mathbb{E}}
\def\E_#1{\Exp_{#1}\@ifnextchar[{\lrsquare}{}} % Expected Value
\newcommand{\Ind}{\mathbbm{1}}
\def\I{\Ind\!\@ifnextchar[{\lrsquare}{}} 
\newcommand{\R}{\mathbb{R}}
\newcommand{\x}{\bm{x}}
\newcommand{\xp}{\bm{x'}}
\newcommand{\Xp}{\X'}
\newcommand{\X}{\mathcal{X}}
\newcommand{\F}{\mathcal{F}}
\newcommand{\m}{m}
\newcommand{\Rm}{\mathbb{R}^{\m}}
\newcommand{\phihat}{\hat{\Phi}}
\newcommand{\phiv}{\bm{\Phi}}
\newcommand{\phivhat}{\bm{\phihat}}
\newcommand{\psihat}{\hat{\Psi}}
\newcommand{\psiv}{\bm{\Psi}}
\newcommand{\psivhat}{\bm{\psihat}}
\newcommand{\gammav}{\bm{\Gamma}}
\newcommand{\MS}{\text{M}\!\text{S}}
\newcommand{\WMS}{\text{W}\!\text{M}\!\text{S}}
\newcommand{\C}{C}
\newcommand{\notC}{\bar{C}}
\newcommand{\spaceornewline}[1]{%
\if@twocolumn%
$$
$$
\else
#1
\fi
}%
\begin{document}

%%
%% The "title" command has an optional parameter,
%% allowing the author to define a "short title" to be used in page headers.
\title{On the Connection between Game-Theoretic Feature Attributions and Counterfactual Explanations}

%%
%% The "author" command and its associated commands are used to define
%% the authors and their affiliations.
%% Of note is the shared affiliation of the first two authors, and the
%% "authornote" and "authornotemark" commands
%% used to denote shared contribution to the research.
\author{Emanuele Albini}
\orcid{0000-0003-2964-4638}
\affiliation{%
  \institution{J.P. Morgan AI Research}
  % \institution{Imperial College London}
  \city{London}
  \country{UK}
}
\email{emanuele.albini@jpmorgan.com}

\author{Shubham Sharma}
\affiliation{%
  \institution{J.P. Morgan AI Research}
  \city{New York}
  \country{USA}
}
\email{shubham.x2.sharma@jpmorgan.com}

\author{Saumitra Mishra}
\affiliation{%
  \institution{J.P. Morgan AI Research}
  \city{London}
  \country{UK}
}
\email{saumitra.mishra@jpmorgan.com}

\author{Danial Dervovic}
\affiliation{%
  \institution{J.P. Morgan AI Research}
%   \streetaddress{1 Th{\o}rv{\"a}ld Circle}
  \city{New York}
  \country{USA}
}
\email{danial.dervovic@jpmorgan.com}

\author{Daniele Magazzeni}
\affiliation{%
  \institution{J.P. Morgan AI Research}
  \city{London}
  \country{UK}
}
\email{daniele.magazzeni@jpmorgan.com}

%%
%% By default, the full list of authors will be used in the page
%% headers. Often, this list is too long, and will overlap
%% other information printed in the page headers. This command allows
%% the author to define a more concise list
%% of authors' names for this purpose.
\renewcommand{\shortauthors}{Albini et al.}

%%
%% The abstract is a short summary of the work to be presented in the
%% article.
\begin{abstract}
  %auto-ignore
Explainable Artificial Intelligence (XAI) has received widespread interest in recent years, and two of the most popular types of explanations are feature attributions, and counterfactual explanations. 
These classes of approaches have been largely studied independently and the few attempts at reconciling them have been primarily empirical. 
This work establishes a clear theoretical connection between game-theoretic feature attributions, focusing on but not limited to SHAP, and counterfactuals explanations.
After motivating operative changes to Shapley values based feature attributions and counterfactual explanations, we prove that, under conditions, they are in fact equivalent. We then extend the equivalency result to game-theoretic solution concepts beyond Shapley values. 
Moreover, through the analysis of the conditions of such equivalence, we shed light on the limitations of naively using counterfactual explanations to provide feature importances.
Experiments on three datasets quantitatively show the difference in explanations at every stage of the connection between the two approaches and corroborate the theoretical findings.
\end{abstract}

%%
%% The code below is generated by the tool at http://dl.acm.org/ccs.cfm.
%% Please copy and paste the code instead of the example below.
%%
\begin{CCSXML}
<ccs2012>
   <concept>
       <concept_id>10010147.10010257</concept_id>
       <concept_desc>Computing methodologies~Machine learning</concept_desc>
       <concept_significance>500</concept_significance>
       </concept>
   <concept>
       <concept_id>10003120.10003121</concept_id>
       <concept_desc>Human-centered computing~Human computer interaction (HCI)</concept_desc>
       <concept_significance>500</concept_significance>
       </concept>
   % <concept>
   %     <concept_id>10010147.10010178</concept_id>
   %     <concept_desc>Computing methodologies~Artificial intelligence</concept_desc>
   %     <concept_significance>300</concept_significance>
   %     </concept>
   <concept>
       <concept_id>10003752.10010070.10010099.10010102</concept_id>
       <concept_desc>Theory of computation~Solution concepts in game theory</concept_desc>
       <concept_significance>500</concept_significance>
       </concept>
   % <concept>
   %     <concept_id>10003752.10010070.10010099.10010100</concept_id>
   %     <concept_desc>Theory of computation~Algorithmic game theory</concept_desc>
   %     <concept_significance>300</concept_significance>
   %     </concept>
   % <concept>
   %     <concept_id>10010147.10010257.10010258.10010259</concept_id>
   %     <concept_desc>Computing methodologies~Supervised learning</concept_desc>
   %     <concept_significance>100</concept_significance>
   %     </concept>
 %   <concept>
 %       <concept_id>10010147.10010178.10010187.10010192</concept_id>
 %       <concept_desc>Computing methodologies~Causal reasoning and diagnostics</concept_desc>
 %       <concept_significance>100</concept_significance>
 %       </concept>
 % </ccs2012>
\end{CCSXML}

\ccsdesc[500]{Computing methodologies~Machine learning}
\ccsdesc[500]{Human-centered computing~Human computer interaction (HCI)}
% \ccsdesc[300]{Computing methodologies~Artificial intelligence}
\ccsdesc[500]{Theory of computation~Solution concepts in game theory}
% \ccsdesc[300]{Theory of computation~Algorithmic game theory}
% \ccsdesc[100]{Computing methodologies~Supervised learning}
% \ccsdesc[100]{Computing methodologies~Causal reasoning and diagnostics}

%%
%% Keywords. The author(s) should pick words that accurately describe
%% the work being presented. Separate the keywords with commas.
\keywords{XAI, SHAP, Shapley values, counterfactuals, feature attribution}
%% A "teaser" image appears between the author and affiliation
%% information and the body of the document, and typically spans the
%% page.
% \begin{teaserfigure}
%   \includegraphics[width=\textwidth]{sampleteaser}
%   \caption{Seattle Mariners at Spring Training, 2010.}
%   \Description{Enjoying the baseball game from the third-base
%   seats. Ichiro Suzuki preparing to bat.}
%   \label{fig:teaser}
% \end{teaserfigure}

% \received{15 March 2023}
% \received[Revised]{June 19, 2023}
% \received[accepted]{2023}

%%
%% This command processes the author and affiliation and title
%% information and builds the first part of the formatted document.
\maketitle
%auto-ignore
\section{Introduction}\label{sec:introduction}

As complex machine learning models are used extensively in industry settings, including in numerous high-stakes domains such as finance \citep{Veloso2021,Bhatore2020}
healthcare \citep{Yu2018,Markus2021} 
and autonomous driving \citep{Grigorescu2020}, explaining the outcomes of such models has become, in some cases, a legal requirement \citep{Jobin2019}, e.g., U.S. Equal Opportunity Act \citep{USRegulationB} and E.U. General Data Protection Regulation \citep{GDPR2016}.
The use of XAI techniques is increasingly becoming a standard practice at every stage of the lifecycle of a model \citep{Bhatt2020}: during development, to debug the model and increase its performance; during review, to understand the inner working mechanisms of the model; and in production, to monitor its effectiveness \citep{Mohseni2021}.

In this context, two different classes of approaches have received a lot of attention from the research community in the last few years: \emph{feature attribution} techniques and \emph{counterfactual explanations}.

Feature attributions aim at distributing the output of the model for a specific input to its features. To accomplish this, they compare the output of the (same) model when a feature is present with that of when the same feature is remove, e.g., \citep{Ribeiro2016,Lundberg2017,Sundararajan2017}.

% As definide by Wachter, 
\EA{Counterfactual explanations instead} aim to answer the question: what would have to change in the input to change the outcome of the model \citep{Wachter2017}. Towards this goal, desirable properties of the modified input, also known as the "counterfactual", are: proximity, realism, and sparsity with respect to the input \citep{Barocas2020,Keane2021}.

% Some researchers draw attention on how despite these two classes of approaches have been largely studied independently they can be seen as complementary to each other in the framework of actual causality \citep{Mothilal2021}. 
% (From Verma) Although counterfactual explanations have been credited to eliciting causal thinking and providing actionable feedback to users, they do not tell which feature(s) was the principal reason for the original decision and why. 
% (From Galhotra) normative evaluation of an algorithmic system, ensuring different stakeholders that the system’s decision rules are justifiable; and (2) provide users with an actionable recourse to change the results of algorithms in the future 

% These two classes of approaches being largely studied independently, but recent works have been analysing the connection between the two class of approaches with two main focuses. Unifying counterfactual explanations with feature attributions techniques is one of the `big' open questions in the literature \citep{Verma2020}, as these two classes of approaches being largely studied independently. 
% \itodo{\citep{Selbst2018} normative evaluation vs. algorithmic recourse \citep{Venkatasubramanian2020}}
% \itodo{And in fact, unifying counterfactual explanations with traditional explainable AI technique has been is the first open question highlighted in \citep{Verma2020}, an important survey in the algorithmic recourse literature.}

As these two explanation types are used to understand models, an imperative question is: \emph{"How do feature attribution based explanations and counterfactual explanations align with each other?"} Unifying counterfactual explanations with feature attributions techniques is an open question \citep{Verma2020}. In fact, while counterfactual explanations aim to provide users with ways to change a model decision \citep{Venkatasubramanian2020}, it has been argued that they do not fulfil the normative constraints of identifying the principal reasons for a certain outcome, as feature attributions do \citep{Selbst2018}.

Although these two classes of approaches have largely been studied in isolation, there has been some work (primarily empirical) to address a connection between the two:

\begin{itemize}%[itemsep=0pt,topsep=0pt,leftmargin=16pt]
\item When motivating the usefulness of counterfactual explanations, researchers have drawn attention on how a set of counterfactual points can be used to directly generate \textit{feature importances based on how frequently features are modified in counterfactuals} \citep{Sharma2020, Barocas2020, Mothilal2021}. %Most notably, this idea has been used by \citep{} to empirically compare the two classes of explanations in the framework of actual causality;
\item On the feature attribution side, a recent line of research has been gaining traction around combining counterfactuals and Shapley values with the goal to generate \textit{feature attributions with a counterfactual flavour \EA{by using counterfactuals as background distributions for SHAP explanations}} \citep{Albini2022, Lahiri2022}.
% Watson2022
% In particular, the idea behind Counterfactual SHAP values \citep{Albini2022} is that 
% The idea is that by using counterfactuals as baseline for SHAP, one can obtain an explanation that better suggests what features are important to ``cross'' the decision boundary.
\end{itemize}

% \itodo{Possibly move descriptions to background}
However, there has been no work that establishes a clear theoretical connection between these approaches, \EA{or that theoretically analyses their limitations and assumptions}. 

This paper bridges the gap between these two lines of research that have been developing in parallel: % In particular, the contributions of this paper are:

\begin{itemize}%[itemsep=0pt,topsep=0pt,leftmargin=16pt]
\item We provide and justify operative changes to the \emph{counterfactual frequency-based feature importance} and \emph{Shapley values-based feature attributions} that are necessary to make an equivalency statement between the two explanations.
\item We theoretically prove that --- after imposing some conditions on the counterfactuals --- the \emph{Shapley values-based feature attribution} and the \emph{counterfactual frequency-based feature importance} are equivalent.
\item We discuss what are the effects of such an equivalency, with particular attention to (1) the game-theoretic interpretation of explanations and (2) the limitations of \emph{counterfactual frequency-based feature importance} in providing a detailed account of the importance of the features.
\item \EA{We generalise the connection with \emph{counterfactual frequency-based feature importance} to a wider range of game-theoretic solution concepts beyond Shapley values.}
\item We perform an ablation study to show how each of the proposed operative changes (required to establish equivalency) will impact the explanations, \EA{and we show how the empirical results are coherent with the theoretical findings}.
\item \EA{Finally, we evaluate these explanations} using common metrics from the XAI literature \EA{as necessity, sufficiency, plausibility and counterfactual-ability \citep{Mothilal2021,Albini2022}, and once again we show how the empirical results are coherent with the theoretical findings.}
\end{itemize}

\EA{It is important to note that the theory established in this paper applies to \emph{any} counterfactual explanation, \emph{independently of the technique} used for its generation, and is also valid when considering \emph{multiple} or \emph{diverse} counterfactual explanations for the same query instance \citep{Mothilal2020a,Dandl2020,Papantonis2022,Smyth2021}}.

\section{Background}\label{sec:background}

% We consider a trained binary classification \emph{model} $f : \X \rightarrow \Y$ where $\X = \Rm$ and $\Y = \R$. We define the \emph{decision function} $F : \X \rightarrow \{0, 1\}$ with (binary) \emph{decision threshold} $t \in \R$ as:
Consider a classification \emph{model} $f : \Rm \rightarrow \R$ and its \emph{decision function} $F : \X \rightarrow \{0, 1\}$ with \emph{threshold} $t \in \R$:
% \footnote{We use lower-case bold symbols to indicate vectors (e.g., $\x$) and non-bold symbols to indicate scalars (e.g., $x_i$).}
$$F(\x) = \begin{cases} 1 & \text{if } f(\x) > t\\ 0 & \text{otherwise}\end{cases}.$$

We refer to $f(\x)$ as the model \emph{output} and to $F(\x)$ as the model \emph{prediction}.
Without loss of generality, in the remainder of this paper we assume that $\x$ is such that $F(\x) = 1$.

%%%%%%%%%%%%%%%%%%%%%%%%%%%%%%%%%%%%%%%%%%%%%%%%%%%%%%%%%%%%%%%%%%%%%%%%%%%%%%%%%%%%%%%%%%%%%%%%%%%%
\subsection{Counterfactual Explanations}\label{sec:counterfactuals}

A \emph{counterfactual} \citep{Wachter2017,Karimi2022}
% \footnote{Note that in this paper we refer to counterfactuals as intended in XAI, these may not be counterfactuals in the \emph{causal} reasoning sense.}
for a \textit{query instance} $\x \in \Rm$ is a point $\xp \in \Rm$ such that:
% \begin{enumerate}[label=(\arabic*),noitemsep,topsep=0.15em]
% \item 
% it gives rise to a different prediction,
(1) $\xp$ is \emph{valid}, i.e., $F(\xp) \neq F(\x)$;
% \item 
(2) $\xp$ is \emph{close} to $\x$ (under some metric);
% \item 
(3) $\xp$ is a \emph{plausible} input.
% \end{enumerate}
% In the remainder, we refer to features that have been modified in the counterfactual (with respect to the query instance) as ``features in the counterfactual''.

The plausibility requirement has taken different forms. %in the existing literature, 
It may involve considerations about proximity to the data manifold \citep{Pawelczyk2020,Kenny2021}, proximity to other counterfactuals \citep{Laugel2019}, causality \citep{Karimi2020a}, actionability \citep{Ustun2019,Poyiadzi2020}, 
% More recent recent research have focused on other properties of counterfactuals as 
robustness \citep{Upadhyay,Pawelczyk2022a,Sharma2022} %, i.e., their ability to withstand small changes to the query instance, the model or the hyper-parameters of the explanations %\citep{Mishra2021}, 
or a combination thereof \citep{Dandl2020}.
% sparsity \citep{Smyth2021%,Sharma2022,Lang2022,
% }, % i.e., how conservative counterfactuals are in terms of number of changed features.
% or a combination thereof \citep{Dandl2020}.

Another key aspect for counterfactual explanations is their \emph{sparsity} \citep{Fernandez2019,Smyth2021,Rodriguez2021,
Sharma2022,Lang2022}.
% Sparsity has been identified as a key aspect of counterfactual explanations . 
Optimising for sparsity forces explanations (1) to ignore features that are not used by the model to make decisions, and (2) in general, to be more concise, %, limiting the cognitive load for the user, 
as advocated also from a social science perspective \citep{Miller2019}.
However, criticisms about sparsity have been raised, e.g., in the actionable recourse settings \citep{Pawelczyk2020,VanLooveren2021a},
% --- wherein the goal of the explanations is to improve outcomes for model consumers --- 
as sparsity could give rise to explanations that are less plausible.
Ultimately, this argument reduces to the well-known thread-off between explanations that are ``true to the model'' (more sparse) or ``true to the data'' (more plausible) \citep{Chen2020,Janzing2020}. 

A plethora of techniques for the generation of counterfactuals exist in the literature using search algorithms \citep{Wachter2017,Spooner2021,Albini2020,Albini2021b}, 
optimisation \citep{Kanamori2022} and genetic algorithms \citep{Sharma2020} among other methods. We refer the reader to recent surveys for more details \citep{Keane2021,Stepin2021,Guidotti2022,Karimi2020}.

% As described in \cref{sec:introduction}, 
Few authors have suggested to generate a feature importance from counterfactual explanations \citep{Sharma2020,Barocas2020}.
In particular, % in the context of diverse counterfactual explanations 
\citet{Mothilal2021} proposed to use the fraction of counterfactual examples (for the same query instance) that have a modified value as the feature importance. 
The formal definition follows.

\begin{definition}[\textsc{CF-Freq}]\label{def:cf_freq}
    Given a query instance $\x$ and a set of counterfactuals $\Xp$ the \emph{counterfactual frequency importance}%
    \footnote{Our term. No specific name beyond the more general ``counterfactual feature importance'' had been given in the literature.},
    denoted with $\psiv$, is defined as follows. \citep{Mothilal2021}
    $$
        \psiv = \E_{\xp \sim \Xp}\left[ \I[\x \neq \xp] \right]
    $$
    where %$\Exp$ denotes the expected value operator and
    % \footnote{\label{footnote:expected_value}With an abuse of notation, $\Exp$ can be also applied to vectors element-wise, and can be applied over a set indicating a uniform distribution over such set.},
    $\Ind$ is the binary indicator operator.
    % \footnote{\label{footnote:indicator}$\I[\cdot] = 1$ if the boolean expression $\cdot$ is true, and $\I[\cdot] = 0$ otherwise. We overload $\I$ to work also vectors of boolean expressions: if $\cdot \in \R^p$ then $\I[\cdot] \in \R^p$ is defined such that $\I[\cdot]_i = \I[\cdot_i]$.}
    % and with an abuse of notation $\x \neq \xp$ denotes the element-wise $\neq$ boolean operation%
    % \footnote{With an abuse of notation, $(\x \neq \xp) \in \set{\text{False}, \text{True}}^{\m}$ is a vector where the boolean operation $\neq$ is applied element-wise, i.e., $(\x \neq \xp)_i = (x_i \neq x_i')$.}
\end{definition}

\EA{The assumption behind this \textsc{CF-Freq} feature importance is that a feature modified more often in counterfactual examples is more important than others which are changed less often. We will show in \cref{sec:feature_removal_strategy} how this assumption has an important effect on the explanation that is generated}.

%%%%%%%%%%%%%%%%%%%%%%%%%%%%%%%%%%%%%%%%%%%%%%%%%%%%%%%%%%%%%%%%%%%%%%%%%%%%%%%%%%%%%%%%%%%%%%%%%%%%
\subsection{SHAP}\label{sec:attributions}\label{sec:shap}

% It has been shown that most of feature attributions can be unified under the framework of removal-based explanations \citep{Covert2020}. They can be characterised by 3 dimensions: (1) the feature \textbf{removal strategy}, (2) the \textbf{model behaviour} the method explains, and (3) the \textbf{summary technique} used to combine each feature's influence. 
% For example, as shown in \cref{table:removal_based}, SHAP can be characterised in the context of this framework as (1) using the marginal distributions to simulate feature removal (2) to fairly attribute to each of the features a contribution to the model output $f(\x)$ (3) using the game-theoretic notion of Shapley values \citep{Shapley1951}. The formal defition follows.

The Shapley value is a solution concept in classic game theory used to attribute the payoff to the players in an $\m$-player cooperative game. % satisfying a set of desirable properties
Given a set of players $\F = \{ 1, \ldots, \m \}$ and the \emph{characteristic function} $v : 2^{\F} \rightarrow \R$ of a cooperative game, Shapley values are used to attribute the payoff returned by the characteristics function to the players.

\begin{definition}[Shapley values]\label{def:shapley}
    The \emph{Shapley value} for player $i$ is defined as follows. \citep{Shapley1951}
    $$
        \sum_{S \subseteq \F \setminus \{ i \}} w(|S|) \left[ v(S \cup \{i \}) - v(S) \right]
        \spaceornewline{\quad} \text{ where }\quad
        w(s) = \frac{1}{m} \binom{\m-1}{s}^{-1}        
    $$
\end{definition}

% We recall that Shapley values satisfy the \emph{efficiency} property (a.k.a., \emph{local accuracy}, or \emph{additivity} in the XAI literature) requiring that $\sum_{i \in \F} \phi_i = v(\F) - v(\emptyset)$. 
% \itodo{Move to appendix and/or footnote?}

%%%%%%%%%% DIAGRAM: Connection between CFX and SHAP %%%%%%%%%%%%%%%%%%
% \begin{figure*}
%     \centering
%     \input{assets/shap_cfx_diagram.tex}
%     \caption{
%     \itodo{Add an example of an explanation underneath each of the method, e.g., $\phiv = [1.2, 4.5, 0]$}.
        
%     }
%     \label{fig:connection}
% \end{figure*}
\begin{figure*}[!ht]
    \centering
    %auto-ignore
% \begin{tikzcd}[remember picture,column sep=normal,row sep=small]
\begin{tikzcd}[row sep=huge, column sep=huge, remember picture]
        % |[anchor=west, draw=black]| \scriptsize
        % \begin{matrix*}[l]
        %     \text{SHAP}\\
        %     \text{Removal } & \text{\textbf{Training Set}} \\
        %     \text{Model Behaviour} & \text{\textbf{Prediction}} \\
        %     \text{Summary Technique} & \text{\textbf{Shapley Values}}
        % \end{matrix*}
        \parbox{2.2cm}{\centering\small SHAP \\ \scriptsize\citet{Lundberg2017}}
        \arrow[r, swap, "\scriptsize\parbox{1.8cm}{\center {Counterfactuals as Background Distribution}}"] 
    &
        \parbox{2.2cm}{\centering\small CF-SHAP \\ \scriptsize\citet{Albini2022} \\ \small $\phiv$}
        \arrow[r, line width=0.5pt, swap, "\scriptsize\parbox{2cm}{\center {\textbf{Binary Prediction} (Query Function, \cref{sec:model_behaviour})}}"]
    &
        \parbox{2cm}{\centering\small Binary \\ CF-SHAP \\ $\phivhat$}
        \arrow[r, Leftrightarrow, "\scriptsize\parbox{2.75cm}{\center \textbf{Maximal Sparsity of Counterfactuals}\\(\cref{theo:connection,theo:connection_balance_maximal_sparsity})}" {yshift=5pt}] 
    &
        \parbox{2cm}{\centering\small Normalised CF-Freq \\ $\psivhat$}
    &
    \parbox{2.2cm}{\centering\small CF-Freq \\ \scriptsize\citet{Mothilal2021}\\ \small $\psiv$}
    \arrow[l, "\scriptsize\parbox{1.8cm}{\center \textbf{Normalisation} (Efficiency, \cref{sec:efficiency})}"]
    \\
    %SHAP 
    &
    %CFSHAP 
    &
    %BinaryCFSHAP
    \parbox{2.25cm}{\centering\small {Dictators Symmetric}\\Solution Concepts \\ $\gammav$}
    \arrow[ur, Leftrightarrow, swap, "\scriptsize\parbox{2.75cm}{\center \textbf{Maximal Sparsity of Counterfactuals}\\(\cref{theo:extension-solution-concepts})}" {xshift=-10pt,yshift=-5pt}] 
    &
    %NomralisedCFFreq 
    &
    %CFFreq
\end{tikzcd}

% ArXiv
% \begin{tikzpicture}[overlay,remember picture]
% \draw[rounded corners, color=nicegreen] (-18.15, -1) rectangle (-14.75, 0.75) {};
% \node[color=nicegreen] at (-16.5, -1.4) {\footnotesize\parbox{3cm}{\raggedright\textbf{Feature Attribution\\state-of-the-art}}};
% \draw[rounded corners, color=niceblue] (-2.85, -0.25) rectangle (0.5, 0.75) {};
% \node[color=niceblue] at (-1, -1.15) {\footnotesize\parbox{3cm}{\raggedleft\textbf{Counterfactual-based\\Feature Importance\\state-of-the-art}}};
% \end{tikzpicture}
% ICML
\begin{tikzpicture}[overlay,remember picture]
% \draw[rounded corners, color=nicegreen] (-8.5, 0.0) rectangle (-3.5, 1.75) {};
% \node[color=nicegreen] at (-16.5, -1.4) {\footnotesize\parbox{3cm}{\raggedright\textbf{Feature Attribution\\state-of-the-art}}};
\draw[rounded corners, color=niceblue] (-2.7, -0.25) rectangle (7.2, 4.5) {};
\node[color=niceblue] at (1.2, 0.05) {\footnotesize\parbox{3cm}{\raggedleft\textbf{This Paper}}};
\end{tikzpicture}
    \caption{Diagram showing the connection between game-theoretic feature attributions and counterfactual feature importance techniques. Nodes are techniques, edges ($\rightarrow$) show the change from one technique to another, and the double-sided edge ($\Leftrightarrow$) shows the equivalency relationship (with its conditions). See \cref{sec:shap_to_cfx,sec:connection,sec:discuss-game-theory} for more details on the journey.}
    \label{fig:connection}
\end{figure*}
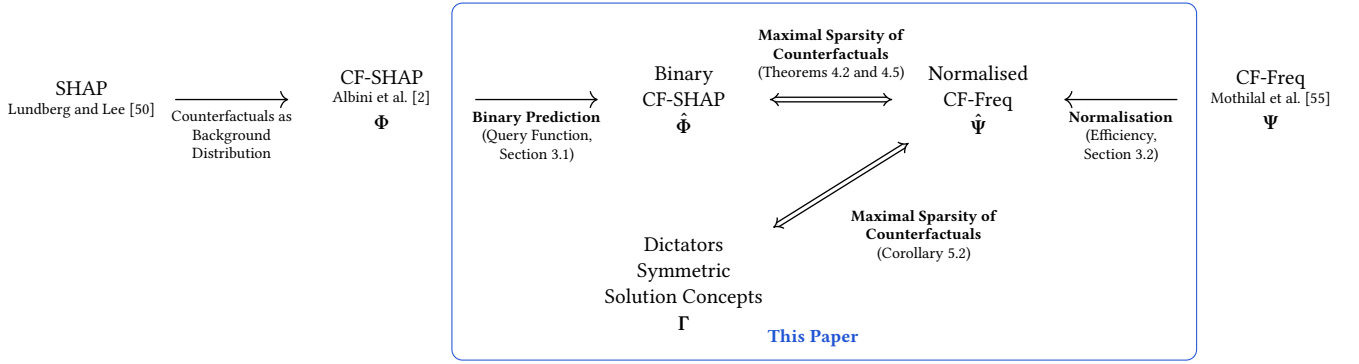
%%%%%%%%%%%%%%%%%%%%%%%%%%%%%%%%%%%%%%%%%%%%%%%%%%%%%%%%%%%%%%%%%%%%%%

In the context of machine learning models the players are the features of the model and several ways have been proposed to simulate feature absence in the characteristic function, e.g., retraining the model without such feature \citep{Strumbelj2010}.
In particular, SHAP \citep{Lundberg2017} simulates the absence by marginalising over the marginal distributions of the features. In practice, the marginals are estimated as a uniform distribution over a finite number of points $\X$ called the \emph{background dataset} --- typically the training set (or a sample thereof). 
\EA{The formal definition of SHAP values follows.}
% This means that SHAP that the absence of the features is simulated using a background distribution. In practice, the background distribution is taken as a uniform distribution over a finite number of points $\X$, called the \emph{background dataset}, which is typically the training set.}
% by marginalising over the marginal distributions of the features (typically from the training set).

\begin{definition}[SHAP]
% Given a query instance $\x$ 
The \emph{SHAP values} for a query instance $\x$ with respect to a background dataset $\X$ are the \emph{Shapley values} of a game with the following characteristics function. \citep{Lundberg2017}
$$
    v(S) = \E_{\xp \sim \X}[ f\left(\tuple{\x_{S}, \xp_{\bar{S}}}\right)]
$$
where $\tuple{\x_{S}, \xp_{\bar{S}}}$ indicates a model input with feature values $\x$ for features in $S$ and $\xp$ for features not in $S$.
\end{definition}

% When SHAP assigns the contributions of the players (features) to the game payoff (model output) an important assumption is made: the simulation of each feature's absence in the cooperative game using a background distribution $\X$.
The fact that SHAP simulates feature absence with a \emph{background dataset} means that it explains a prediction of an input \emph{in contrast} to a distribution of background points \citep{Merrick2020}.
%In practice, the background distribution is taken as a uniform distribution over a finite number of points, called the \emph{background dataset}.
% It has been noted in the literature that SHAP explanations are contrastive, in the sense that they explain a query instance $\x$ in contrast to a distribution of background points $\X$ \citep{Merrick2020}. 
Starting from this observation \citet{Albini2022} proposed {Counterfactual SHAP}: a variant of SHAP, using counterfactuals rather than the training set as the background dataset. This results in an explanation that can identify which features, if changed, would result in a different model decision better than SHAP.
% better answers the counterfactual question of ``Which features were most important for the model to make a decision with respect to the features of other samples ?''.

\begin{definition}[CF-SHAP]\label{def:shap}
    Given a query instance $\x$, the \emph{Counterfactual SHAP values}, denoted with $\phiv$, are the \emph{SHAP values} with respect to $\Xp$ such that $\Xp$ is a set of counterfactuals for $\x$. \citep{Albini2022}
\label{cf-shap}
\end{definition}

\EA{We recall that the mathematical properties of Shapley values, and by extension SHAP values, of \emph{efficiency}, \emph{null-player} and \emph{strong monotonicity} also apply to CF-SHAP values \cite{Shapley1951,Lundberg2017,Albini2022}. In particular, in \cref{sec:efficiency} we will show the key role that the \emph{efficiency} property plays in drawing the connection with counterfactual explanations.}

% \itodo{Should we add here the full axiomisation of SHAP? I don't think so.}
% We note that  we frame CF-SHAP in the removal-based explanations framework of \citep{Covert2020} of  They can be characterised by 3 dimensions: (1) the feature \textbf{removal strategy}, (2) the \textbf{model behaviour} the method explains, and (3) the \textbf{summary technique} used to combine each feature's influence. 
% For example, as shown in \cref{table:removal_based}, SHAP can be characterised in the context of this framework as (1) using the marginal distributions to simulate feature removal (2) to fairly attribute to each of the features a contribution to the model output $f(\x)$ (3) using the game-theoretic notion of Shapley values \citep{Shapley1951}. The formal defition follows.
% In the context the rem, the \emph{characteristics function} $f$ describe the model behaviour 
% \itodo{Frame this definition in the context of \citep{Covert2020} so that is clear afterwards.}

%%%%%%%%%%%%%%%%%%%%%%%%%%%%%%%%%%%%%%%%%%%%%%%%%%%%%%%%%%%%%%%%%%%%%%%%%%%%%%%%%%%%%%%%%%%%%%%%%%%%
%%%%%%%%%%%%%%%%%%%%%%%%%%%%%%%%%%%%%%%%%%%%%%%%%%%%%%%%%%%%%%%%%%%%%%%%%%%%%%%%%%%%%%%%%%%%%%%%%%%%
%%%%%%%%%%%%%%%%%%%%%%%%%%%%%%%%%%%%%%%%%%%%%%%%%%%%%%%%%%%%%%%%%%%%%%%%%%%%%%%%%%%%%%%%%%%%%%%%%%%%
\section{Incongruity of SHAP and Counterfactuals}\label{sec:shap_to_cfx}

% \itodo{This comes a little bit out of the blue. It must be introduced more gently maybe?}

% \itodo{Move all the following regarding \citep{Covert2020} to the discussion part similary to the necessity and sufficiency aspects. It seems that all this is not necessary to present the concepts. Also rename the subsection accordingly.}

% It has been shown that most of feature attributions can be unified under the framework of removal-based explanations \citep{Covert2020}. They can be characterised by 3 dimensions: (1) the feature \textbf{removal strategy}, (2) the \textbf{model behaviour} the method explains, and (3) the \textbf{summary technique} used to combine each feature's influence.
% For example, as shown in \cref{table:removal_based}, SHAP can be characterised in the context of this framework as (1) using the marginal distributions of the features to simulate their removal (2) to fairly attribute to each of the features a contribution to the model output $f(\x)$ (3) using the game-theoretic notion of Shapley values %\citep{Shapley1951}
% .
% Similarly, Counterfactual SHAP %\citep{Albini2022} 
% can be understood as modifying the feature removal strategy from marginalising over the marginal distributions of the features (in the training set) to marginalising over the marginal distributions of the features \textbf{in the counterfactuals}.

% \todox{Can we find better names for 2 (efficiency/their axioms/their base) and 3 (discrimination power/continuosness/...)?}

There are three dimensions along which SHAP and counterfactual explanations differ:
\begin{enumerate}%[itemsep=0pt,topsep=0pt,leftmargin=16pt]
% \item The \textbf{model behaviour} they analyse. SHAP splits the model \emph{output} to the features while counterfactuals aim at finding a point with a different \emph{prediction}.
\item The {\textbf{query function}} used to generate explanations. SHAP queries the model using $f$ to attribute to each of the features a part of the model \emph{output}; counterfactuals instead query the model using $F$ with the aim of finding a point with a different \emph{prediction}.
\item The \textbf{{efficiency}} of explanations. The game-theoretic property of efficiency that SHAP values requires them to add up to the model output. This is not inherently true for counterfactual explanations.

\item The \textbf{{granularity}} of the explanation. A single counterfactual does not inherently ``rank'' features based on their effect on the output of the model: it only shows \emph{which features} to modify to get a different prediction. On the other hand, SHAP \emph{assigns a score} to each feature based on their impact on the model output (even when using a single data point as background).
\end{enumerate}

% In the rest of this section we discuss how by carefully changing these dimensions we can draw an equivalency relationship between Counterfactual SHAP and \textsc{CF-Freq}.
In this section we present how we propose to carefully change these dimensions in order to draw an equivalency relationship between CF-SHAP and \textsc{CF-Freq}.
%can be reduced to \textsc{Norm-CF-Freq}, an \emph{efficient}%
% \footnote{i.e., satisfying the game-theoretic property of \emph{efficiency}, see \cref{sec:shap}}
% variant of the \textsc{CF-Freq}. 
A summary diagram of this journey is in \cref{fig:connection}.

\EA{We remark, as mentioned in \cref{sec:introduction}, that} \textbf{the theory established in this paper applies to \emph{any} counterfactual explanation generation technique}. 
\EA{We also remark that --- while in this section we focus on the connection of counterfactual explanations with Shapley-values based explanation because of its popularity in the XAI field as well as in broader machine learning community --- \textbf{our theoretical results can be generalised to other game-theoretic solution concepts beyond Shapley values} (see \cref{sec:discuss-game-theory})}.

% In the remainder, we omit proofs that are not essential to deliver the message of the paper. We refer the reader to \cref{appendix:proofs} if interested in learning more details.

% \begin{table*}
%     \centering
%     \input{assets/methods_equivalency_table.tex}
%     \caption{Removal-based feature attributions used in this paper in the 3 dimensions of the framework of removal-based explanations \citep{Covert2020}. ($*$) We omit `Marginalise (marginal)' and only indicate the set of points over which we marginalise using the marginal distributions of individual features as we focus only on such family of \emph{feature removal} strategies in this paper.}
%     \label{table:removal_based}
% \end{table*}

%%%%%%%%%%%%%%%%%%%%%%%%%%%%%%%%%%%%%%%%%%%%%%%%%%%%%%%%%%%%%%%%%%%%%%%%%%%%%%%%%%%%%
% \subsection{Model Behaviour under Analysis}\label{sec:model_behaviour}
\subsection{Query Function}\label{sec:model_behaviour}

% One key difference between SHAP %(as well as the majority of removal-based explanations) 
% and counterfactual explanations is that SHAP attributes the model \emph{output} to the features while counterfactual generation algorithms primary objective is to
% \footnote{Admittedly, some algorithm in practice use $f(\x)$ but with the sole objective to better guide the search.}
% find a point $\xp$ with a different \emph{prediction}. 
% In the context of the removal-based explanations framework \citep{Covert2020}, this means that SHAP and counterfactual explanations analyse two different model behaviours.
% Starting from this consideration, a first step to bring SHAP and counterfactual explanations closer is to change the model behaviour of one of the two explanations. Since of SHAP to that it is being analysed, from the \emph{prediction} $f(\x)$ to the \emph{prediction} $F(\x)$.

One key difference between SHAP and counterfactual generation engines is that they interact with the model differently. This is due to the different goals of the two explanations:
% two different model behaviours\footnote{We use the term ``model behaviour'' to be coherent with the existing literature \citep[see][]{Covert2020}.}. 
while counterfactual generation algorithms aim to find a point $\xp$ with a different \emph{model prediction}, SHAP goal is to attributes the \emph{model output} to the features.
This means that, concretely, when generating the explanations, these methods query the model using different functions: SHAP uses $f$ while counterfactual generation engines use $F$.

% In order connect the two explanations, we have to change the model behaviour under analysis of one of the two. Changing the model behaviour being analysed by counterfactual explanations would completely distort the essence of the explanation since it is their primary goal to, precisely, find another a point with a different prediction. The same is not true for SHAP: it is possible to change the model under analysis from the model \emph{output} $f(\x)$ to the \emph{prediction} $F(\x)$ straightforwardly.
% Changing the model behaviour being analysed by counterfactual explanations would completely distort the essence of the explanation since it is their primary goal to, precisely, find another a point with a different prediction $F(\xp)$.
In order to bring the two explanations under the same paradigm, we propose to change the characteristics function of CF-SHAP (\cref{cf-shap}) to use $F$ (rather than $f$). 
\EA{We now formally define the resulting feature attribution.}

\begin{definition}[\textsc{Bin-CF-SHAP}]\label{def:binary-cf-shap}
  Given a query instance $\x$ the \emph{Binary CF-SHAP values}, denoted with $\phivhat$, are the \emph{SHAP values} of a game with the following \emph{characteristic function}.
    $$
    v(S) = \E_{\xp \sim \Xp}[ F\left(\tuple{\x_{S}, \xp_{\F \setminus S}}\right)]
    $$
    where $\Xp$ is a set of counterfactuals for $\x$.
\end{definition}

We note that CF-SHAP already made use of $F$, but only to compute the counterfactuals used as background dataset. 
\EA{Instead, with this change} to the characteristics function, CF-SHAP becomes completely ``insensitive'' to changes in model \emph{outputs} (probability) that do not also give rise to a change in the model \emph{prediction} (class). 
% This means that explanations analysing the model using such a query function will only highlight features that not only have an impact on the output of the model but also have the ability to change its prediction.

We remark that changing the \emph{characteristic function} of SHAP implies that the resulting attribution is still a vector of Shapley values and, as such, it retains all the (desirable) game-theoretic properties \citep{Sundararajan2020}.
In fact, it is not uncommon in the feature attribution literature to query the model using functions other than the model output $f$. \citet{Covert2020} analysed such \emph{query functions} --- in their work called \emph{model behaviours}. % --- in the context of feature removal-based explanations. 
Nevertheless, we emphasise that querying the model using $F$, as we propose, is \emph{novel} to the feature attribution literature.

% Many feature attributions can be unified under the framework of removal-based explanations \citep{Covert2020}. They can be characterised by 3 dimensions: (1) the feature \textbf{removal strategy}, (2)  the method explains, and (3) the \textbf{summary technique} used to combine each feature's influence.
% We note that the change of model behaviour analysed by the explanation from the model output to the prediction described in \cref{sec:model_behaviour} can be understood within this framework.
 %has ever considered as a candidate model behaviour to be explained.
%We will discuss further how this connects with the existing literature in \cref{sec:feature_removal_based_explanations}.
% We also note that this change, combined with the maximal sparsity requirement for counterfactuals (\cref{def:maximally_sparse}), de facto, make the explanations necessary, because all the features that are present in the counterfactual are necessary to change the prediction.
% After enacting this change, SHAP will only attribute a non-zero feature attribution only if a feature is participating in mini
% preserve all the (desirable) game-theoretic properties of Shapley values in the resulting attribution.

%%%%%%%%%%%%%%%%%%%%%%%%%%%%%%%%%%%%%%%%%%%%%%%%%%%%%%%%%%%%%%%%%%%%%%%%%%%%%%%%%%%%%
\subsection{Efficiency of Explanations}\label{sec:efficiency}

% As mentioned in \cref{sec:shap}, 
Shapley values satisfy the \emph{efficiency} property\footnote{\EA{The game-theoretic property of \emph{efficiency} \cite{Shapley1951} is sometimes referred to as \emph{additivity} \cite{Lundberg2017} in the XAI literature.}}, an essential part of many of their axiomatisations \citep{Shapley1951,Young1985}. In the context of SHAP values, it requires that:

$$
\sum_{i \in \F} \phi_i = f(\x) - \E_{\x' \in \X}[f(\x')] .
$$ 

In other words it requires the SHAP values to \emph{truly} be a \emph{feature attribution} distributing the model output among the features.
It can be trivially shown that in the case of \textsc{Bin-CF-SHAP} (\cref{def:binary-cf-shap}) the efficiency property simplifies to the following expression (see \cref{prop:efficiency} for more details).
$$
    \sum_{i \in \F} \phihat_i = 1
$$
We note that the efficiency property is not satisfied by \textsc{CF-Freq}.
Given that \textsc{CF-Freq} has not been defined with the goal to satisfy such game-theoretic property, \textsc{CF-Freq} explanations are not \emph{feature attributions}, i.e., they will not attribute to each of the features part of the model output. They instead sum up to a value that could greater or lesser than $1$ depending on the query instance.

% \itodo{This can be understood as the explanations, not being normalised.}
% In order to be able to reason in the paradigm of game-theoretic explanations based on Shapley values, we thus enforce the \emph{efficiency} property on \textsc{CF-Freq} by adding a normalisation term.
% In order to connect \textsc{}
In order to ensure that \textsc{CF-Freq} explanations satisfy the efficiency property, we propose to add a \textbf{normalisation term}.
% \ctodo{In order to be bring \textsc{CF-Freq} under the same paradigm SHAP explanations, we have to enforce the \emph{efficiency} property on \textsc{CF-Freq} by adding a normalisation term.}
We call the resulting feature importance \textsc{Norm-CF-Freq}. 
Concretely, while \textsc{CF-Freq} gives to each of the modified features in a counterfactual an importance of $1$, \textsc{Norm-CF-Freq} instead gives them an importance of $1/c$ where $c$ is the number of modified features in the counterfactual. If multiple counterfactuals are given for the same query instance, the (element-wise) average of such feature importance will be computed, similar to \textsc{CF-Freq}.

\begin{definition}[\textsc{Norm-CF-Freq}]\label{def:cf_hamming}\label{def:normalised_cf_freq}
    Given a query instance $\x$ and a set of counterfactuals $\Xp$, the \emph{Normalised \textsc{CF-Freq}} explanation, denoted with $\psivhat$, is defined as follows. 
    %\footnote{See \cref{footnote:expected_value,footnote:indicator}}
    $$\psivhat = \E_{\xp \in \Xp}[ \frac{\I[\x \neq \xp]}{\norm{ \I[\x \neq \xp] }} ]$$
\end{definition}

% We remark that this feature attribution is similar but not equal to \textsc{CF-Freq} in \citep{Sharma2020}: the difference being in the normalisation term $\norm{ \I[\x \neq \xp] }$ that has been added to enforce the \emph{efficiency} property.
% \footnote{i.e., the game-theoretic property of \emph{efficiency}, see \cref{sec:shap}}.

We note that such modifications of a solution concept to enforce the efficiency property is not foreign in the game theory literature, e.g., in the case of normalised Banzhaf values \citep{vandenBrink1998}.
% despite some critics raising the point that such modifications may give rise to a ``not very natural mathematical concept'' \citep{Dubey1979}. 
% We will discuss further this in \cref{sec:discuss-game-theory}.
% \itodo{Add that such `efficiencization' of a solution concept is not `foreign' to game-theory, e.g., normalised Banzhaf values \citep{vandenBrink1998}?}
%, that in this context means that $\sum_i \psi_i = 1$. We will better detail later in this section why this condition is important in proving the equivalency of the \textsc{Norm-CF-Freq} attribution and SHAP.
% \itodo{Toy example?}

%%%%%%%%%%%%%%%%%%%%%%%%%%%%%%%%%%%%%%%%%%%%%%%%%%%%%%%%%%%%%%%%%%%%%%%%%%%%%%%%%%%%%
\subsection{Granularity of Explanations}\label{sec:feature_removal_strategy}

% \itodo{I would move this subsection to the next section actually. I think it fits better.}
% \itodo{Visualizing impact of background distribution. \citep{}}

% As shown in \cref{fig:connection}, the first step to of the connection between SHAP and counterfactuals is to use counterfactuals as background distribution of for the generation of the explanations. This change to the feature removal strategy give rise to CF-SHAP \citep{Albini2022} (see \cref{sec:attributions} for more details).

% In \cref{sec:model_behaviour,sec:efficiency} we described how we can 

SHAP and counterfactual explanations differ in terms of the granularity of the explanations they can provide. 

On one hand, (CF-)SHAP \emph{assigns a score} to each feature based on their impact on the model output for each of the counterfactual example in its background distribution. 
On the other hand, a counterfactual \emph{does not} inherently provide any ``score'' describing the effect of each feature on the output of the model: it can only provide a binary assessment on the role of a feature in changing the prediction, i.e., ``is the feature modified in the counterfactual or not?''.

% In other words, a feature attributions generated solely from counterfactuals as \textsc{CF-Freq} is not aware of which among the features in \emph{each} counterfactual are:

Consider the following toy example where \textsc{Bin-CF-SHAP} and \textsc{Norm-CF-Freq} explanations, denoted with $\phivhat$ and $\psivhat$ respectively, are generated using a single counterfactual.
$$
\begin{matrix}
\x = \begin{pmatrix} 1 & 1 & 1 & 1 & 1 & 1 \end{pmatrix}^T \\[2pt]
\xp = \begin{pmatrix} 0 & 0 & 0 & 0 & 0 & 1 \end{pmatrix}^T \\[4pt]
F(\x) = \Ind\bigl[x_1 \land \bigl(x_2 \lor (x_3 \land x_4) \bigr)\bigr] \\[5pt]
\phivhat = \begin{pmatrix} 7/12 & 3/12 & 1/12 & 1/12 & \hphantom{1}0\hphantom{/} & 0 \end{pmatrix}^T \\[3pt]
\psivhat = \begin{pmatrix} 1/5\hphantom{1} & 1/5\hphantom{1} & 1/5\hphantom{1} & 1/5\hphantom{1} & 1/5 & 0 \end{pmatrix}^T  
\end{matrix}
$$

We note how \textbf{\textsc{Norm-CF-Freq} gives equal importance to all the features} while \textbf{\textsc{Bin-CF-SHAP} is able to differentiate} the features in the counterfactual that are:
\begin{enumerate}%[itemsep=0pt,topsep=0pt,label=(\Alph*)]
    % \item \emph{sufficient}\footnote{With an abuse of terminology} and \emph{necessary} to change the prediction, i.e., their removal will make the counterfactual invalid\footnote{A counterfactual is \emph{invalid} if it does not give rises to a different prediction.};
    % \item \emph{sufficient} to change the prediction, i.e., their removal alone will not make the counterfactual invalid, but it will in combination with the removal of other features;
    % \item neither \emph{sufficient} nor \emph{necessary}, i.e., their removal will have no impact on the counterfactual validity.\footnote{These could be features that have been perturbed to increase the plausibility of the counterfactuals, or that are simply spurious, e.g., due to lack of regularisation in the counterfactual generation.}
    \item \emph{necessary} for its validity ($x_1$): if the value of one of such features is replaced back with that in the query instance, the counterfactual is not valid anymore;
    % $$
    %     \forall S \subseteq \F, F\round{\xp} \neq F\round{\tuple{\x_{S \cup \{i\}}, \xp_{\bar{S} \setminus \{ i \}}}}
    % $$
    % \footnote{A counterfactual is said to be \emph{valid} if it gives rise to a different prediction, \emph{invalid} otherwise.}, 
    \item only part of a \emph{sufficient} set ($x_2, x_3$ and $x_4$)
    % \footnote{Here, we mean \emph{minimally} sufficient set, i.e., a set of feature perturbations making the counterfactual valid, and such that it does not contain any additional feature that could be removed. Note that there could be many of such sets for each counterfactual.} 
    for its validity: replacing the value of a sufficient feature with that in the query instance \emph{alone} will not invalidate the counterfactual, but it will when this is done in combination with the replacement of other features' values;
    % $$
    %     \exists S \subseteq \F, F\round{\xp} \neq F\round{\tuple{\x_{S \cup \{i\}}, \xp_{\bar{S} \setminus \{ i \}}}}
    % $$
    \item \emph{spurious} ($x_5$): replacing their values back to that in the query instance will not invalidate the counterfactual under any circumstance. These % features could be present in counterfactuals 
    could be features that have been perturbed solely to increase the plausibility of the counterfactual despite having no impact on the model prediction or even not being used by the model at all. % or that are simply spurious, e.g., 
\end{enumerate}

\textsc{Bin-CF-SHAP} gives the largest attribution to features falling into (A), smaller attributions to those falling into (B) and zero attribution to those falling into (C).

\EA{That the inability of \textsc{CF-Freq} explanations to differentiate between \emph{necessary}, \emph{sufficient} and \emph{spurious} features represents a key limitation with respect to Shapley-values based feature attributions. 
% This limitation arises from the fact that counterfactual explanations have been inherently designed with a different goal than feature attributions explanations and therefore they do not necessarily satisfy certain properties of 
This limitation will be made even more evident by the empirical results presented in \cref{sec:experiments}.
}

% \itodo{Wonder if it would be better to formally define each of this notions? And then refer to them accordingly, e.g., maximal sparse = only necessary features, weakly max sparse = necessary or sufficient.}

%%%%%%%%%%%%%%%%%%%%%%%%%%%%%%%%%%%%%%%%%%%%%%%%%%%%%%%%%%%%%%%%%%%%%%%%%%%%%%%%%%%%%%%%%%%%%%%%%%%%
%%%%%%%%%%%%%%%%%%%%%%%%%%%%%%%%%%%%%%%%%%%%%%%%%%%%%%%%%%%%%%%%%%%%%%%%%%%%%%%%%%%%%%%%%%%%%%%%%%%%
%%%%%%%%%%%%%%%%%%%%%%%%%%%%%%%%%%%%%%%%%%%%%%%%%%%%%%%%%%%%%%%%%%%%%%%%%%%%%%%%%%%%%%%%%%%%%%%%%%%%
\section{Connecting SHAP and Counterfactuals}\label{sec:connection}

In \cref{sec:shap_to_cfx} we discussed what are the differences that exists between feature attributions and counterfactual explanations. In particular, this discussion resulted in two explanation techniques:
\begin{itemize}%[itemsep=0pt,topsep=0pt,leftmargin=16pt]
    \item on the feature attributions side, \textsc{Bin-CF-SHAP} (\cref{def:binary-cf-shap}), a variant of CF-SHAP that queries the model using only the (binary) \emph{decision function};
    \item on the counterfactuals side, \textsc{Norm-CF-Freq} (\cref{def:normalised_cf_freq}), an \emph{efficient} variant of \textsc{CF-Freq}.% with an additional normalisation term.
\end{itemize}

In this section we will present the main results of this paper, proving that --- after imposing some conditions on the counterfactuals --- \textsc{Bin-CF-SHAP} and \textsc{Norm-CF-Freq} are, in fact, the same explanation. 
% In this section, the main results of this paper, showing a clear connection between feature attributions and counterfactual explanations. 
% We now prove that by enforcing additional conditions on counterfactuals such that all the features fall in the same category, we can prove that

We recall from \cref{sec:feature_removal_strategy} that \textsc{Bin-CF-SHAP} explanations provide a more fine grained account of the contributions of the features in counterfactuals to the output of the model than \textsc{Norm-CF-Freq} \EA{explanations}.
% Firstly, while \textsc{Norm-CF-Freq} gives the same attribution to spurious and non-spurious features, \textsc{Bin-CF-SHAP} give them $0$ attribution. 
% Therefore, the first step towards finding an equivalency is to enforce that none of the features in the counterfactuals are \emph{spurious} because, while \textsc{Norm-CF-Freq} gives them non-zero attribution, \textsc{Bin-CF-SHAP} always gives them $0$ attribution. This requirement seems to agree with the general requirement of the sparsity that discussed is in the literature, we will discuss this further in \cref{sec:sparsity}. \itodo{This paragraph can be removed.}
Therefore, towards finding an equivalency relationship between the two approaches, 
% Secondly, 
we must add additional constraints on counterfactuals such that \textsc{Bin-CF-SHAP} gives to all % remaining 
features in the counterfactual an equal attribution, similarly to what \textsc{Norm-CF-Freq} does.
% A straight-forward way is to force \textsc{Bin-CF-SHAP} to make all the features fall into (A). This can be done by enforcing an additional property on counterfactuals that we call \emph{maximal sparsity}.

We pose that this can be done by enforcing an additional property on counterfactuals called \textbf{\emph{maximal sparsity}}. % As the name suggest, m
Maximal sparsity requires a counterfactual to have the least number of changes (with respect to the query instance) for it to be valid or, in other words, it requires all the features in the counterfactual to be \emph{necessary}.

\begin{definition}[Maximal Sparsity]\label{def:maximally_sparse}
    A \textit{counterfactual} $\xp$ for a \textit{query instance} $\x$ is \emph{maximally sparse} iff:
    % each of the features in $\xp$ that are different from those in $\x$ is necessary for $\xp$ to be a counterfactual for $\x$, i.e., formally, iff:
    $$
        F(\xp) \neq F\round{\tuple{\x_{S}, \xp_{\bar{S}}}} \quad \forall S \subseteq C : S \neq \emptyset
    $$
    where $C$ is the set of features in $\xp$ that are different from those in $\x$, i.e., $C = \{ i \in \F : x_i \neq x_i' \}$.
\end{definition}

Note that it is \emph{always} possible to generate a maximally sparse counterfactual from \emph{any} counterfactual \EA{(\emph{independently} of the counterfactual generation technique)} by selecting a (proper or improper) subset of the features in the counterfactual. We denote the set of such subsets of $\F$ with $\MS(\F)$.
For example, in the running example, 2 such subsets exist:
$$
% \begin{matrix}
    \MS(\F) = \{ \{ 1, 2 \}, \{ 1, 3, 4 \} \} . %\\[2pt]
    % \xp_A = \begin{pmatrix} 0 & 0 & 1 & 1 & 1 & 1 \end{pmatrix}^T \\[1pt]
    % \xp_B = \begin{pmatrix} 0 & 1 & 0 & 0 & 1 & 1 \end{pmatrix}^T
% \end{matrix}
$$
% \itodo{Method-agnostic in term of counterfactual generation.}
% Concretely, we start with \cref{theo:connection} that proves that \textbf{\textsc{Bin-CF-SHAP} and \textsc{Norm-CF-Freq} are, in fact, the same explanation}.

We now prove that maximal sparsity is indeed sufficient for the equivalency of \textsc{Bin-CF-SHAP} and \textsc{Norm-CF-Freq}.

\newcommand{\shapcfxconnectiontheoremtext}{
    Given a query instance $\x$, a set of counterfactuals $\Xp$, the \textsc{Bin-CF-SHAP} values $\phivhat$ and the \textsc{Norm-CF-Freq} explanation $\psivhat$ with respect to $\Xp$:
    $$
        \Xp \text{ are \textsc{maximally sparse}} \quad\Rightarrow\quad \phivhat =  \psivhat .
    $$
}
\begin{theorem}
    \label{theo:connection}
    \shapcfxconnectiontheoremtext
    \begin{proof}
        See \cref{appendix:proofs}. %\itodo{I wonder if we should have a sketch?}
    \end{proof}
\end{theorem}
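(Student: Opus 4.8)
The plan is to exploit the linearity of the Shapley value operator in the characteristic function, which reduces the whole statement to the case of a single counterfactual. Since the \textsc{Bin-CF-SHAP} characteristic function (\cref{def:binary-cf-shap}) is an average $v(S) = \E_{\xp \sim \Xp}[F(\tuple{\x_S, \xp_{\F \setminus S}})]$ over the counterfactuals in $\Xp$, and the Shapley value of each player is a \emph{fixed} linear combination of the marginal contributions $v(S \cup \{i\}) - v(S)$, the map $v \mapsto \phivhat$ commutes with the expectation over $\xp$. Hence it suffices to prove the claim for a single counterfactual and then average: writing $v_{\xp}(S) = F(\tuple{\x_S, \xp_{\F \setminus S}})$ for the per-counterfactual game, I will show that the Shapley values of $v_{\xp}$ are exactly $\I[\x \neq \xp]/\norm{\I[\x \neq \xp]}$, after which $\phivhat = \E_{\xp \sim \Xp}\left[ \I[\x \neq \xp]/\norm{\I[\x \neq \xp]} \right] = \psivhat$ follows at once.

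Fix a maximally sparse counterfactual $\xp$ and let $\C = \{i \in \F : x_i \neq x_i'\}$ with $c = \card{\C}$. The first key step is to observe that $v_{\xp}(S)$ depends only on $S \cap \C$: for any $i \notin \C$ we have $x_i = x_i'$, so toggling whether $i \in S$ leaves the input $\tuple{\x_S, \xp_{\F \setminus S}}$ coordinate-wise unchanged, giving $v_{\xp}(S) = F(\tuple{\x_{S \cap \C}, \xp_{\F \setminus (S \cap \C)}})$. When $S \cap \C = \emptyset$ this point is $\xp$ itself, so $v_{\xp}(S) = F(\xp) = 0$ by validity ($F(\xp) \neq F(\x) = 1$). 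When $S \cap \C \neq \emptyset$, the set $S \cap \C$ is a nonempty subset of $\C$, so maximal sparsity (\cref{def:maximally_sparse}) yields $F(\tuple{\x_{S \cap \C}, \xp_{\F \setminus (S \cap \C)}}) \neq F(\xp)$, i.e. $v_{\xp}(S) = 1$. Therefore $v_{\xp}(S) = \I[S \cap \C \neq \emptyset]$.

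This pins $v_{\xp}$ down to a symmetric game that can be dispatched purely through the Shapley axioms, avoiding any direct combinatorial sum. Every $i \notin \C$ is a null player, since adding it to any $S$ cannot create an intersection with $\C$, so by the null-player property its attribution is $0$. Any two $i, j \in \C$ are symmetric, because for every $S$ containing neither we have $v_{\xp}(S \cup \{i\}) = 1 = v_{\xp}(S \cup \{j\})$, so by symmetry they receive equal attribution. Finally, efficiency forces the attributions to sum to $v_{\xp}(\F) = 1$. Splitting the total $1$ equally among the $c$ symmetric non-null players gives $1/c$ to each $i \in \C$ and $0$ otherwise, which is precisely $\I[\x \neq \xp]/\norm{\I[\x \neq \xp]}$, reading $\norm{\cdot}$ as the count of nonzero entries $c$. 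Averaging over $\Xp$ as in the first paragraph closes the argument.

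The main obstacle, and the only place the hypothesis is genuinely used, is the reduction $v_{\xp}(S) = \I[S \cap \C \neq \emptyset]$: one must check that maximal sparsity applies to \emph{every} nonempty $S \cap \C$ rather than merely to singletons, and that features outside $\C$ are truly irrelevant to $v_{\xp}$. Once the game is reduced to this symmetric form, the Shapley values are forced by the axioms, so no delicate identity (such as evaluating $\sum_{s} \binom{\m-c}{s} w(s) = 1/c$ from \cref{def:shapley}) ever has to be computed by hand.
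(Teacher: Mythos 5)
Your proof is correct and follows essentially the same route as the paper's: decompose $\phivhat$ into per-counterfactual single-reference games (the paper cites \citet{Merrick2020}, you invoke linearity of the Shapley operator --- the same fact), eliminate the unchanged features via the null-player property, use maximal sparsity to make the changed features pairwise symmetric, and let efficiency force the common attribution $1/\card{\C}$. Your intermediate closed form $v_{\xp}(S) = \I[S \cap \C \neq \emptyset]$ is a cleaner packaging of what the paper establishes directly inside the axiom applications (and it makes the paper's separate case split between $\card{\C}=1$ and $\card{\C}>1$ unnecessary), but the underlying argument is identical.
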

\newtheorem*{shapcfxconnectiontheorem}{\textbf{\textup{\cref{theo:connection}}}}

Maximal sparsity allows us to draw an equivalency relationship between the two explanation types. However, while maximal sparsity of counterfactuals is easy to define, it is, nonetheless, a strong requirement. An obvious question that arises is if there exists a weaker requirement allowing to draw the same equivalency relationship.
% The alternative solution to draw an equivalency of \textsc{Bin-CF-SHAP} and \textsc{Norm-CF-Freq} is to also allow features to fall into (B). But this is not enough because: while necessary features for the counterfactual always get the same (largest) attribution, features that are part of a sufficient set get a smaller --- but potentially different --- SHAP attribution depending on the model.
We now introduce few notions that allows us to describe a weaker, yet more complex requirement on counterfactuals, that allows us to draw the same equivalency relationship.

% \begin{definition}
% A \textit{counterfactual} $\xp$ for a \textit{query instance} $\x$ is \emph{Shapley equal} iff the set of modified features $C = \{ i \in \F : x_i \neq x_i' \}$ is such that:
% \begin{enumerate}[itemsep=2pt,topsep=3pt]
% \item $C$ does not contain any spurious feature, and
% \item $\phihat_i = \phihat_j\,,\,\, \forall i,j \in C$ where $\phivhat$ is the \textsc{Bin-CF-SHAP} explanation computed using $\xp$ as the sole point in the background distribution .
% \end{enumerate}
% \end{definition}

\begin{definition}[Weak Maximal Sparsity]\label{def:weak_maximal_sparsity}
    A \textit{counterfactual} $\xp$ for a \textit{query instance} $\x$ is \emph{weakly maximally sparse} iff $\forall i \in C$:
    $$
        \begin{matrix}
        \exists S \subseteq C \setminus \{i \} \,:\,\,
        F\round{\tuple{\x_{S\cup\{i\}}, \xp_{\bar{S} \setminus \{i\}}}} \neq F\round{\xp}\\[6pt]
        % \nexists S' \supset S \,:\,\, F\round{\tuple{\x_{S'\cup\{i\}, \xp_{\bar{S'}\setminus \{i\}}}}} \neq F\round{\xp}\\[6pt]
        \text{where } C = \{ i \in \F : x_i \neq x_i' \} .
        \end{matrix}
    $$
\end{definition}
Intuitively, \emph{weak maximal sparsity} requires that counterfactuals do not contain spurious features. We note that it is \emph{always} possible to generate a weakly maximally sparse counterfactual from any counterfactual \EA{(\emph{independently} of the counterfactual generation technique)} by selecting a (proper or improper) subset of the features in the counterfactual. We denote the set of such subsets with $\WMS(\F)$. For the running example, three such subsets exist:
$$
    % \begin{matrix}
    \WMS(\F) = \{ \{1, 2\}, \{ 1, 3, 4\}, \{ 1, 2, 3, 4 \} \} .\\[2pt]
    % \WMS^M(\F)  = \{ \{ 1, 2, 3, 4 \} \}
    % \end{matrix}
$$

\begin{definition}[Equal Maximal Sparsity]\label{def:equal_maximal_sparsity}
 A \textit{counterfactual} $\xp$ for a \textit{query instance} $\x$ is \emph{equally maximally sparse} iff:
 $$ 
 % \begin{matrix}
 % |C| = 1 \quad \lor \quad \forall i,j \in C \,,\,\, \xi(i) = \xi(j)\\[5pt]
 |C| = 1 \,\,\, \lor \,\,\, \forall i,j \in C ,
 \sum_{\substack{S \in \WMS(\F)\\ i \in S}} \frac{\xi(S)}{|S|} = 
 \sum_{\substack{S \in \WMS(\F)\\ j \in S}} \frac{\xi(S)}{|S|}\\[5pt]
 % \text{where } 
 % \end{matrix}
 $$
 where $C = \{ i \in \F : x_i \neq x_i' \}$ and $\xi : 2^{\F} \rightarrow \R$ is:
$$
% \begin{matrix}
\xi(S) = 
% \sum_{\substack{S \in \MS(\F)\\ i \in S}} \frac{1}{|S|} +
% \sum_{\substack{S \in \WMS(\F)\\ i \in S}} \frac{1 - |\WMS^M(S)|}{|S|}
% \sum_{\substack{S \in \MS(\F)\\ i \in S}} \frac{\delta_S}{|S|}\\[5pt]
% \delta_S = 
\begin{cases}
    1                                   &  \text{ if } S \in \MS(\F)\\
    1 - \sum_{T \in \WMS(S)} \delta_T   & \text{ otherwise}
\end{cases}
% \end{matrix}
$$

\end{definition}

We note that equal maximally sparsity is not a simple condition to enforce on counterfactuals. In fact, requiring a counterfactual to be equal maximally sparse is, in practice, equivalent to requiring that the \textsc{Bin-CF-SHAP} values with respect to such single counterfactual must all be equal.

% We note that maximal sparsity implies equal maximal sparsity, that, in turn, implies weak maximal sparsity (see \cref{prop:maximal_to_equal,prop:equal_to_weak,prop:maximal_to_weak} in \cref{appendix:proofs}).

\newcommand{\shapcfxconnectionbalancedtext}{
    Given a query instance $\x$, and a counterfactual $\xp$, the \textsc{Bin-CF-SHAP} values $\phivhat$ and the \textsc{Norm-CF-Freq} explanation $\psivhat$ with respect to $\xp$:
    $$
        \xp \text{ is \textsc{equally maximally sparse}} \quad \Leftrightarrow\quad \phivhat =  \psivhat .
    $$
}
\begin{theorem}
    \label{theo:connection_balance_maximal_sparsity}
    \shapcfxconnectionbalancedtext
    \begin{proof}
    See \cref{appendix:proofs}.
    \end{proof}
\end{theorem}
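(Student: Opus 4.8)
The plan is to reduce the equivalence to a closed form for the \textsc{Bin-CF-SHAP} values and then recognise that this closed form is exactly the quantity appearing in \cref{def:equal_maximal_sparsity}. Fix the single counterfactual $\xp$, let $C = \{ i \in \F : x_i \neq x_i' \}$ be its changed features, and let $v(S) = F(\tuple{\x_S, \xp_{\bar S}})$ be the binary characteristic function of \textsc{Bin-CF-SHAP}, so $v(\emptyset) = F(\xp) = 0$ and $v(\F) = F(\x) = 1$. First I would dispose of the features outside $C$: for $i \notin C$ the point $\tuple{\x_S, \xp_{\bar S}}$ is unaffected by adding or removing $i$ from $S$, so $i$ is a null player, $\phihat_i = 0$, matching $\psihat_i = 0$; since null players may be deleted without changing the Shapley values of the others, it suffices to work with $v$ restricted to $2^{C}$. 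On the counterfactual side, for a single $\xp$ we have $\psihat_i = 1/|C|$ for every $i \in C$. Finally, by efficiency (\cref{prop:efficiency}) the values $\phihat_i$, $i \in C$, sum to $1$, so $\phivhat = \psivhat$ holds \emph{iff} the $\phihat_i$ with $i \in C$ are all equal (each then forced to be $1/|C|$). The $|C| = 1$ disjunct of \cref{def:equal_maximal_sparsity} is immediate, so henceforth I assume $|C| \ge 2$ and aim to characterise when the $\phihat_i$ coincide.

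The engine of the proof is the Harsanyi (unanimity) decomposition. I would write $v|_{2^{C}} = \sum_{T \subseteq C} c_T\, u_T$, where $u_T$ is the unanimity game ($u_T(S) = 1$ iff $T \subseteq S$) and $c_T = \sum_{R \subseteq T} (-1)^{|T|-|R|} v(R)$ is the Harsanyi dividend of $T$. Because the Shapley value of $u_T$ distributes $1/|T|$ to each member of $T$ and $0$ elsewhere, linearity of the Shapley value gives the closed form $\phihat_i = \sum_{T \subseteq C,\, i \in T} c_T / |T|$. The remaining task is to show that the dividends $c_T$ are precisely the coefficients $\xi(T)$ supported on $\WMS(\F)$, i.e. that $\phihat_i = \sum_{T \in \WMS(\F),\, i \in T} \xi(T)/|T|$.

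To identify the dividends I would argue in two steps. For the support, I would show $c_T = 0$ whenever $T \notin \WMS(\F)$: if $T$ contains a feature $i$ whose marginal contribution $v(S \cup \{i\}) - v(S)$ vanishes for every $S \subseteq T \setminus \{i\}$ (exactly the failure of weak maximal sparsity localised to $T$, \cref{def:weak_maximal_sparsity}), then pairing each $R \subseteq T$ with $i \in R$ against $R \setminus \{i\}$ collapses the alternating sum to $0$. For the values on $\WMS(\F)$, I would use the defining identity $v(T) = \sum_{R \subseteq T} c_R$ together with the fact that every $T \in \WMS(\F)$ is winning ($v(T) = 1$); combined with the vanishing just established this yields $c_T = 1 - \sum_{R \in \WMS(\F),\, R \subsetneq T} c_R$, which is exactly the recursion defining $\xi$ in \cref{def:equal_maximal_sparsity} (with $\delta_R = \xi(R) = c_R$), the base case $c_T = 1$ for $T \in \MS(\F)$ holding because a minimal winning coalition has no winning proper subset. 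Substituting $c_T = \xi(T)$ into the closed form gives $\phihat_i = \sum_{T \in \WMS(\F),\, i \in T} \xi(T)/|T|$, so ``all $\phihat_i$ equal'' is \emph{literally} the condition of \cref{def:equal_maximal_sparsity}; together with the reduction of the first paragraph this proves both directions of the equivalence.

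The main obstacle I anticipate is the combinatorial bookkeeping of the third paragraph rather than the Shapley machinery: proving that the Harsanyi dividends vanish off $\WMS(\F)$ and match the recursively defined $\xi$ for a \emph{possibly non-monotone} binary game $v$. In particular the claim $v(T) = 1$ for $T \in \WMS(\F)$ must be justified from the definitions of $\MS(\F)$ and $\WMS(\F)$ (and from how a weakly maximally sparse counterfactual is assembled out of minimal ones) rather than inferred from monotonicity of $F$; aligning the alternating-sum pairing and this winning-ness claim precisely with \cref{def:weak_maximal_sparsity,def:equal_maximal_sparsity} is where the care is needed.
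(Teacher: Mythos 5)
You take essentially the same route as the paper's appendix proof: features outside $C$ are null players, efficiency reduces $\phivhat = \psivhat$ to ``the $\phihat_i$ with $i \in C$ are all equal,'' and the heart of the argument is the Harsanyi/unanimity decomposition, with the dividends shown to equal $1$ on $\MS(\F)$, to vanish off $\WMS(\F)$, and to obey the recursion defining $\xi$ in \cref{def:equal_maximal_sparsity}. The paper differs only in bookkeeping: it cites a remark of Dehez (coalitions containing a null player have zero dividend) where you give the direct alternating-sum pairing, and it leaves the identification of $\xi$ with the dividends largely implicit where you derive the recursion explicitly.

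There is, however, a genuine gap at exactly the step you flagged, and it is not fixed by more careful combinatorics: the claims of your third paragraph are false for the game you fixed. You set $v(S) = F\round{\tuple{\x_S, \xp_{\bar S}}}$, so a coalition consists of features held at their \emph{query} values, whereas $\MS(\F)$ and $\WMS(\F)$ index sets of \emph{flipped} features. Concretely, take $F(\x) = \Ind[x_1 \land x_2]$, $\x = (1,1)$, $\xp = (0,0)$: then $\{1\}$ and $\{2\}$ both lie in $\MS(\F)$, yet $v(\{1\}) = v(\{2\}) = 0$, and the dividends of $v$ are $c_{\{1\}} = c_{\{2\}} = 0$, $c_{\{1,2\}} = 1$, which are neither $1$ on $\MS(\F)$ nor equal to $\xi$ (here $\xi(\{1\}) = \xi(\{2\}) = 1$ and $\xi(\{1,2\}) = -1$). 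The repair is to run the entire dividend analysis on the \emph{dual} game $v^{*}(T) = v(\F) - v(\F \setminus T) = 1 - F\round{\tuple{\x_{\F \setminus T}, \xp_{T}}}$, in which $T$ ``wins'' precisely when flipping $T$ yields a valid counterfactual---there ``$v^{*}(T)=1$ and $v^{*}(R)=0$ for all $R \subsetneq T$'' \emph{is} the statement that $T \in \MS(\F)$---and then to invoke self-duality of the Shapley value, $\phi_i(v) = \phi_i(v^{*})$, to carry the closed form back to \textsc{Bin-CF-SHAP}. This duality step is absent from your outline; to be fair, it is only implicit in the paper as well, whose assertions ``$v(S)=1$, $v(T)=0$ for $T \subset S$'' are likewise statements about flipped coalitions. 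Separately, your support claim ``$c_T = 0$ whenever $T \notin \WMS(\F)$'' equates non-membership in $\WMS(\F)$ with the existence of a feature whose marginal contributions all vanish; this covers coalitions containing a spurious feature, but not coalitions $T$ for which flipping $T$ simply fails to yield a valid counterfactual while no feature is spurious (e.g.\ $v^{*}(\{1\}) = v^{*}(\{2\}) = 1$ but $v^{*}(\{1,2\}) = 0$), where the dividend need not vanish. The paper's own case analysis for this step has the same hole, so you are in good company, but a complete proof must handle that case too.
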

\newtheorem*{shapcfxconnectionbalanced}{\textbf{\textup{\cref{theo:connection_balance_maximal_sparsity}}}}

Since it can be proved that maximal sparsity implies equal maximal sparsity (see \cref{prop:maximal_to_equal}), then \cref{theo:connection_balance_maximal_sparsity_corr} follows \EA{from \cref{theo:connection,theo:connection_balance_maximal_sparsity} (see \cref{appendix:proofs} for more details)}.

\newcommand{\shapcfxconnectionbalancedcorrtext}{
    Given a query instance $\x$, a set of counterfactuals $\Xp$, the \textsc{Bin-CF-SHAP} values $\phivhat$ and the \textsc{Norm-CF-Freq} explanation $\psivhat$ with respect to $\Xp$:
    $$
        \Xp \text{ are \textsc{equally maximally sparse}} \quad \Rightarrow\quad \phivhat =  \psivhat .
    $$
}
\begin{corollary}
    \label{theo:connection_balance_maximal_sparsity_corr}
    \shapcfxconnectionbalancedcorrtext
\end{corollary}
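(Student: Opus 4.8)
The plan is to reduce this set-level claim to the single-counterfactual equivalence of \cref{theo:connection_balance_maximal_sparsity}, using the fact that both $\phivhat$ and $\psivhat$ are \emph{averages} over $\Xp$ of their single-counterfactual counterparts. For $\psivhat$ this is immediate from \cref{def:normalised_cf_freq}, which already writes it as $\E_{\xp \in \Xp}[\cdots]$. For $\phivhat$ I would first note that the characteristic function of \cref{def:binary-cf-shap} is itself an expectation, $v(S) = \E_{\xp \sim \Xp}[v_{\xp}(S)]$ where $v_{\xp}(S) = F(\tuple{\x_{S}, \xp_{\F \setminus S}})$, and then invoke the linearity of the Shapley operator in its characteristic function. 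Because the Shapley value of a convex combination of games equals the same convex combination of the individual Shapley values, this gives $\phivhat = \E_{\xp \sim \Xp}[\phivhat^{(\xp)}]$, where $\phivhat^{(\xp)}$ is the \textsc{Bin-CF-SHAP} vector computed against the single counterfactual $\xp$.

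With both objects expressed as averages of per-counterfactual vectors, the remaining argument is short. I would read the hypothesis ``$\Xp$ are equally maximally sparse'' as the assertion that every $\xp \in \Xp$ is equally maximally sparse in the sense of \cref{def:equal_maximal_sparsity}, and then apply the forward ($\Rightarrow$) direction of \cref{theo:connection_balance_maximal_sparsity} to each $\xp$ separately to obtain $\phivhat^{(\xp)} = \psivhat^{(\xp)}$ for every element of the set. Averaging both sides over $\xp \sim \Xp$ and substituting the two decompositions above then yields $\phivhat = \psivhat$, which is the claim; this aggregation step is exactly the one used to pass from a single counterfactual to a set in \cref{theo:connection}. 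Finally, I would record, via \cref{prop:maximal_to_equal} applied element-wise, that any maximally sparse set is in particular equally maximally sparse, so that \cref{theo:connection} is recovered as a special case and the corollary is a strict generalisation of it.

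The main obstacle is not analytic but consists of two bookkeeping points on which the averaging argument rests. The first is justifying that \textsc{Bin-CF-SHAP} is linear across background points, i.e.\ that the Shapley operator commutes with the expectation over $\Xp$; this is the only place where the game-theoretic structure genuinely enters, and it is what distinguishes the corollary from the purely definitional averaging enjoyed by \textsc{Norm-CF-Freq}. The second is recognising that only the $\Rightarrow$ direction survives aggregation: whereas \cref{theo:connection_balance_maximal_sparsity} is an equivalence for a single counterfactual, its converse cannot be lifted to a set, since non-equal per-counterfactual contributions could cancel in the average and produce $\phivhat = \psivhat$ without each $\xp$ being equally maximally sparse. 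This is precisely why the corollary is stated as an implication only, and I would take care never to invoke the reverse direction of \cref{theo:connection_balance_maximal_sparsity} in the aggregation.
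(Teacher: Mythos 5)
Your proposal is correct and follows essentially the same route as the paper: the paper's appendix dismisses the corollary as following ``trivially'' from \cref{theo:connection_balance_maximal_sparsity}, and the implicit content of that triviality is exactly your argument --- the decomposition of \textsc{Bin-CF-SHAP} into single-reference games via linearity of Shapley values (the same decomposition, citing \citet{Merrick2020}, that opens the paper's proof of \cref{theo:connection}), the element-wise application of the forward direction of \cref{theo:connection_balance_maximal_sparsity}, and averaging. Your closing observations --- that only the $\Rightarrow$ direction survives aggregation, and that \cref{prop:maximal_to_equal} makes \cref{theo:connection} a special case --- are both correct and consistent with how the paper frames the corollary in \cref{sec:connection}.
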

\newtheorem*{shapcfxconnectionbalancedcorr}{\textbf{\textup{\cref{theo:connection_balance_maximal_sparsity_corr}}}}

% \begin{proof}
% See \cref{appendix:proofs}.
% \end{proof}

% \begin{restatable}{theorem}{shapcfxconnectionbalanced}\label{theo:connection_balance_maximal_sparsity}
% Given a query instance $\x$, a set of counterfactuals $\Xp$, the \textsc{Bin-CF-SHAP} values $\phivhat$ and the \textsc{Norm-CF-Freq} explanation $\psivhat$ with respect to $\Xp$:
% $$
%     \Xp \text{ are \textsc{equally maximally sparse}} \quad \Rightarrow\quad \phivhat =  \psivhat .
% $$
% \end{restatable}
% \begin{proof}
% See \cref{appendix:proofs}.
% \end{proof}

% The above results provide a weaker condition for the equivalency. 
% However, such a condition is not trivial to enforce without resorting to effectively imposing maximal sparsity. 
%\itodo{We will show this in the experiments.} 
% For this reason, in the remainder of the draft, we will focus on using maximal sparsity.

We \EA{note} that, by enforcing counterfactuals to be sparse and potentially less plausible, we follow the ``true to the model'' paradigm \cite{Chen2020,Janzing2020} discussed in \cref{sec:background}, wherein the goal is to understand the model reasoning and not the causal relationship between the features.

\section{Connecting other game-theoretic solution concepts and  counterfactuals}\label{sec:discuss-game-theory}

% We now discuss the fame theoretical implications of the results in \cref{sec:shap_to_cfx,sec:connection} with respect to the relevant literature.
% In the context of machine learning explainability, the concept of Shapley values, borrowed from game theory, has become one of the most popular explanation paradigm or, in the framework of removal-based explanations \citep{Covert2020}, summary technique (see \cref{sec:shap_to_cfx}).
% Let's start by recalling the single-reference games decomposition of SHAP values \citep{Merrick2020} for \textsc{Bin-CF-SHAP}.

In this section, we discuss the effects of querying the model with the binary decision function and using maximally sparse counterfactuals on game-theoretic interpretations of the explanation --- \EA{as proposed in \cref{sec:shap_to_cfx}} --- and how this allows us to extend \cref{sec:connection} \EA{equivalency} results to more \EA{game-theoretic} solution concepts \EA{beyond Shapley values}.

In particular, we will focus our analysis on the single-reference games in which the explanation game of SHAP can be decomposed \citep{Merrick2020}. The characteristic function of such games for \textsc{Bin-CF-SHAP} is defined as follows:
% $$
% \phihat_i =  \E_{\xp \sim \Xp}[\phivhat^{\xp}_i]
% \spaceornewline{\quad}
% \text{ where } \quad \phivhat^{\xp}_i = \sum_{S \subseteq \F \setminus \{ i \}} w(S) \left[ v_{\xp}(S \cup \{i \}) - v_{\xp}(S) \right]
% $$
\begin{equation}\tag{$\triangle$}\label{eq:single_reference_characteristics}
    % v(S) = \E_{\xp \sim \Xp}[v_{\xp}(S)] 
    % \text{ where } 
    v_{\xp}(S) = F\left(\tuple{\x_{S}, \xp_{\F \setminus S}}\right) \quad
\end{equation}

% This decomposition means that we can understand SHAP values as a solution concept of a randomised single-reference game, where, in each single-reference game, feature absence is simulated by replacing feature values with those of an individual background distribution point $\xp$.
% Following the same line of reasoning of \citep{Merrick2020} we now show how this decomposition can give clarity to the interpretation of feature importance explanations based on game-theoretical solution concepts.
% The use of the prediction (see \cref{sec:model_behaviour}) and the enforcement of maximal sparsity on the counterfactuals (see \cref{sec:feature_removal_strategy}) have an impact on the game-theoretic interpretation of the explanations and their properties.

%%%%%%%%%%%%%%%%%%%%%%%%%%%%%%%%%%%%%%%%%%%%%%%%%%%%%%%%%%%
\textbf{Voting games}.
% In particular, 
The use of the \emph{binary} {decision function} $F$ rather than the \emph{continuous} function $f$ means that the resulting single-reference games are more specifically \emph{voting games}: games where the characteristics functiond are \emph{voting rules} describing the winning and losing coalitions of players (features).
$$
v \, : \, 2^{\F} \rightarrow \left\{ 0 \text{ (\textsc{lose})}\, ,\, 1 \text{ (\textsc{win})} \right\} .
$$
% We can note from (\ref{eq:single_reference_characteristics}) that the characteristics function of the single-reference games is indeed that of a voting game: the \emph{decision function} $F$. The winning coalitions of features are those preserving the query instance prediction $F(\x) = 1$, while the losing ones will give rise to a counterfactual.
% In particular, we note that the characteristics function of the single-reference games is the (binary) \emph{decision function} $F$. 
The winning coalitions of features are those preserving the query instance prediction $F(\x) = 1$, while the losing ones will give rise to a counterfactual.

% This means that t
The resulting \textsc{Bin-CF-SHAP}
values are, more specifically then, the average Shapley-Shubik power index \citep{Shapley1954} over single-reference games. %, a specialisation of the notion of Shapley values to voting games. 
Concretely, the Shapley-Shubik power index measures the fraction of possible voting \emph{sequences} in which a player (feature) casts the deciding vote, that is, the vote that first guarantees passage (same prediction) or failure (counterfactual).

\textbf{Unanimity Games}.
The enforcement of maximal sparsity on counterfactuals in the single-reference voting games of \textsc{Bin-CF-SHAP} means that the counterfactual is valid iff all the modified features are present. Such games, where a group of players (features) have veto power and together they exert common dictatorship, are known more specifically as \emph{unanimity games}.

% In game theory, the set of unanimity games for all possible combinations of common dictators $C$ is a well-known basis for the space of all voting games with weights corresponding to the Harsanyi dividends \citep{Harsanyi1958}.
% % , i.e., $v_{\xp}(S) = \sum_{C \in 2^{\F} \setminus \emptyset \Delta_C u_C(S)}$. 
% It can be shown that when counterfactuals are used, the Harsanyi dividends of the resulting single-reference games will all be zero, with the one exception:
% % \begin{itemize}[itemsep=0pt, topsep=0pt]
% % \item $\forall C \in \MS(\F)\,,\,\, \Delta_C = 1$ ;
% % \item 
% $$
%      \Delta_S = \xi(S) \,,\,\,  \forall S \in \WMS(\F) .
% $$
% % \end{itemize}
% We note that the similarity between this expression and \cref{def:equal_maximal_sparsity} is no coincidence: this peculiarity of the games is used to prove \cref{theo:connection_balance_maximal_sparsity} (see \cref{appendix:proofs}).

%%%%%%%%%%%%%%%%%%%%%%%%%%%%%%%%%%%%%%%%%%%%%%%%%%%%%%%%%%%
\textbf{\EA{Generalisation} to solution concepts \EA{beyond Shapley values}}. 
% For exposition clarity, 
% we limited the considerations on the connection between counterfactuals and features attributions in \cref{sec:connection} to
Although the paper \EA{has so far focused on Shapley values because of its} popularity in the \EA{XAI and} machine learning communities, many other game-theoretic solution concepts exist.
% Such solution concepts are based on different sets of axioms than those of Shapley values and Shapley-Shubik power index.
% \footnote{For example, they can be axiomatised by efficiency, symmetry, and strong monotonicity properties. \citep{Young1985}}.
% In particular, in the context of voting games some notable examples include:
The result of \cref{theo:connection} can be extended to any solution concept that equally distributes payoffs to the common dictators of \emph{unanimity games}.
\EA{We now formally define this property of a solution concept that we call \emph{dictators-symmetry}.}
\EA{
\begin{definition}
    A solution concept $\gammav$ is \emph{dictators-symmetric} if for any \emph{unanimity game} with common dictators $C$ it holds that:
    \begin{itemize}
    \item $\Gamma_i = 1 / |C| \,,\,\, \forall i \in C$, and
    \item $\Gamma_i = 0 \,,\,\, \forall i \in \notC$.
    \end{itemize}
\end{definition}
}

\EA{We now formally prove with \cref{theo:extension-solution-concepts} \emph{maximal sparsity} of the counterfactuals is indeed a sufficient condition for the equivalency of \textsc{Norm-CF-Freq} and any explanation that is a \emph{dictators-symmetric} solution concept.}

% We note that the results proved in \cref{theo:connection} can be trivially extended to Banzhaf values, Deegan-Packel power index and Holler-Packel public good index in the context of the randomised single-reference game.
% More generally, such results can be extended to any solution concept satisfying symmetry, null-player properties and efficiency for any game with a characteristics function of the form of that in (\ref{eq:single_reference_characteristics}).
% We note that the results proved in \cref{theo:connection} can be trivially extended to Banzhaf values, Deegan-Packel power index and Holler-Packel public good index in the context of the randomised single-reference game.
% More generally, such results can be extended to any solution concept satisfying symmetry, null-player properties and efficiency for any game with a characteristics function of the form of that in (\ref{eq:single_reference_characteristics}).

\begin{corollary}\label{theo:extension-solution-concepts}
    Given a query instance $\x$, a set of counterfactuals $\Xp$ and the \textsc{Norm-CF-Freq} explanation $\psivhat$ with respect to $\Xp$. If $\gammav$ is the average of \EA{\emph{dictators-symmetric} solution concepts of the single-reference games} then:
    % , such that for any unanimity game with common dictators $C$, 
    % $\Gamma_i = 1 / |C| \,,\,\, \forall i \in C$ and
    % $\Gamma_i = 0 \,,\,\, \forall i \in \notC$, then
    \begin{equation*}  
        \Xp \text{ are \textsc{maximally sparse}}             
        \quad\Rightarrow\quad
        \gammav =  \psivhat.
    \end{equation*}
    % Given a query instance $\x$ and a set of maximally sparse counterfactuals, any solution concept such that for any single-reference game characterised by (\ref{eq:single_reference_characteristics}) satisfies%
    % \footnote{Note that we are not requiring the satisfaction of the properties in general, but only for all games with such characteristics.}
    % the null-player, symmetry and efficiency properties is equal to the normalised counterfactual feature importance for $\x$ on the computed on the same set of counterfactuals $\Xp$.
\end{corollary}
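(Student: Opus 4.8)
The plan is to reduce the statement to a single counterfactual and then exploit the decomposition of the explanation game into the single-reference games with characteristic functions $v_{\xp}$ of \cref{eq:single_reference_characteristics}. Both quantities to be compared are element-wise averages over $\xp \in \Xp$: $\gammav$ is the average of a dictators-symmetric solution concept applied to each $v_{\xp}$, while $\psivhat$ (\cref{def:normalised_cf_freq}) is the average of the normalised indicator vectors $\I[\x \neq \xp]/\norm{\I[\x \neq \xp]}$. Hence, by linearity of the element-wise average, it suffices to prove $\gammav = \psivhat$ contribution-by-contribution, i.e. to show that for a single maximally sparse counterfactual $\xp$ with modified-feature set $\C = \{ i \in \F : x_i \neq x_i' \}$ the two per-counterfactual vectors coincide.

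Fixing such an $\xp$, the first and central step is to show that \emph{maximal sparsity} (\cref{def:maximally_sparse}) forces the single-reference game $v_{\xp}$ to be a \emph{unanimity game} whose common dictators are exactly the modified features $\C$. The key observation is that $v_{\xp}(S)$ depends on $S$ only through $S \cap \C$, because the features outside $\C$ take the same value in $\x$ and in $\xp$ and therefore never change the argument of $F$; this alone makes every $i \in \notC$ a null player. For the features in $\C$ I would invoke \cref{def:maximally_sparse} directly: reverting any non-empty subset of the modified features back to their query values restores $F(\x)=1$, so the only coalition profile realising the counterfactual prediction $F(\xp)$ is the one keeping \emph{all} modified features at their counterfactual value. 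Reading $v_{\xp}$ with the coalition taken to be the features held at the counterfactual value (as in the discussion of \emph{unanimity games} above) identifies it with the unanimity game on $\C$, in which the players of $\C$ jointly exert dictatorship.

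I expect this identification to be the main obstacle: it requires matching the orientation of \cref{eq:single_reference_characteristics} to the unanimity-game convention and verifying that the common dictators are precisely $\C$ (no more, no fewer), which is exactly where maximal sparsity is used in full. Once $v_{\xp}$ is recognised as a unanimity game with common dictators $\C$, the definition of a \emph{dictators-symmetric} solution concept $\gammav$ applies verbatim, assigning $\Gamma_i = 1/|\C|$ for every $i \in \C$ and $\Gamma_i = 0$ for every $i \in \notC$. On the counterfactual side, maximal sparsity makes all $|\C|$ modified features necessary, so $\I[\x \neq \xp]$ is the indicator of $\C$ with $\norm{\I[\x \neq \xp]} = |\C|$, whence the per-counterfactual \textsc{Norm-CF-Freq} contribution is likewise $1/|\C|$ on $\C$ and $0$ elsewhere. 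The two per-counterfactual vectors therefore agree.

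Finally, averaging these equal contributions over all $\xp \in \Xp$ yields $\gammav = \psivhat$. I anticipate the whole argument to be a reorganisation of the proof of \cref{theo:connection}, with the explicit Shapley (Shapley–Shubik) computation replaced by the abstract dictators-symmetry axiom; consequently the only genuinely new content is the identification of each single-reference game as a unanimity game under maximal sparsity, after which the equality of the two $1/|\C|$-valued vectors and the averaging step are routine.
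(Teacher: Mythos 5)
Your proposal is correct and follows essentially the paper's own route: the paper proves this corollary by a one-line reduction to \cref{theo:connection}, relying on the observation already made in \cref{sec:discuss-game-theory} that under maximal sparsity each single-reference game becomes a unanimity game with common dictators $C$, so that a dictators-symmetric solution concept assigns $1/|C|$ on $C$ and $0$ elsewhere --- exactly the per-counterfactual \textsc{Norm-CF-Freq} vector. Your write-up supplies the details the paper leaves implicit (the orientation flip identifying $v_{\xp}$ with the unanimity game on $C$, the null-player status of features outside $C$, and the final averaging), but the substance of the argument is the same.
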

\begin{proof}
This trivially follows from \cref{theo:connection}. See \cref{appendix:proofs} for more details.% See \cref{appendix:proofs} for more details.
\end{proof}
Solution concepts to which \cref{theo:extension-solution-concepts} applies include:
\begin{itemize}%[itemsep=0pt,topsep=0pt,leftmargin=16pt]
    \item 
    the \emph{Banzhaf value}, %\citep{Lehrer1988}, 
    from the homonym Banzhaf power index \citep{Penrose1946, Banzhaf1964, Coleman1968}, measuring the fraction of the possible voting \emph{combinations}
    % \footnote{The order of the votes does not matter for the Banzhaf power index, while it does for the Shapley-Shubik power index (and Shapley values).}
    in which a player casts the deciding vote;
    % \footnote{It can be axiomatised by 2-efficiency, symmetry and strong monotonicity \citep{Nowak1997}.}
    % \item the normalised Banzhaf value, a variant of the Banzhaf value satisfying efficiency famously axiomatised in \citep{vandenBrink1998}.
    \item 
    the \emph{Deegan-Packel power index} \citep{Deegan1978, Deegan1983} that equally divides the power to the members of minimum winning coalitions;
    \item
    % and
    the \emph{Holler-Packel public good index} \citep{Holler1978, Holler1983} measuring the %(normalised)
    fraction of minimum winning coalitions of which a player is a part.
\end{itemize}

This generalisation of our results to more game-theoretic solution concepts is especially \emph{important} in light of the criticisms raised to % some of the 
Shapley values % axioms 
in game theory \citep{Osborne1994} as well as in XAI \citep{Kumar2020}. In particular, the use of alternative solution concepts has been recently investigated, e.g., Banzhaf values \citep{Covert2020,Karczmarz2022} and it has been identified as a possible way to better align explanations with their applications' goals, e.g., feature selection \citep{Fryer2021a} or time series \citep{Villani2022}.

\section{Experiments}\label{sec:experiments}
% \todox{
%     \textbf{Experiments to Run}
%     - Equal Maximal Sparsity?
%     % \textbf{Optional Additional Experiments}
%     % - Potentially design a set of metrics so that each of the methods perform best along one of these.
% }

In order to understand the effects of the changes to connect SHAP and counterfactuals presented in this paper, we run \EA{2 sets of experiments:}
\begin{enumerate}
\item \EA{We run an ablation study. We} measure the \textbf{explanations \EA{difference}} (for every step in \cref{fig:connection}).
\item We compute some popular \textbf{explanations metrics} that have been used to evaluate feature importance explanations in the literature.
\end{enumerate} 
% measuring the distance of the explanations, as well as metrics used in the relevant literature to evaluate counterfactual feature importance explanations. 

We run experiments on three publicly available datasets widely used in the XAI literature: 
% \textbf{GMSC} (Give Me Some Credit) \citep{GMSC}
\textbf{HELOC} \citep{FICOCommunity2019} (Home Equity Line Of Credit), 
\textbf{Lending Club} \citep{LendingClub2019} and 
\textbf{Adult} \citep{Adult1994} (1994 U.S. Census Income).
% \citep[Wines Quality,][]{Cortez2009}.
For each dataset, we trained a (non-linear) \emph{XGBoost} model \citep{XGBoost}. We chose to train booting ensemble of tree-based models because, in the context of classification for tabular data, they are deemed as state-of-the-art in terms of performance \citep{Shwartz-Ziv2022}. 
However, we emphasise that the \EA{theoretical results of this paper} are model agnostic, i.e. they do \emph{not} depend on the type of model. 
We refer to Appendix~\ref{appendix:setup} for more details on the experimental setup. 

\begin{figure*}[!ht]
    \centering
    \includegraphics[width=.99\textwidth]{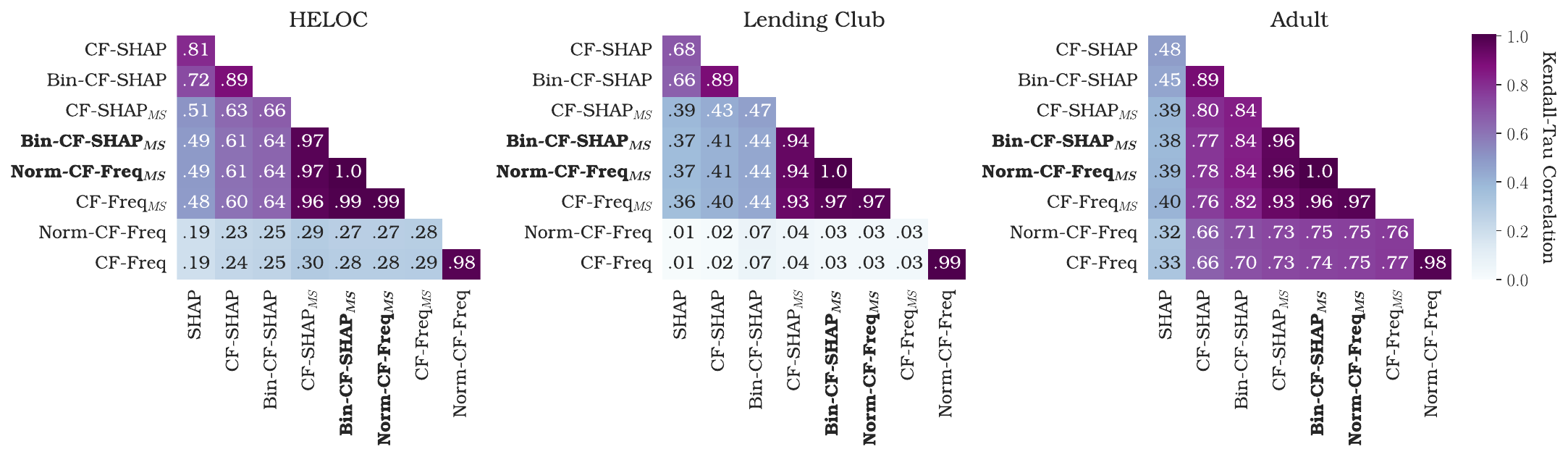}
    \caption{Average pair-wise Kendall-Tau Rank Correlation between explanations for different datasets. ($\MS$) Indicates explanations with maximally sparse counterfactuals.}
    \label{fig:difference}
\end{figure*}

We used TreeSHAP \citep{Lundberg2020}, KernelSHAP \citep{Lundberg2017} and CFSHAP \citep{Albini2022} and an in-house implementation of \textsc{CF-Freq} to generate the feature importance explanations. 
Similarly to \citet{Albini2022}, we used $K$-NN with $K=100$ and the Manhattan distance over the quantile space as distance metric to generate counterfactuals. 

% Similarly to \citep{Albini2022}, we choose $K$-NN as a technique rather than more complex counterfactual generation engines because (1) few counterfactual generation techniques are able to generate multiple counterfactuals and even fewer a diverse set thereof, especially in the context of (non-differentiable) decision tree-based models; (2) most importantly, the choice of $K$-NN as the technique for the generation of counterfactuals allows us to analyse the resulting feature importance explanations performance while separating it from that of the counterfactual generation engine.
% However, we remark that the results in \cref{sec:connection} hold independently of the algorithm that is used to generate (maximally sparse) counterfactuals.

We remark that the results presented in \EA{this paper (and in particular those in \cref{sec:connection,sec:discuss-game-theory}) hold \textbf{\emph{independently} of the algorithm that is used to generate (maximally sparse) counterfactuals}}. 
In fact, counterfactuals are only used to generate the background dataset for CF-Freq and SHAP-based feature importance.

As pointed out in \citet{Albini2022}, the choice of $K$-NN as the technique for the generation of counterfactuals \EA{in the context of the experiments} allows to analyse the resulting feature importance explanations performance separating it from the performance of the underlying counterfactual generation engine used to generate its background dataset.

To generate \emph{maximally sparse} counterfactuals we devised an exhaustive search algorithm that generates the closest maximally sparse counterfactual for each (non-maximally sparse) counterfactual passed to the feature importance explanations. We refer to \cref{appendix:maximal_sparse_cfx_method} for more details about the algorithm.

%%%%%%%%%%%%%%%%%%%%%%%%%%%%%%%%%%%%%%%%%%%%%%%%%%%%%%
 \textbf{Explanations Difference}. \EA{In order} to draw a connection between \text{CF-SHAP} and \textsc{CF-Freq}, in \cref{sec:shap_to_cfx,sec:connection}, we presented three changes to the existing explanations: 
 \begin{enumerate}%[itemsep=0pt]
\item querying the model using $F$; 
\item normalising \textsc{CF-Freq} explanation;
\item using maximally sparse counterfactuals.
 \end{enumerate}
 To understand the extent to which such changes impacted the explanations, we compute the pairwise Kendall-Tau rank correlation between the explanations for $1000$ examples.

%%%%%%%%%%%%%%%%%%%%%%%%%%%%%%%%%%%%%%%%%%%%%%%%%%%%%%
\textit{Results -} \cref{fig:difference} show the results.
We note that:
\begin{enumerate}[label=\textbf{(\Alph*)}] %,labelindent=0pt] %,itemsep=0pt,topsep=0pt,wide]
\item While the normalisation only slightly effects explanations ($\tau = 0.97\text{-}0.99$), imposing maximal sparsity on the counterfactuals always has a significant effect on the resulting explanations ($\tau = 0.03\text{-}0.84$). This is consistent with results in the literature showing the large effects of the baseline on the explanations \citep{Sturmfels2020}.
\item The use of maximally sparse counterfactuals causes greater changes in the explanations in the frequentist family ($\tau = 0.03\text{-}0.77$) compared to SHAP-based explanations ($0.43\text{-}0.84$). This is coherent with what we presented in \cref{sec:feature_removal_strategy}: CF-SHAP already distinguishes between necessary, sufficient and spurious features but \textsc{CF-Freq} does not. Hence, the use of maximally sparse counterfactuals will have a greater effect on \textsc{CF-Freq}.
\item Querying the model with the binary prediction gives rise to more similar explanations when maximally sparse counterfactuals are used ($\tau = 0.94\text{-}0.97$) than otherwise ($\tau = 0.89$). This is expected: when using maximally sparse counterfactuals, the removal of any feature will invalidate the counterfactual, and therefore greatly reduce the model output (at least below the decision threshold). Such difference in output will tend to be closer (compared to using non-maximally sparse counterfactuals) to that obtained when using the binary prediction (i.e., always $1$).
% that is more similar which contributes to the difference in the output getting closer to that in the binary prediction (i.e., $1$);
\end{enumerate}

\begin{figure*}
    \includegraphics[width=.99\textwidth]{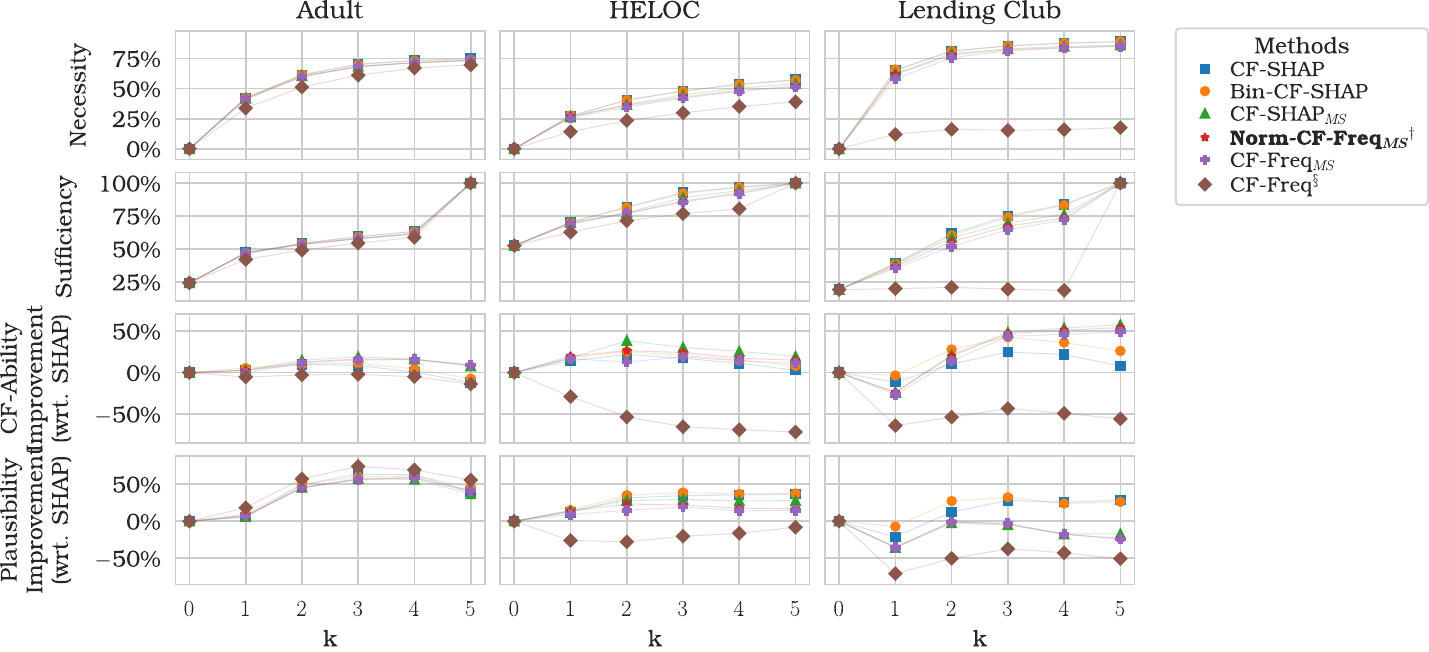}
    \caption{Metrics of explanations for different dataset (the higher the better). ($\MS$) Indicates explanations with maximally sparse counterfactuals; ($\dagger$) we report results only for \textsc{Norm-CF-Freq}$_{\MS}$ and not \textsc{Bin-CF-SHAP}$_{\MS}$ as they are equivalent (\cref{theo:connection});
    ($\mathsection$) we omit the results for \textsc{Norm-CF-Freq} as they are similar to those of \textsc{CF-Freq} (KS-test $D_{max} < 5\%$).
    }
    \label{fig:experiment_performance}
\end{figure*}

%%%%%%%%%%%%%%%%%%%%%%%%%%%%%%%%%%%%%%%%%%%%%%%%%%%%%%
\textbf{Explanations Metrics}. To understand how the changes proposed in \cref{sec:shap_to_cfx,sec:connection} impact \textsc{CF-Freq} and \textsc{CF-SHAP} explanations we evaluated 4 metrics: \citep{Mothilal2021, Albini2022}.
\begin{itemize}%[itemsep=0pt,itemsep=0pt,topsep=0pt,leftmargin=*]
\item \emph{necessity} which measures the percentage of valid counterfactuals that can be generated when allowing only the top-$k$ features (according to a feature importance) to be modified;
\item \emph{sufficiency} which measures the percentage of \emph{invalid} counterfactuals that can be generated when allowing all the features but the top-$k$ to be modified;
\item \emph{counterfactual-ability} improvement which measures how often the proximity of counterfactuals induced by the explanations is better than that of SHAP. Counterfactuals are induced from the explanations by changing the top-$k$ features in the most promising direction (according to counterfactuals); The proximity is measured in terms of \emph{total quantile shift} \citep{Ustun2019};
\item \emph{plausibility} improvement which measures how often the plausibility of the same induced counterfactuals is better than that of SHAP. Concretely, the plausibility is measured as the density of the region in which they lie based on the distance from their $5$ nearest neighbours.
\end{itemize}

According to the framework of actual causality \citep{Halpern2016,Petsiuk2018} the assumption underpinning the definition of necessity and sufficiency is that the model output should change more when features with higher importance are modified (necessity) and it should change less when they are kept at their current value (sufficiency). The assumption behind counterfactual-ability and plausibility \cite{Albini2022} is that an explanation should suggest a way to plausibly change the decision with minimal cost (higher counterfactual-ability).

% We checked the average proximity and the plausibility of induced counterfactuals from the feature importance explanation as defined in \citet{Albini2022}. The \emph{induced counterfactuals} are  most of the counterfactuals for the query instance. Their proximity, in terms of \emph{total quantile shift} \citep{Ustun2019}, and their plausibility, in terms of the density of the region in which they lie based on the distance of their nearest neighbours, are the computed. 

\textit{Results -} \cref{fig:experiment_performance} shows the results. We note that:
\begin{enumerate}[label=\textbf{(\Alph*)}] %,itemsep=0pt,topsep=0pt,wide]
\item SHAP-based techniques do not have considerably different performance along any of the metrics. The most important difference among the explanations of this class is in terms of plausibility in the Lending Club dataset. This is not surprising: as discussed in \cref{sec:background,sec:connection} and in \citep{Sharma2022}, the enforcement of sparsity may give rise to less plausible explanations;
% the necessity and sufficiency metrics .
\item frequentist-based techniques, on the contrary, have substantially different performance. This is consistent with what we discussed in \cref{sec:feature_removal_strategy,sec:connection}: \textsc{CF-Freq} explanations are unable to discriminate between features in a counterfactual that are spurious, necessary for its validity or just part of a sufficient set to make it valid.
\end{enumerate}

\EA{We emphasise that}, if the goal of the explanation is to {determine what features are important} to change the prediction such that they are ``true to the model'' \citep{Chen2020}, these results further warn against using ``frequentist'' feature importance approaches (as \textsc{CF-Freq}) without a sparsity constraint as they cannot differentiate between modified features in the counterfactuals that are \emph{really used by the model}, and those that are not, as we highlighted in \cref{sec:feature_removal_strategy}.

\section{Conclusion and Future Work}

In this paper, we connected game theory-based feature attributions \EA{including (but not limited to) SHAP values} and ``frequentist'' approaches to counterfactual feature importance \EA{using the  fraction of counterfactuals that have a modified value}. 

In particular, we proved that by applying specific operations, they can be shown to be equivalent. We discussed the effect of such an equivalency theoretically, and then showed empirically the impact on explanations. \EA{This analysis highlighted the limitations of  ``frequentist'' approaches as feature importance technique % in their na\"ive form 
and the important role of sparsity in counterfactual explanations}.

This paper provides avenues that could spur future research. 
% In light of the game-theoretical interpretation of explanations discussed in \cref{sec:discuss-game-theory},

Firstly, it would be interesting to analyse the connection between power indices using only minimum winning coalitions, e.g., Deegan-Packel's \citep{Deegan1978} and Holler-Packel's \citep{Holler1978}, and the property of maximal and weak maximal sparsity of counterfactuals proposed in this paper. More broadly, analysing if and how the game-theoretical interpretation of SHAP-based explanations aligns with the goal of the explanations would be of great interest.

Secondly, investigating how the results of this paper reflect on feature importance and counterfactual explanations that adopt a causal view of the world represents a future direction of great interest, e.g., \citep{Janzing2020,Galhotra2021,Aas2021,Frye2021,vonKugelgen2022,Russo2022}.

Lastly, while in this paper we limited our analysis of the connection between feature attributions and counterfactuals to the resulting feature importance explanations, it would be interesting to establish a more general connection between these two classes of approaches (e.g., between the SHAP values and distances between inputs and counterfactuals), as well as techniques falling under other XAI paradigms.

%%
%% The acknowledgments section is defined using the "acks" environment
%% (and NOT an unnumbered section). This ensures the proper
%% identification of the section in the article metadata, and the
%% consistent spelling of the heading.
\begin{acks}
    %auto-ignore
\textbf{Disclaimer}. 
This paper was prepared for informational purposes by
the Artificial Intelligence Research group of JPMorgan Chase \& Co. and its affiliates (``JP Morgan''),
and is not a product of the Research Department of JP Morgan.
JP Morgan makes no representation and warranty whatsoever and disclaims all liability,
for the completeness, accuracy or reliability of the information contained herein.
This document is not intended as investment research or investment advice, or a recommendation,
offer or solicitation for the purchase or sale of any security, financial instrument, financial product or service,
or to be used in any way for evaluating the merits of participating in any transaction,
and shall not constitute a solicitation under any jurisdiction or to any person,
if such solicitation under such jurisdiction or to such person would be unlawful.
\end{acks}

%%
%% The next two lines define the bibliography style to be used, and
%% the bibliography file.
\bibliographystyle{ACM-Reference-Format}
\bibliography{main}

%%% -*-BibTeX-*-
%%% Do NOT edit. File created by BibTeX with style
%%% ACM-Reference-Format-Journals [18-Jan-2012].

\begin{thebibliography}{95}

%%% ====================================================================
%%% NOTE TO THE USER: you can override these defaults by providing
%%% customized versions of any of these macros before the \bibliography
%%% command.  Each of them MUST provide its own final punctuation,
%%% except for \shownote{}, \showDOI{}, and \showURL{}.  The latter two
%%% do not use final punctuation, in order to avoid confusing it with
%%% the Web address.
%%%
%%% To suppress output of a particular field, define its macro to expand
%%% to an empty string, or better, \unskip, like this:
%%%
%%% \newcommand{\showDOI}[1]{\unskip}   % LaTeX syntax
%%%
%%% \def \showDOI #1{\unskip}           % plain TeX syntax
%%%
%%% ====================================================================

\ifx \showCODEN    \undefined \def \showCODEN     #1{\unskip}     \fi
\ifx \showDOI      \undefined \def \showDOI       #1{#1}\fi
\ifx \showISBNx    \undefined \def \showISBNx     #1{\unskip}     \fi
\ifx \showISBNxiii \undefined \def \showISBNxiii  #1{\unskip}     \fi
\ifx \showISSN     \undefined \def \showISSN      #1{\unskip}     \fi
\ifx \showLCCN     \undefined \def \showLCCN      #1{\unskip}     \fi
\ifx \shownote     \undefined \def \shownote      #1{#1}          \fi
\ifx \showarticletitle \undefined \def \showarticletitle #1{#1}   \fi
\ifx \showURL      \undefined \def \showURL       {\relax}        \fi
% The following commands are used for tagged output and should be
% invisible to TeX
\providecommand\bibfield[2]{#2}
\providecommand\bibinfo[2]{#2}
\providecommand\natexlab[1]{#1}
\providecommand\showeprint[2][]{arXiv:#2}

\bibitem[Aas et~al\mbox{.}(2021)]%
        {Aas2021}
\bibfield{author}{\bibinfo{person}{Kjersti Aas}, \bibinfo{person}{Martin
  Jullum}, {and} \bibinfo{person}{Anders L{\o}land}.}
  \bibinfo{year}{2021}\natexlab{}.
\newblock \showarticletitle{Explaining Individual Predictions When Features Are
  Dependent: {{More}} Accurate Approximations to {{Shapley}} Values}.
\newblock \bibinfo{journal}{\emph{Artificial Intelligence}}
  \bibinfo{volume}{298} (\bibinfo{year}{2021}), \bibinfo{pages}{103502}.
\newblock


\bibitem[Albini et~al\mbox{.}(2022)]%
        {Albini2022}
\bibfield{author}{\bibinfo{person}{Emanuele Albini}, \bibinfo{person}{Jason
  Long}, \bibinfo{person}{Danial Dervovic}, {and} \bibinfo{person}{Daniele
  Magazzeni}.} \bibinfo{year}{2022}\natexlab{}.
\newblock \showarticletitle{Counterfactual {{Shapley Additive Explanations}}}.
  In \bibinfo{booktitle}{\emph{Proceedings of the 2022 {{ACM Conference}} on
  {{Fairness}}, {{Accountability}}, and {{Transparency}}}}
  \emph{(\bibinfo{series}{{{FAccT}} '22})}. \bibinfo{publisher}{{Association
  for Computing Machinery}}, \bibinfo{pages}{1054--1070}.
\newblock


\bibitem[Albini et~al\mbox{.}(2020)]%
        {Albini2020}
\bibfield{author}{\bibinfo{person}{Emanuele Albini}, \bibinfo{person}{Antonio
  Rago}, \bibinfo{person}{Pietro Baroni}, {and} \bibinfo{person}{Francesca
  Toni}.} \bibinfo{year}{2020}\natexlab{}.
\newblock \showarticletitle{Relation-{{Based Counterfactual Explanations}} for
  {{Bayesian Network Classifiers}}}. In \bibinfo{booktitle}{\emph{Proceedings
  of the {{Twenty-Ninth International Joint Conference}} on {{Artificial
  Intelligence}}}}. \bibinfo{publisher}{{International Joint Conferences on
  Artificial Intelligence Organization}}, \bibinfo{pages}{451--457}.
\newblock


\bibitem[Albini et~al\mbox{.}(2021)]%
        {Albini2021b}
\bibfield{author}{\bibinfo{person}{Emanuele Albini}, \bibinfo{person}{Antonio
  Rago}, \bibinfo{person}{Pietro Baroni}, {and} \bibinfo{person}{Francesca
  Toni}.} \bibinfo{year}{2021}\natexlab{}.
\newblock \showarticletitle{Influence-{{Driven Explanations}} for {{Bayesian
  Network Classifiers}}}. In \bibinfo{booktitle}{\emph{{{PRICAI}} 2021:
  {{Trends}} in {{Artificial Intelligence}}}} \emph{(\bibinfo{series}{Lecture
  {{Notes}} in {{Computer Science}}})}. \bibinfo{publisher}{{Springer
  International Publishing}}, \bibinfo{pages}{88--100}.
\newblock


\bibitem[Banzhaf(1964)]%
        {Banzhaf1964}
\bibfield{author}{\bibinfo{person}{John F.~III Banzhaf}.}
  \bibinfo{year}{1964}\natexlab{}.
\newblock \showarticletitle{Weighted {{Voting Doesn}}'t {{Work}}: {{A
  Mathematical Analysis}}}.
\newblock \bibinfo{journal}{\emph{Rutgers Law Review}}  \bibinfo{volume}{19}
  (\bibinfo{year}{1964}), \bibinfo{pages}{317}.
\newblock


\bibitem[Barocas et~al\mbox{.}(2020)]%
        {Barocas2020}
\bibfield{author}{\bibinfo{person}{Solon Barocas}, \bibinfo{person}{Andrew~D.
  Selbst}, {and} \bibinfo{person}{Manish Raghavan}.}
  \bibinfo{year}{2020}\natexlab{}.
\newblock \showarticletitle{The Hidden Assumptions behind Counterfactual
  Explanations and Principal Reasons}. In \bibinfo{booktitle}{\emph{Proceedings
  of the 2020 {{Conference}} on {{Fairness}}, {{Accountability}}, and
  {{Transparency}}}} \emph{(\bibinfo{series}{{{FAT}}* '20})}.
  \bibinfo{publisher}{{ACM}}, \bibinfo{pages}{80--89}.
\newblock


\bibitem[Becker(1994)]%
        {Adult1994}
\bibfield{author}{\bibinfo{person}{Barry Becker}.}
  \bibinfo{year}{1994}\natexlab{}.
\newblock \bibinfo{title}{Adult {{Dataset}}: {{Extract}} of 1994 {{U}}.{{S}}.
  {{Income Census}}}.
\newblock
\newblock


\bibitem[Bergstra et~al\mbox{.}(2013)]%
        {Hyperopt}
\bibfield{author}{\bibinfo{person}{James Bergstra}, \bibinfo{person}{Daniel
  Yamins}, {and} \bibinfo{person}{David Cox}.} \bibinfo{year}{2013}\natexlab{}.
\newblock \showarticletitle{Making a {{Science}} of {{Model Search}}:
  {{Hyperparameter Optimization}} in {{Hundreds}} of {{Dimensions}} for
  {{Vision Architectures}}}. In \bibinfo{booktitle}{\emph{Proceedings of the
  30th {{International Conference}} on {{Machine Learning}}}}.
  \bibinfo{publisher}{{PMLR}}, \bibinfo{pages}{115--123}.
\newblock


\bibitem[Bhatore et~al\mbox{.}(2020)]%
        {Bhatore2020}
\bibfield{author}{\bibinfo{person}{Siddharth Bhatore}, \bibinfo{person}{Lalit
  Mohan}, {and} \bibinfo{person}{Y.~Raghu Reddy}.}
  \bibinfo{year}{2020}\natexlab{}.
\newblock \showarticletitle{Machine Learning Techniques for Credit Risk
  Evaluation: A Systematic Literature Review}.
\newblock \bibinfo{journal}{\emph{Journal of Banking and Financial Technology}}
  \bibinfo{volume}{4}, \bibinfo{number}{1} (\bibinfo{year}{2020}),
  \bibinfo{pages}{111--138}.
\newblock


\bibitem[Bhatt et~al\mbox{.}(2020)]%
        {Bhatt2020}
\bibfield{author}{\bibinfo{person}{Umang Bhatt}, \bibinfo{person}{Adrian
  Weller}, {and} \bibinfo{person}{Jos{\'e} M.~F. Moura}.}
  \bibinfo{year}{2020}\natexlab{}.
\newblock \showarticletitle{Evaluating and {{Aggregating Feature-based Model
  Explanations}}}. In \bibinfo{booktitle}{\emph{Proceedings of the 29th
  {{International Joint Conference}} on {{Artificial Intelligence}}
  ({{IJCAI}})}}. \bibinfo{pages}{3016--3022}.
\newblock


\bibitem[CFPB(2018)]%
        {USRegulationB}
\bibfield{author}{\bibinfo{person}{U.S. Consumer Financial Protection~Bureau
  CFPB}.} \bibinfo{year}{2018}\natexlab{}.
\newblock \bibinfo{title}{Equal {{Credit Opportunity Act}} ({{Regulation B}}),
  12 {{CFR Part}} 1002}.
\newblock
\newblock


\bibitem[Chen et~al\mbox{.}(2020)]%
        {Chen2020}
\bibfield{author}{\bibinfo{person}{Hugh Chen}, \bibinfo{person}{Joseph~D.
  Janizek}, \bibinfo{person}{Scott Lundberg}, {and} \bibinfo{person}{Su-In
  Lee}.} \bibinfo{year}{2020}\natexlab{}.
\newblock \showarticletitle{True to the {{Model}} or {{True}} to the
  {{Data}}?}. In \bibinfo{booktitle}{\emph{{{ICML Workshop}} on {{Human
  Interpretability}} in {{Machine Learning}}}}.
\newblock
\showeprint[arxiv]{2006.16234}


\bibitem[Chen and Guestrin(2016)]%
        {XGBoost}
\bibfield{author}{\bibinfo{person}{Tianqi Chen} {and} \bibinfo{person}{Carlos
  Guestrin}.} \bibinfo{year}{2016}\natexlab{}.
\newblock \showarticletitle{{{XGBoost}}: {{A Scalable Tree Boosting System}}}.
  In \bibinfo{booktitle}{\emph{Proceedings of the 22nd {{ACM SIGKDD
  International Conference}} on {{Knowledge Discovery}} and {{Data Mining}}}}
  \emph{(\bibinfo{series}{{{KDD}} '16})}. \bibinfo{publisher}{{Association for
  Computing Machinery}}, \bibinfo{pages}{785--794}.
\newblock


\bibitem[Coleman(1968)]%
        {Coleman1968}
\bibfield{author}{\bibinfo{person}{James~Samuel Coleman}.}
  \bibinfo{year}{1968}\natexlab{}.
\newblock \bibinfo{booktitle}{\emph{Control of {{Collectivities}} and the
  {{Power}} of a {{Collectivity}} to {{Act}}}}.
\newblock \bibinfo{type}{{T}echnical {R}eport}. \bibinfo{institution}{{RAND
  Corporation}}.
\newblock


\bibitem[Covert et~al\mbox{.}(2020)]%
        {Covert2020}
\bibfield{author}{\bibinfo{person}{Ian~C Covert}, \bibinfo{person}{Scott
  Lundberg}, {and} \bibinfo{person}{Su-In Lee}.}
  \bibinfo{year}{2020}\natexlab{}.
\newblock \showarticletitle{Feature {{Removal Is A Unifying Principle For Model
  Explanation Methods}}}. In \bibinfo{booktitle}{\emph{{{NeurIPS
  ML-Retrospectives}}, {{Surveys}} \& {{Meta-Analyses Workshop}}}}.
\newblock


\bibitem[Dandl et~al\mbox{.}(2020)]%
        {Dandl2020}
\bibfield{author}{\bibinfo{person}{Susanne Dandl}, \bibinfo{person}{Christoph
  Molnar}, \bibinfo{person}{Martin Binder}, {and} \bibinfo{person}{Bernd
  Bischl}.} \bibinfo{year}{2020}\natexlab{}.
\newblock \showarticletitle{Multi-{{Objective Counterfactual Explanations}}}.
\newblock In \bibinfo{booktitle}{\emph{Parallel {{Problem Solving}} from
  {{Nature}} \textendash{} {{PPSN XVI}}}}. Vol.~\bibinfo{volume}{12269}.
  \bibinfo{publisher}{{Springer International Publishing}},
  \bibinfo{pages}{448--469}.
\newblock


\bibitem[Deegan and Packel(1978)]%
        {Deegan1978}
\bibfield{author}{\bibinfo{person}{J. Deegan} {and} \bibinfo{person}{E.~W.
  Packel}.} \bibinfo{year}{1978}\natexlab{}.
\newblock \showarticletitle{A New Index of Power for Simplen-Person Games}.
\newblock \bibinfo{journal}{\emph{International Journal of Game Theory}}
  \bibinfo{volume}{7}, \bibinfo{number}{2} (\bibinfo{year}{1978}),
  \bibinfo{pages}{113--123}.
\newblock


\bibitem[Deegan and Packel(1983)]%
        {Deegan1983}
\bibfield{author}{\bibinfo{person}{John Deegan} {and}
  \bibinfo{person}{Edward~W. Packel}.} \bibinfo{year}{1983}\natexlab{}.
\newblock \showarticletitle{To the ({{Minimal Winning}}) {{Victors Go}} the
  ({{Equally Divided}}) {{Spoils}}: {{A New Power Index}} for {{Simple}}
  n-{{Person Games}}}.
\newblock In \bibinfo{booktitle}{\emph{Political and {{Related Models}}}}.
  \bibinfo{publisher}{{Springer}}, \bibinfo{pages}{239--255}.
\newblock


\bibitem[Dehez(2017)]%
        {Dehez2017}
\bibfield{author}{\bibinfo{person}{Pierre Dehez}.}
  \bibinfo{year}{2017}\natexlab{}.
\newblock \showarticletitle{On {{Harsanyi Dividends}} and {{Asymmetric
  Values}}}.
\newblock \bibinfo{journal}{\emph{International Game Theory Review}}
  \bibinfo{volume}{19}, \bibinfo{number}{03} (\bibinfo{year}{2017}),
  \bibinfo{pages}{1750012}.
\newblock


\bibitem[Fern{\'a}ndez et~al\mbox{.}(2019)]%
        {Fernandez2019}
\bibfield{author}{\bibinfo{person}{Rub{\'e}n~R. Fern{\'a}ndez},
  \bibinfo{person}{Isaac~Mart{\'i}n {de Diego}}, \bibinfo{person}{V{\'i}ctor
  Ace{\~n}a}, \bibinfo{person}{Javier~M. Moguerza}, {and}
  \bibinfo{person}{Alberto {Fern{\'a}ndez-Isabel}}.}
  \bibinfo{year}{2019}\natexlab{}.
\newblock \showarticletitle{Relevance {{Metric}} for {{Counterfactuals
  Selection}} in {{Decision Trees}}}.
\newblock In \bibinfo{booktitle}{\emph{Intelligent {{Data Engineering}} and
  {{Automated Learning}} \textendash{} {{IDEAL}} 2019}}.
  Vol.~\bibinfo{volume}{11871}. \bibinfo{publisher}{{Springer International
  Publishing}}, \bibinfo{pages}{85--93}.
\newblock


\bibitem[{FICO Community}(2019)]%
        {FICOCommunity2019}
\bibfield{author}{\bibinfo{person}{{FICO Community}}.}
  \bibinfo{year}{2019}\natexlab{}.
\newblock \bibinfo{title}{Explainable {{Machine Learning Challenge}}}.
\newblock
\newblock


\bibitem[Frye et~al\mbox{.}(2021)]%
        {Frye2021}
\bibfield{author}{\bibinfo{person}{Christopher Frye}, \bibinfo{person}{Damien
  {de Mijolla}}, \bibinfo{person}{Tom Begley}, \bibinfo{person}{Laurence
  Cowton}, \bibinfo{person}{Megan Stanley}, {and} \bibinfo{person}{Ilya
  Feige}.} \bibinfo{year}{2021}\natexlab{}.
\newblock \showarticletitle{Shapley {{Explainability}} on the Data Manifold}.
  In \bibinfo{booktitle}{\emph{Proceedings of the 9th {{International
  Conference}} on {{Learning Representations}} ({{ICLR}})}}.
\newblock


\bibitem[Fryer et~al\mbox{.}(2021)]%
        {Fryer2021a}
\bibfield{author}{\bibinfo{person}{Daniel Fryer}, \bibinfo{person}{Inga
  Str{\"u}mke}, {and} \bibinfo{person}{Hien Nguyen}.}
  \bibinfo{year}{2021}\natexlab{}.
\newblock \showarticletitle{Shapley {{Values}} for {{Feature Selection}}: {{The
  Good}}, the {{Bad}}, and the {{Axioms}}}.
\newblock \bibinfo{journal}{\emph{IEEE Access}}  \bibinfo{volume}{9}
  (\bibinfo{year}{2021}), \bibinfo{pages}{144352--144360}.
\newblock


\bibitem[Galhotra et~al\mbox{.}(2021)]%
        {Galhotra2021}
\bibfield{author}{\bibinfo{person}{Sainyam Galhotra}, \bibinfo{person}{Romila
  Pradhan}, {and} \bibinfo{person}{Babak Salimi}.}
  \bibinfo{year}{2021}\natexlab{}.
\newblock \showarticletitle{Explaining {{Black-Box Algorithms Using
  Probabilistic Contrastive Counterfactuals}}}. In
  \bibinfo{booktitle}{\emph{Proceedings of the 2021 {{International
  Conference}} on {{Management}} of {{Data}}}}. \bibinfo{publisher}{{ACM}},
  \bibinfo{pages}{577--590}.
\newblock


\bibitem[GDPR(2016)]%
        {GDPR2016}
\bibfield{author}{\bibinfo{person}{GDPR}.} \bibinfo{year}{2016}\natexlab{}.
\newblock \bibinfo{title}{Regulation ({{EU}}) 2016/679 of the {{European
  Parliament}} and of the {{Council}} of 27 {{April}} 2016 on the Protection of
  Natural Persons with Regard to the Processing of Personal Data and on the
  Free Movement of Such Data, and Repealing {{Directive}}~95/46/{{EC}}
  ({{General Data Protection Regulation}})}.
\newblock
\newblock


\bibitem[Ghorbani et~al\mbox{.}(2019)]%
        {Ghorbani2019a}
\bibfield{author}{\bibinfo{person}{Amirata Ghorbani}, \bibinfo{person}{Abubakar
  Abid}, {and} \bibinfo{person}{James Zou}.} \bibinfo{year}{2019}\natexlab{}.
\newblock \showarticletitle{Interpretation of {{Neural Networks Is Fragile}}}.
\newblock \bibinfo{journal}{\emph{Proceedings of the 33rd AAAI Conference on
  Artificial Intelligence}} \bibinfo{volume}{33}, \bibinfo{number}{01}
  (\bibinfo{year}{2019}), \bibinfo{pages}{3681--3688}.
\newblock


\bibitem[Grigorescu et~al\mbox{.}(2020)]%
        {Grigorescu2020}
\bibfield{author}{\bibinfo{person}{Sorin Grigorescu}, \bibinfo{person}{Bogdan
  Trasnea}, \bibinfo{person}{Tiberiu Cocias}, {and} \bibinfo{person}{Gigel
  Macesanu}.} \bibinfo{year}{2020}\natexlab{}.
\newblock \showarticletitle{A {{Survey}} of {{Deep Learning Techniques}} for
  {{Autonomous Driving}}}.
\newblock \bibinfo{journal}{\emph{Journal of Field Robotics}}
  \bibinfo{volume}{37}, \bibinfo{number}{3} (\bibinfo{year}{2020}),
  \bibinfo{pages}{362--386}.
\newblock
\showeprint[arxiv]{1910.07738}


\bibitem[Guidotti(2022)]%
        {Guidotti2022}
\bibfield{author}{\bibinfo{person}{Riccardo Guidotti}.}
  \bibinfo{year}{2022}\natexlab{}.
\newblock \showarticletitle{Counterfactual Explanations and How to Find Them:
  Literature Review and Benchmarking}.
\newblock \bibinfo{journal}{\emph{Data Mining and Knowledge Discovery}}
  (\bibinfo{year}{2022}).
\newblock


\bibitem[Halpern(2016)]%
        {Halpern2016}
\bibfield{author}{\bibinfo{person}{Joseph~Y. Halpern}.}
  \bibinfo{year}{2016}\natexlab{}.
\newblock \bibinfo{booktitle}{\emph{Actual {{Causality}}}}.
\newblock


\bibitem[Harsanyi(1958)]%
        {Harsanyi1958}
\bibfield{author}{\bibinfo{person}{Goerge~Charles Harsanyi}.}
  \bibinfo{year}{1958}\natexlab{}.
\newblock \emph{\bibinfo{title}{A Bargaining Model for the Cooperatiove
  N-Person Game}}.
\newblock \bibinfo{thesistype}{Ph.\,D. Dissertation}.
\newblock


\bibitem[Holler(1978)]%
        {Holler1978}
\bibfield{author}{\bibinfo{person}{Manfred~J. Holler}.}
  \bibinfo{year}{1978}\natexlab{}.
\newblock \bibinfo{title}{A {{Priori Party Party Power}} and {{Government
  Formation}}: {{Esimerkkin\"a Suomi}}}.
\newblock
\newblock


\bibitem[Holler and Packel(1983)]%
        {Holler1983}
\bibfield{author}{\bibinfo{person}{Manfred~J. Holler} {and}
  \bibinfo{person}{Edward~W. Packel}.} \bibinfo{year}{1983}\natexlab{}.
\newblock \showarticletitle{Power, {{Luck}} and the {{Right Index}}}.
\newblock \bibinfo{journal}{\emph{Zeitschrift f\"ur National\"okonomie /
  Journal of Economics}} \bibinfo{volume}{43}, \bibinfo{number}{1}
  (\bibinfo{year}{1983}), \bibinfo{pages}{21--29}.
\newblock
\showeprint[jstor]{41798164}


\bibitem[Janzing et~al\mbox{.}(2020)]%
        {Janzing2020}
\bibfield{author}{\bibinfo{person}{Dominik Janzing}, \bibinfo{person}{Lenon
  Minorics}, {and} \bibinfo{person}{Patrick Bloebaum}.}
  \bibinfo{year}{2020}\natexlab{}.
\newblock \showarticletitle{Feature Relevance Quantification in Explainable
  {{AI}}: {{A}} Causal Problem}. In \bibinfo{booktitle}{\emph{Proceedings of
  the {{Twenty Third International Conference}} on {{Artificial Intelligence}}
  and {{Statistics}} ({{AISTATS}})}}. \bibinfo{publisher}{{PMLR}},
  \bibinfo{pages}{2907--2916}.
\newblock


\bibitem[Jobin et~al\mbox{.}(2019)]%
        {Jobin2019}
\bibfield{author}{\bibinfo{person}{Anna Jobin}, \bibinfo{person}{Marcello
  Ienca}, {and} \bibinfo{person}{Effy Vayena}.}
  \bibinfo{year}{2019}\natexlab{}.
\newblock \showarticletitle{The Global Landscape of {{AI}} Ethics Guidelines}.
\newblock \bibinfo{journal}{\emph{Nature Machine Intelligence}}
  \bibinfo{volume}{1}, \bibinfo{number}{9} (\bibinfo{year}{2019}),
  \bibinfo{pages}{389--399}.
\newblock


\bibitem[Kanamori et~al\mbox{.}(2022)]%
        {Kanamori2022}
\bibfield{author}{\bibinfo{person}{Kentaro Kanamori}, \bibinfo{person}{Takuya
  Takagi}, \bibinfo{person}{Ken Kobayashi}, {and} \bibinfo{person}{Yuichi
  Ike}.} \bibinfo{year}{2022}\natexlab{}.
\newblock \showarticletitle{Counterfactual {{Explanation Trees}}:
  {{Transparent}} and {{Consistent Actionable Recourse}} with {{Decision
  Trees}}}. In \bibinfo{booktitle}{\emph{Proceedings of {{The}} 25th
  {{International Conference}} on {{Artificial Intelligence}} and
  {{Statistics}} ({{AISTATS}})}}. \bibinfo{publisher}{{PMLR}},
  \bibinfo{pages}{1846--1870}.
\newblock


\bibitem[Karczmarz et~al\mbox{.}(2022)]%
        {Karczmarz2022}
\bibfield{author}{\bibinfo{person}{Adam Karczmarz}, \bibinfo{person}{Tomasz
  Michalak}, \bibinfo{person}{Anish Mukherjee}, \bibinfo{person}{Piotr
  Sankowski}, {and} \bibinfo{person}{Piotr Wygocki}.}
  \bibinfo{year}{2022}\natexlab{}.
\newblock \showarticletitle{Improved Feature Importance Computation for Tree
  Models Based on the {{Banzhaf}} Value}. In
  \bibinfo{booktitle}{\emph{Proceedings of the 38th {{Conference}} on
  {{Uncertainty}} in {{Artificial Intelligence}} ({{UAI}})}}.
  \bibinfo{publisher}{{PMLR}}, \bibinfo{pages}{969--979}.
\newblock


\bibitem[Karimi et~al\mbox{.}(2020a)]%
        {Karimi2020}
\bibfield{author}{\bibinfo{person}{Amir-Hossein Karimi}, \bibinfo{person}{G.
  Barthe}, \bibinfo{person}{B. Balle}, {and} \bibinfo{person}{Isabel Valera}.}
  \bibinfo{year}{2020}\natexlab{a}.
\newblock \showarticletitle{Model-{{Agnostic Counterfactual Explanations}} for
  {{Consequential Decisions}}}. In \bibinfo{booktitle}{\emph{Proceedings of the
  23rd {{International Conference}} on. {{Artificial Intelligence}} and
  {{Statistics}} ({{AISTATS}})}}.
\newblock


\bibitem[Karimi et~al\mbox{.}(2022)]%
        {Karimi2022}
\bibfield{author}{\bibinfo{person}{Amir-Hossein Karimi},
  \bibinfo{person}{Gilles Barthe}, \bibinfo{person}{Bernhard Sch{\"o}lkopf},
  {and} \bibinfo{person}{Isabel Valera}.} \bibinfo{year}{2022}\natexlab{}.
\newblock \showarticletitle{A {{Survey}} of {{Algorithmic Recourse}}:
  {{Contrastive Explanations}} and {{Consequential Recommendations}}}.
\newblock \bibinfo{journal}{\emph{Comput. Surveys}} \bibinfo{volume}{55},
  \bibinfo{number}{5} (\bibinfo{year}{2022}), \bibinfo{pages}{95:1--95:29}.
\newblock


\bibitem[Karimi et~al\mbox{.}(2020b)]%
        {Karimi2020a}
\bibfield{author}{\bibinfo{person}{Amir-Hossein Karimi},
  \bibinfo{person}{Julius {von K{\"u}gelgen}}, \bibinfo{person}{Bernhard
  Sch{\"o}lkopf}, {and} \bibinfo{person}{Isabel Valera}.}
  \bibinfo{year}{2020}\natexlab{b}.
\newblock \showarticletitle{Algorithmic Recourse under Imperfect Causal
  Knowledge: A Probabilistic Approach}. In
  \bibinfo{booktitle}{\emph{Proceedings of the 34th {{Annual Conference}} on
  {{Neural Information Processing Systems}} ({{NeurIPS}})}}.
  \bibinfo{pages}{265--277}.
\newblock
\showeprint[arxiv]{2006.06831}


\bibitem[Keane et~al\mbox{.}(2021)]%
        {Keane2021}
\bibfield{author}{\bibinfo{person}{Mark~T. Keane}, \bibinfo{person}{Eoin~M.
  Kenny}, \bibinfo{person}{Eoin Delaney}, {and} \bibinfo{person}{Barry Smyth}.}
  \bibinfo{year}{2021}\natexlab{}.
\newblock \showarticletitle{If {{Only We Had Better Counterfactual
  Explanations}}: {{Five Key Deficits}} to {{Rectify}} in the {{Evaluation}} of
  {{Counterfactual XAI Techniques}}}. In \bibinfo{booktitle}{\emph{Proceedings
  of the {{Thirtieth International Joint Conference}} on {{Artificial
  Intelligence}}}}. \bibinfo{publisher}{{International Joint Conferences on
  Artificial Intelligence Organization}}, \bibinfo{pages}{4466--4474}.
\newblock


\bibitem[Kendall and Gibbons(1990)]%
        {kendall1990correlation}
\bibfield{author}{\bibinfo{person}{Maurice Kendall} {and}
  \bibinfo{person}{Jean~D. Gibbons}.} \bibinfo{year}{1990}\natexlab{}.
\newblock \bibinfo{booktitle}{\emph{Rank Correlation Methods}
  (\bibinfo{edition}{fifth} ed.)}.
\newblock \bibinfo{publisher}{{A Charles Griffin Title}}.
\newblock


\bibitem[Kenny and Keane(2021)]%
        {Kenny2021}
\bibfield{author}{\bibinfo{person}{Eoin~M. Kenny} {and}
  \bibinfo{person}{Mark~T. Keane}.} \bibinfo{year}{2021}\natexlab{}.
\newblock \showarticletitle{On {{Generating Plausible Counterfactual}} and
  {{Semi-Factual Explanations}} for {{Deep Learning}}}.
\newblock \bibinfo{journal}{\emph{Proceedings of the AAAI Conference on
  Artificial Intelligence}} \bibinfo{volume}{35}, \bibinfo{number}{13}
  (\bibinfo{year}{2021}), \bibinfo{pages}{11575--11585}.
\newblock


\bibitem[Krishna et~al\mbox{.}(2022)]%
        {Krishna2022a}
\bibfield{author}{\bibinfo{person}{Satyapriya Krishna}, \bibinfo{person}{Tessa
  Han}, \bibinfo{person}{Alex Gu}, \bibinfo{person}{Javin Pombra},
  \bibinfo{person}{Shahin Jabbari}, \bibinfo{person}{Steven Wu}, {and}
  \bibinfo{person}{Himabindu Lakkaraju}.} \bibinfo{year}{2022}\natexlab{}.
\newblock \bibinfo{title}{The {{Disagreement Problem}} in {{Explainable Machine
  Learning}}: {{A Practitioner}}'s {{Perspective}}}.
\newblock
\newblock
\showeprint[arxiv]{2202.01602}


\bibitem[Kumar et~al\mbox{.}(2020)]%
        {Kumar2020}
\bibfield{author}{\bibinfo{person}{I.~Elizabeth Kumar}, \bibinfo{person}{Suresh
  Venkatasubramanian}, \bibinfo{person}{Carlos Scheidegger}, {and}
  \bibinfo{person}{Sorelle~A. Friedler}.} \bibinfo{year}{2020}\natexlab{}.
\newblock \showarticletitle{Problems with {{Shapley-value-based}} Explanations
  as Feature Importance Measures}. In \bibinfo{booktitle}{\emph{Proceedings of
  the 37th {{International Conference}} on {{Machine Learning}}}}
  \emph{(\bibinfo{series}{{{ICML}}'20})}. \bibinfo{publisher}{{JMLR.org}},
  \bibinfo{pages}{5491--5500}.
\newblock


\bibitem[Lahiri et~al\mbox{.}(2022)]%
        {Lahiri2022}
\bibfield{author}{\bibinfo{person}{Aditya Lahiri}, \bibinfo{person}{Kamran
  Alipour}, \bibinfo{person}{Ehsan Adeli}, {and} \bibinfo{person}{Babak
  Salimi}.} \bibinfo{year}{2022}\natexlab{}.
\newblock \showarticletitle{Combining {{Counterfactuals With Shapley Values To
  Explain Image Models}}}. In \bibinfo{booktitle}{\emph{{{ICML}} 2022
  {{Workshop}} on {{Responsible Decision Making}} in {{Dynamic
  Environments}}}}. \bibinfo{publisher}{{arXiv}}.
\newblock
\showeprint[arxiv]{2206.07087}


\bibitem[Lang et~al\mbox{.}(2022)]%
        {Lang2022}
\bibfield{author}{\bibinfo{person}{Jana Lang}, \bibinfo{person}{Martin Giese},
  \bibinfo{person}{Winfried Ilg}, {and} \bibinfo{person}{Sebastian Otte}.}
  \bibinfo{year}{2022}\natexlab{}.
\newblock \bibinfo{title}{Generating {{Sparse Counterfactual Explanations For
  Multivariate Time Series}}}.
\newblock
\newblock
\showeprint[arxiv]{2206.00931}


\bibitem[Laugel et~al\mbox{.}(2019)]%
        {Laugel2019}
\bibfield{author}{\bibinfo{person}{Thibault Laugel},
  \bibinfo{person}{Marie-Jeanne Lesot}, \bibinfo{person}{Christophe Marsala},
  \bibinfo{person}{Xavier Renard}, {and} \bibinfo{person}{Marcin Detyniecki}.}
  \bibinfo{year}{2019}\natexlab{}.
\newblock \showarticletitle{The {{Dangers}} of {{Post-hoc Interpretability}}:
  {{Unjustified Counterfactual Explanations}}}. In
  \bibinfo{booktitle}{\emph{Proceedings of the {{Twenty-Eighth International
  Joint Conference}} on {{Artificial Intelligence}}}}.
  \bibinfo{publisher}{{International Joint Conferences on Artificial
  Intelligence Organization}}, \bibinfo{pages}{2801--2807}.
\newblock


\bibitem[{Lending Club}(2019)]%
        {LendingClub2019}
\bibfield{author}{\bibinfo{person}{{Lending Club}}.}
  \bibinfo{year}{2019}\natexlab{}.
\newblock \bibinfo{title}{Lending {{Club Loans}}}.
\newblock
\newblock


\bibitem[Lundberg et~al\mbox{.}(2020)]%
        {Lundberg2020}
\bibfield{author}{\bibinfo{person}{Scott~M. Lundberg}, \bibinfo{person}{Gabriel
  Erion}, \bibinfo{person}{Hugh Chen}, \bibinfo{person}{Alex DeGrave},
  \bibinfo{person}{Jordan~M. Prutkin}, \bibinfo{person}{Bala Nair},
  \bibinfo{person}{Ronit Katz}, \bibinfo{person}{Jonathan Himmelfarb},
  \bibinfo{person}{Nisha Bansal}, {and} \bibinfo{person}{Su-In Lee}.}
  \bibinfo{year}{2020}\natexlab{}.
\newblock \showarticletitle{From Local Explanations to Global Understanding
  with Explainable {{AI}} for Trees}.
\newblock \bibinfo{journal}{\emph{Nature Machine Intelligence}}
  \bibinfo{volume}{2}, \bibinfo{number}{1} (\bibinfo{year}{2020}),
  \bibinfo{pages}{56--67}.
\newblock


\bibitem[Lundberg and Lee(2017)]%
        {Lundberg2017}
\bibfield{author}{\bibinfo{person}{Scott~M Lundberg} {and}
  \bibinfo{person}{Su-In Lee}.} \bibinfo{year}{2017}\natexlab{}.
\newblock \showarticletitle{A {{Unified Approach}} to {{Interpreting Model
  Predictions}}}. In \bibinfo{booktitle}{\emph{Proceedings of the 31st {{Annual
  Conference}} on {{Neural Information Processing Systems}} ({{NIPS}})}}.
  \bibinfo{pages}{4768--4777}.
\newblock


\bibitem[Markus et~al\mbox{.}(2021)]%
        {Markus2021}
\bibfield{author}{\bibinfo{person}{Aniek~F. Markus}, \bibinfo{person}{Jan~A.
  Kors}, {and} \bibinfo{person}{Peter~R. Rijnbeek}.}
  \bibinfo{year}{2021}\natexlab{}.
\newblock \showarticletitle{The Role of Explainability in Creating Trustworthy
  Artificial Intelligence for Health Care: {{A}} Comprehensive Survey of the
  Terminology, Design Choices, and Evaluation Strategies}.
\newblock \bibinfo{journal}{\emph{Journal of Biomedical Informatics}}
  \bibinfo{volume}{113} (\bibinfo{year}{2021}), \bibinfo{pages}{103655}.
\newblock


\bibitem[Merrick and Taly(2020)]%
        {Merrick2020}
\bibfield{author}{\bibinfo{person}{Luke Merrick} {and} \bibinfo{person}{Ankur
  Taly}.} \bibinfo{year}{2020}\natexlab{}.
\newblock \showarticletitle{The {{Explanation Game}}: {{Explaining Machine
  Learning Models Using Shapley Values}}}. In
  \bibinfo{booktitle}{\emph{Proceedings of 2020 {{International Cross-Domain
  Conference}} for {{Machine Learning}} and {{Knowledge Extraction}}
  ({{CD-MAKE}})}}. \bibinfo{pages}{17--38}.
\newblock
\showeprint[arxiv]{1909.08128}


\bibitem[Miller(2019)]%
        {Miller2019}
\bibfield{author}{\bibinfo{person}{Tim Miller}.}
  \bibinfo{year}{2019}\natexlab{}.
\newblock \showarticletitle{Explanation in Artificial Intelligence:
  {{Insights}} from the Social Sciences}.
\newblock \bibinfo{journal}{\emph{Artificial Intelligence}}
  \bibinfo{volume}{267} (\bibinfo{year}{2019}), \bibinfo{pages}{1--38}.
\newblock


\bibitem[Mohseni et~al\mbox{.}(2021)]%
        {Mohseni2021}
\bibfield{author}{\bibinfo{person}{Sina Mohseni}, \bibinfo{person}{Niloofar
  Zarei}, {and} \bibinfo{person}{Eric~D. Ragan}.}
  \bibinfo{year}{2021}\natexlab{}.
\newblock \showarticletitle{A {{Multidisciplinary Survey}} and {{Framework}}
  for {{Design}} and {{Evaluation}} of {{Explainable AI Systems}}}.
\newblock \bibinfo{journal}{\emph{ACM Transactions on Interactive Intelligent
  Systems}} \bibinfo{volume}{11}, \bibinfo{number}{3-4} (\bibinfo{year}{2021}),
  \bibinfo{pages}{24:1--24:45}.
\newblock


\bibitem[Mothilal et~al\mbox{.}(2021)]%
        {Mothilal2021}
\bibfield{author}{\bibinfo{person}{R.~K. Mothilal}, \bibinfo{person}{Divyat
  Mahajan}, \bibinfo{person}{Chenhao Tan}, {and} \bibinfo{person}{Amit
  Sharma}.} \bibinfo{year}{2021}\natexlab{}.
\newblock \showarticletitle{Towards {{Unifying Feature Attribution}} and
  {{Counterfactual Explanations}}: {{Different Means}} to the {{Same End}}}. In
  \bibinfo{booktitle}{\emph{{{AIES}} '21: {{Proceedings}} of the 2021
  {{AAAI}}/{{ACM Conference}} on {{AI}}, {{Ethics}}, and {{Society}}}}.
  \bibinfo{pages}{652--663}.
\newblock


\bibitem[Mothilal et~al\mbox{.}(2020)]%
        {Mothilal2020a}
\bibfield{author}{\bibinfo{person}{Ramaravind~K. Mothilal},
  \bibinfo{person}{Amit Sharma}, {and} \bibinfo{person}{Chenhao Tan}.}
  \bibinfo{year}{2020}\natexlab{}.
\newblock \showarticletitle{Explaining Machine Learning Classifiers through
  Diverse Counterfactual Explanations}. In
  \bibinfo{booktitle}{\emph{Proceedings of the 2020 {{Conference}} on
  {{Fairness}}, {{Accountability}}, and {{Transparency}}}}
  \emph{(\bibinfo{series}{{{FAT}}* '20})}. \bibinfo{publisher}{{Association for
  Computing Machinery}}, \bibinfo{pages}{607--617}.
\newblock


\bibitem[Osborne and Rubinstein(1994)]%
        {Osborne1994}
\bibfield{author}{\bibinfo{person}{Martin~J. Osborne} {and}
  \bibinfo{person}{Ariel Rubinstein}.} \bibinfo{year}{1994}\natexlab{}.
\newblock \bibinfo{booktitle}{\emph{A Course in Game Theory}}.
\newblock \bibinfo{publisher}{{MIT Press}}.
\newblock
\showLCCN{HB144 .O733 1994}


\bibitem[Papantonis and Belle(2022)]%
        {Papantonis2022}
\bibfield{author}{\bibinfo{person}{Ioannis Papantonis} {and}
  \bibinfo{person}{Vaishak Belle}.} \bibinfo{year}{2022}\natexlab{}.
\newblock \bibinfo{title}{Principled {{Diverse Counterfactuals}} in
  {{Multilinear Models}}}.
\newblock
\newblock
\showeprint[arxiv]{2201.06467}


\bibitem[Pawelczyk et~al\mbox{.}(2020)]%
        {Pawelczyk2020}
\bibfield{author}{\bibinfo{person}{Martin Pawelczyk}, \bibinfo{person}{Klaus
  Broelemann}, {and} \bibinfo{person}{Gjergji Kasneci}.}
  \bibinfo{year}{2020}\natexlab{}.
\newblock \showarticletitle{On {{Counterfactual Explanations}} under
  {{Predictive Multiplicity}}}. In \bibinfo{booktitle}{\emph{Proceedings of the
  36th {{Conference}} on {{Uncertainty}} in {{Artificial Intelligence}}
  ({{UAI}})}}. \bibinfo{publisher}{{PMLR}}, \bibinfo{pages}{809--818}.
\newblock


\bibitem[Pawelczyk et~al\mbox{.}(2022)]%
        {Pawelczyk2022a}
\bibfield{author}{\bibinfo{person}{Martin Pawelczyk}, \bibinfo{person}{Teresa
  Datta}, \bibinfo{person}{Johannes {van-den-Heuvel}}, \bibinfo{person}{Gjergji
  Kasneci}, {and} \bibinfo{person}{Himabindu Lakkaraju}.}
  \bibinfo{year}{2022}\natexlab{}.
\newblock \showarticletitle{Probabilistically {{Robust Recourse}}:
  {{Navigating}} the {{Trade-offs}} between {{Costs}} and {{Robustness}} in
  {{Algorithmic Recourse}}}. In \bibinfo{booktitle}{\emph{Proceedings of the
  11th {{International Conference}} on {{Learning Representations}} ({{ICLR}})
  2023}}.
\newblock
\showeprint[arxiv]{2203.06768}


\bibitem[Penrose(1946)]%
        {Penrose1946}
\bibfield{author}{\bibinfo{person}{L.~S. Penrose}.}
  \bibinfo{year}{1946}\natexlab{}.
\newblock \showarticletitle{The {{Elementary Statistics}} of {{Majority
  Voting}}}.
\newblock \bibinfo{journal}{\emph{Journal of the Royal Statistical Society}}
  \bibinfo{volume}{109}, \bibinfo{number}{1} (\bibinfo{year}{1946}),
  \bibinfo{pages}{53--57}.
\newblock
\showeprint[jstor]{2981392}


\bibitem[Peters(2008)]%
        {Peters2008}
\bibfield{author}{\bibinfo{person}{Hans Peters}.}
  \bibinfo{year}{2008}\natexlab{}.
\newblock \bibinfo{booktitle}{\emph{Game Theory: A Multi-Leveled Approach}}.
\newblock \bibinfo{publisher}{{Springer}}.
\newblock


\bibitem[Petsiuk et~al\mbox{.}(2018)]%
        {Petsiuk2018}
\bibfield{author}{\bibinfo{person}{Vitali Petsiuk}, \bibinfo{person}{Abir Das},
  {and} \bibinfo{person}{Kate Saenko}.} \bibinfo{year}{2018}\natexlab{}.
\newblock \showarticletitle{{{RISE}}: {{Randomized Input Sampling}} for
  {{Explanation}} of {{Black-box Models}}}. In
  \bibinfo{booktitle}{\emph{Proceedings of the {{British Machine Vision
  Conference}} ({{BMVC}})}}. \bibinfo{publisher}{{arXiv}}.
\newblock
\showeprint[arxiv]{1806.07421}


\bibitem[Poyiadzi et~al\mbox{.}(2020)]%
        {Poyiadzi2020}
\bibfield{author}{\bibinfo{person}{Rafael Poyiadzi}, \bibinfo{person}{Kacper
  Sokol}, \bibinfo{person}{Raul {Santos-Rodriguez}}, \bibinfo{person}{Tijl
  De~Bie}, {and} \bibinfo{person}{Peter Flach}.}
  \bibinfo{year}{2020}\natexlab{}.
\newblock \showarticletitle{{{FACE}}: {{Feasible}} and {{Actionable
  Counterfactual Explanations}}}. In \bibinfo{booktitle}{\emph{Proceedings of
  the {{AAAI}}/{{ACM Conference}} on {{AI}}, {{Ethics}}, and {{Society}}}}.
  \bibinfo{pages}{344--350}.
\newblock
\showeprint[arxiv]{1909.09369}


\bibitem[Ribeiro et~al\mbox{.}(2016)]%
        {Ribeiro2016}
\bibfield{author}{\bibinfo{person}{Marco~Tulio Ribeiro},
  \bibinfo{person}{Sameer Singh}, {and} \bibinfo{person}{Carlos Guestrin}.}
  \bibinfo{year}{2016}\natexlab{}.
\newblock \showarticletitle{"{{Why Should I Trust You}}?": {{Explaining}} the
  {{Predictions}} of {{Any Classifier}}}. In
  \bibinfo{booktitle}{\emph{Proceedings of the 22nd {{ACM SIGKDD International
  Conference}} on {{Knowledge Discovery}} and {{Data Mining}}}}.
  \bibinfo{publisher}{{ACM}}, \bibinfo{pages}{1135--1144}.
\newblock


\bibitem[Rodriguez et~al\mbox{.}(2021)]%
        {Rodriguez2021}
\bibfield{author}{\bibinfo{person}{Pau Rodriguez}, \bibinfo{person}{Massimo
  Caccia}, \bibinfo{person}{Alexandre Lacoste}, \bibinfo{person}{Lee Zamparo},
  \bibinfo{person}{Issam Laradji}, \bibinfo{person}{Laurent Charlin}, {and}
  \bibinfo{person}{David Vazquez}.} \bibinfo{year}{2021}\natexlab{}.
\newblock \showarticletitle{Beyond {{Trivial Counterfactual Explanations}} with
  {{Diverse Valuable Explanations}}}. In \bibinfo{booktitle}{\emph{2021
  {{IEEE}}/{{CVF International Conference}} on {{Computer Vision}}
  ({{ICCV}})}}. \bibinfo{publisher}{{IEEE}}, \bibinfo{pages}{1036--1045}.
\newblock


\bibitem[Russo and Toni(2022)]%
        {Russo2022}
\bibfield{author}{\bibinfo{person}{Fabrizio Russo} {and}
  \bibinfo{person}{Francesca Toni}.} \bibinfo{year}{2022}\natexlab{}.
\newblock \bibinfo{title}{Causal {{Discovery}} and {{Injection}} for
  {{Feed-Forward Neural Networks}}}.
\newblock
\newblock
\showeprint[arxiv]{2205.09787}


\bibitem[Sarica et~al\mbox{.}(2022)]%
        {Sarica2022}
\bibfield{author}{\bibinfo{person}{Alessia Sarica}, \bibinfo{person}{Andrea
  Quattrone}, {and} \bibinfo{person}{Aldo Quattrone}.}
  \bibinfo{year}{2022}\natexlab{}.
\newblock \showarticletitle{Introducing the {{Rank-Biased Overlap}} as
  {{Similarity Measure}} for {{Feature Importance}} in {{Explainable Machine
  Learning}}: {{A Case Study}} on {{Parkinson}}'s {{Disease}}}. In
  \bibinfo{booktitle}{\emph{Brain {{Informatics}}}}
  \emph{(\bibinfo{series}{Lecture {{Notes}} in {{Computer Science}}})}.
  \bibinfo{publisher}{{Springer International Publishing}},
  \bibinfo{pages}{129--139}.
\newblock


\bibitem[Selbst and Barocas(2018)]%
        {Selbst2018}
\bibfield{author}{\bibinfo{person}{Andrew~D. Selbst} {and}
  \bibinfo{person}{Solon Barocas}.} \bibinfo{year}{2018}\natexlab{}.
\newblock \showarticletitle{The {{Intuitive Appeal}} of {{Explainable
  Machines}}}.
\newblock \bibinfo{journal}{\emph{Fordham Law Review}} \bibinfo{volume}{87},
  \bibinfo{number}{1085} (\bibinfo{year}{2018}).
\newblock


\bibitem[Shapley(1951)]%
        {Shapley1951}
\bibfield{author}{\bibinfo{person}{Lloyd~Stowell Shapley}.}
  \bibinfo{year}{1951}\natexlab{}.
\newblock \showarticletitle{Notes on the N-{{Person Game-II}}: {{The Value}} of
  an n-{{Person Game}}}.
\newblock \bibinfo{journal}{\emph{Project Rand, U.S. Air Force}}
  (\bibinfo{year}{1951}).
\newblock


\bibitem[Shapley and Shubik(1954)]%
        {Shapley1954}
\bibfield{author}{\bibinfo{person}{L.~S. Shapley} {and} \bibinfo{person}{Martin
  Shubik}.} \bibinfo{year}{1954}\natexlab{}.
\newblock \showarticletitle{A {{Method}} for {{Evaluating}} the
  {{Distribution}} of {{Power}} in a {{Committee System}}}.
\newblock \bibinfo{journal}{\emph{The American Political Science Review}}
  \bibinfo{volume}{48}, \bibinfo{number}{3} (\bibinfo{year}{1954}),
  \bibinfo{pages}{787--792}.
\newblock
\showeprint[jstor]{1951053}


\bibitem[Sharma et~al\mbox{.}(2022)]%
        {Sharma2022}
\bibfield{author}{\bibinfo{person}{Shubham Sharma}, \bibinfo{person}{Alan~H.
  Gee}, \bibinfo{person}{Jette Henderson}, {and} \bibinfo{person}{Joydeep
  Ghosh}.} \bibinfo{year}{2022}\natexlab{}.
\newblock \bibinfo{title}{{{FASTER-CE}}: {{Fast}}, {{Sparse}}, {{Transparent}},
  and {{Robust Counterfactual Explanations}}}.
\newblock
\newblock
\showeprint[arxiv]{2210.06578}


\bibitem[Sharma et~al\mbox{.}(2020)]%
        {Sharma2020}
\bibfield{author}{\bibinfo{person}{Shubham Sharma}, \bibinfo{person}{Jette
  Henderson}, {and} \bibinfo{person}{Joydeep Ghosh}.}
  \bibinfo{year}{2020}\natexlab{}.
\newblock \showarticletitle{{{CERTIFAI}}: {{A Common Framework}} to {{Provide
  Explanations}} and {{Analyse}} the {{Fairness}} and {{Robustness}} of
  {{Black-box Models}}}. In \bibinfo{booktitle}{\emph{Proceedings of the
  {{AAAI}}/{{ACM Conference}} on {{AI}}, {{Ethics}}, and {{Society}}}}.
  \bibinfo{publisher}{{ACM}}, \bibinfo{pages}{166--172}.
\newblock


\bibitem[{Shwartz-Ziv} and Armon(2022)]%
        {Shwartz-Ziv2022}
\bibfield{author}{\bibinfo{person}{Ravid {Shwartz-Ziv}} {and}
  \bibinfo{person}{Amitai Armon}.} \bibinfo{year}{2022}\natexlab{}.
\newblock \showarticletitle{Tabular Data: {{Deep}} Learning Is Not All You
  Need}.
\newblock \bibinfo{journal}{\emph{Information Fusion}}  \bibinfo{volume}{81}
  (\bibinfo{year}{2022}), \bibinfo{pages}{84--90}.
\newblock


\bibitem[Smyth and Keane(2021)]%
        {Smyth2021}
\bibfield{author}{\bibinfo{person}{Barry Smyth} {and} \bibinfo{person}{Mark~T.
  Keane}.} \bibinfo{year}{2021}\natexlab{}.
\newblock \bibinfo{booktitle}{\emph{A {{Few Good Counterfactuals}}:
  {{Generating Interpretable}}, {{Plausible}} and {{Diverse Counterfactual
  Explanations}}}}.
\newblock \bibinfo{type}{{T}echnical {R}eport}.
\newblock
\showeprint[arxiv]{2101.09056}


\bibitem[Spearman(1987)]%
        {Spearman1987}
\bibfield{author}{\bibinfo{person}{C. Spearman}.}
  \bibinfo{year}{1987}\natexlab{}.
\newblock \showarticletitle{The {{Proof}} and {{Measurement}} of
  {{Association}} between {{Two Things}}}.
\newblock \bibinfo{journal}{\emph{The American Journal of Psychology}}
  \bibinfo{volume}{100}, \bibinfo{number}{3/4} (\bibinfo{year}{1987}),
  \bibinfo{pages}{441--471}.
\newblock
\showeprint[jstor]{1422689}


\bibitem[Spooner et~al\mbox{.}(2021)]%
        {Spooner2021}
\bibfield{author}{\bibinfo{person}{Thomas Spooner}, \bibinfo{person}{Danial
  Dervovic}, \bibinfo{person}{Jason Long}, \bibinfo{person}{Jon Shepard},
  \bibinfo{person}{Jiahao Chen}, {and} \bibinfo{person}{Daniele Magazzeni}.}
  \bibinfo{year}{2021}\natexlab{}.
\newblock \showarticletitle{Counterfactual {{Explanations}} for {{Arbitrary
  Regression Models}}}. In \bibinfo{booktitle}{\emph{{{ICML}}'21 {{Workshop}}
  on {{Algorithmic Recourse}}}}.
\newblock
\showeprint[arxiv]{2106.15212}


\bibitem[Stepin et~al\mbox{.}(2021)]%
        {Stepin2021}
\bibfield{author}{\bibinfo{person}{Ilia Stepin}, \bibinfo{person}{Jose~M.
  Alonso}, \bibinfo{person}{Alejandro Catala}, {and} \bibinfo{person}{Martin
  {Pereira-Farina}}.} \bibinfo{year}{2021}\natexlab{}.
\newblock \showarticletitle{A {{Survey}} of {{Contrastive}} and
  {{Counterfactual Explanation Generation Methods}} for {{Explainable
  Artificial Intelligence}}}.
\newblock \bibinfo{journal}{\emph{IEEE Access}}  \bibinfo{volume}{9}
  (\bibinfo{year}{2021}), \bibinfo{pages}{11974--12001}.
\newblock


\bibitem[Strumbelj and Kononenko(2010)]%
        {Strumbelj2010}
\bibfield{author}{\bibinfo{person}{Erik Strumbelj} {and} \bibinfo{person}{Igor
  Kononenko}.} \bibinfo{year}{2010}\natexlab{}.
\newblock \showarticletitle{An {{Efficient Explanation}} of {{Individual
  Classifications}} Using {{Game Theory}}}.
\newblock \bibinfo{journal}{\emph{Journal of Machine Learning Research}}
  \bibinfo{volume}{11}, \bibinfo{number}{1} (\bibinfo{year}{2010}),
  \bibinfo{pages}{1--18}.
\newblock


\bibitem[Sturmfels et~al\mbox{.}(2020)]%
        {Sturmfels2020}
\bibfield{author}{\bibinfo{person}{Pascal Sturmfels}, \bibinfo{person}{Scott
  Lundberg}, {and} \bibinfo{person}{Su-In Lee}.}
  \bibinfo{year}{2020}\natexlab{}.
\newblock \showarticletitle{Visualizing the {{Impact}} of {{Feature Attribution
  Baselines}}}.
\newblock \bibinfo{journal}{\emph{Distill}} \bibinfo{volume}{5},
  \bibinfo{number}{1} (\bibinfo{year}{2020}), \bibinfo{pages}{e22}.
\newblock


\bibitem[Sundararajan and Najmi(2020)]%
        {Sundararajan2020}
\bibfield{author}{\bibinfo{person}{Mukund Sundararajan} {and}
  \bibinfo{person}{Amir Najmi}.} \bibinfo{year}{2020}\natexlab{}.
\newblock \showarticletitle{The {{Many Shapley Values}} for {{Model
  Explanation}}}. In \bibinfo{booktitle}{\emph{Proceedings of the 37th
  {{International Conference}} on {{Machine Learning}} ({{ICML}})}}.
  \bibinfo{pages}{11}.
\newblock


\bibitem[Sundararajan et~al\mbox{.}(2017)]%
        {Sundararajan2017}
\bibfield{author}{\bibinfo{person}{Mukund Sundararajan}, \bibinfo{person}{Ankur
  Taly}, {and} \bibinfo{person}{Qiqi Yan}.} \bibinfo{year}{2017}\natexlab{}.
\newblock \showarticletitle{Axiomatic {{Attribution}} for {{Deep Networks}}}.
  In \bibinfo{booktitle}{\emph{Proceedings of the 34th {{International
  Conference}} on {{Machine Learning}} ({{ICML}})}}.
  \bibinfo{publisher}{{PMLR}}, \bibinfo{pages}{3319--3328}.
\newblock


\bibitem[Upadhyay et~al\mbox{.}(2021)]%
        {Upadhyay}
\bibfield{author}{\bibinfo{person}{Sohini Upadhyay}, \bibinfo{person}{Shalmali
  Joshi}, {and} \bibinfo{person}{Himabindu Lakkaraju}.}
  \bibinfo{year}{2021}\natexlab{}.
\newblock \showarticletitle{Towards {{Robust}} and {{Reliable Algorithmic
  Recourse}}}.
\newblock \bibinfo{journal}{\emph{NeurIPS 2021 Poster}} (\bibinfo{year}{2021}),
  \bibinfo{pages}{12}.
\newblock


\bibitem[Ustun et~al\mbox{.}(2019)]%
        {Ustun2019}
\bibfield{author}{\bibinfo{person}{Berk Ustun}, \bibinfo{person}{Alexander
  Spangher}, {and} \bibinfo{person}{Yang Liu}.}
  \bibinfo{year}{2019}\natexlab{}.
\newblock \showarticletitle{Actionable {{Recourse}} in {{Linear
  Classification}}}.
\newblock \bibinfo{journal}{\emph{Proceedings of the Conference on Fairness,
  Accountability, and Transparency}} (\bibinfo{year}{2019}),
  \bibinfo{pages}{10--19}.
\newblock
\showeprint[arxiv]{1809.06514}


\bibitem[{van den Brink} and {van der Laan}(1998)]%
        {vandenBrink1998}
\bibfield{author}{\bibinfo{person}{Ren{\'e} {van den Brink}} {and}
  \bibinfo{person}{Gerard {van der Laan}}.} \bibinfo{year}{1998}\natexlab{}.
\newblock \showarticletitle{Axiomatizations of the Normalized {{Banzhaf}} Value
  and the {{Shapley}} Value}.
\newblock \bibinfo{journal}{\emph{Social Choice and Welfare}}
  \bibinfo{volume}{15}, \bibinfo{number}{4} (\bibinfo{year}{1998}),
  \bibinfo{pages}{567--582}.
\newblock


\bibitem[Van~Looveren and Klaise(2021)]%
        {VanLooveren2021a}
\bibfield{author}{\bibinfo{person}{Arnaud Van~Looveren} {and}
  \bibinfo{person}{Janis Klaise}.} \bibinfo{year}{2021}\natexlab{}.
\newblock \showarticletitle{Interpretable {{Counterfactual Explanations
  Guided}} by {{Prototypes}}}. In \bibinfo{booktitle}{\emph{Machine
  {{Learning}} and {{Knowledge Discovery}} in {{Databases}}. {{Research
  Track}}: {{European Conference}}, {{ECML PKDD}} 2021, {{Bilbao}}, {{Spain}},
  {{September}} 13\textendash 17, 2021, {{Proceedings}}, {{Part II}}}}.
  \bibinfo{publisher}{{Springer-Verlag}}, \bibinfo{pages}{650--665}.
\newblock


\bibitem[Veloso et~al\mbox{.}(2021)]%
        {Veloso2021}
\bibfield{author}{\bibinfo{person}{Manuela Veloso}, \bibinfo{person}{Tucker
  Balch}, \bibinfo{person}{Daniel Borrajo}, \bibinfo{person}{Prashant Reddy},
  {and} \bibinfo{person}{Sameena Shah}.} \bibinfo{year}{2021}\natexlab{}.
\newblock \showarticletitle{Artificial Intelligence Research in Finance:
  Discussion and Examples}.
\newblock \bibinfo{journal}{\emph{Oxford Review of Economic Policy}}
  \bibinfo{volume}{37}, \bibinfo{number}{3} (\bibinfo{year}{2021}),
  \bibinfo{pages}{564--584}.
\newblock


\bibitem[Venkatasubramanian and Alfano(2020)]%
        {Venkatasubramanian2020}
\bibfield{author}{\bibinfo{person}{Suresh Venkatasubramanian} {and}
  \bibinfo{person}{Mark Alfano}.} \bibinfo{year}{2020}\natexlab{}.
\newblock \showarticletitle{The Philosophical Basis of Algorithmic Recourse}.
  In \bibinfo{booktitle}{\emph{Proceedings of the 2020 {{Conference}} on
  {{Fairness}}, {{Accountability}}, and {{Transparency}}}}.
  \bibinfo{publisher}{{ACM}}, \bibinfo{pages}{284--293}.
\newblock


\bibitem[Verma et~al\mbox{.}(2020)]%
        {Verma2020}
\bibfield{author}{\bibinfo{person}{Sahil Verma}, \bibinfo{person}{Varich
  Boonsanong}, \bibinfo{person}{Minh Hoang}, \bibinfo{person}{Keegan~E. Hines},
  \bibinfo{person}{John~P. Dickerson}, {and} \bibinfo{person}{Chirag Shah}.}
  \bibinfo{year}{2020}\natexlab{}.
\newblock \showarticletitle{Counterfactual {{Explanations}} and {{Algorithmic
  Recourses}} for {{Machine Learning}}: {{A Review}}}.
\newblock  (\bibinfo{year}{2020}).
\newblock


\bibitem[Villani et~al\mbox{.}(2022)]%
        {Villani2022}
\bibfield{author}{\bibinfo{person}{Mattia Villani}, \bibinfo{person}{Joshua
  Lockhart}, {and} \bibinfo{person}{Daniele Magazzeni}.}
  \bibinfo{year}{2022}\natexlab{}.
\newblock \bibinfo{title}{Feature {{Importance}} for {{Time Series Data}}:
  {{Improving KernelSHAP}}}.
\newblock
\newblock


\bibitem[{von K{\"u}gelgen} et~al\mbox{.}(2022)]%
        {vonKugelgen2022}
\bibfield{author}{\bibinfo{person}{Julius {von K{\"u}gelgen}},
  \bibinfo{person}{Amir-Hossein Karimi}, \bibinfo{person}{Umang Bhatt},
  \bibinfo{person}{Isabel Valera}, \bibinfo{person}{Adrian Weller}, {and}
  \bibinfo{person}{Bernhard Sch{\"o}lkopf}.} \bibinfo{year}{2022}\natexlab{}.
\newblock \showarticletitle{On the {{Fairness}} of {{Causal Algorithmic
  Recourse}}}. In \bibinfo{booktitle}{\emph{Proceedings of the 36th {{AAAI
  Conference}} on {{Artificial Intelligence}} ({{AAAI}})}}.
\newblock
\showeprint[arxiv]{2010.06529}


\bibitem[Wachter et~al\mbox{.}(2017)]%
        {Wachter2017}
\bibfield{author}{\bibinfo{person}{Sandra Wachter}, \bibinfo{person}{Brent
  Mittelstadt}, {and} \bibinfo{person}{Chris Russell}.}
  \bibinfo{year}{2017}\natexlab{}.
\newblock \showarticletitle{Counterfactual {{Explanations Without Opening}} the
  {{Black Box}}: {{Automated Decisions}} and the {{GDPR}}}.
\newblock \bibinfo{journal}{\emph{SSRN Electronic Journal}}
  (\bibinfo{year}{2017}).
\newblock


\bibitem[Webber et~al\mbox{.}(2010)]%
        {Webber2010}
\bibfield{author}{\bibinfo{person}{William Webber}, \bibinfo{person}{Alistair
  Moffat}, {and} \bibinfo{person}{Justin Zobel}.}
  \bibinfo{year}{2010}\natexlab{}.
\newblock \showarticletitle{A Similarity Measure for Indefinite Rankings}.
\newblock \bibinfo{journal}{\emph{ACM Transactions on Information Systems}}
  \bibinfo{volume}{28}, \bibinfo{number}{4} (\bibinfo{year}{2010}),
  \bibinfo{pages}{20:1--20:38}.
\newblock


\bibitem[Young(1985)]%
        {Young1985}
\bibfield{author}{\bibinfo{person}{H.~P. Young}.}
  \bibinfo{year}{1985}\natexlab{}.
\newblock \showarticletitle{Monotonic Solutions of Cooperative Games}.
\newblock \bibinfo{journal}{\emph{International Journal of Game Theory}}
  \bibinfo{volume}{14}, \bibinfo{number}{2} (\bibinfo{year}{1985}),
  \bibinfo{pages}{65--72}.
\newblock


\bibitem[Yu et~al\mbox{.}(2018)]%
        {Yu2018}
\bibfield{author}{\bibinfo{person}{Kun-Hsing Yu}, \bibinfo{person}{Andrew~L.
  Beam}, {and} \bibinfo{person}{Isaac~S. Kohane}.}
  \bibinfo{year}{2018}\natexlab{}.
\newblock \showarticletitle{Artificial Intelligence in Healthcare}.
\newblock \bibinfo{journal}{\emph{Nature Biomedical Engineering}}
  \bibinfo{volume}{2}, \bibinfo{number}{10} (\bibinfo{year}{2018}),
  \bibinfo{pages}{719--731}.
\newblock


\end{thebibliography}

%%
%% If your work has an appendix, this is the place to put it.
\appendix
\onecolumn
%auto-ignore
%%%%%%%%%%%%%%%%%%%%%%%%%%%%%%%%%%%%%%%%%%%%%%%%%%%%%%%%%%%%%%%%%%%%%%%%%%%%%%%%%%%%%%%%%%%%%%%%%%%%
%%%%%%%%%%%%%%%%%%%%%%%%%%%%%%%%%%%%%%%%%%%%%%%%%%%%%%%%%%%%%%%%%%%%%%%%%%%%%%%%%%%%%%%%%%%%%%%%%%%%
%%%%%%%%%%%%%%%%%%%%%%%%%%%%%%%%%%%%%%%%%%%%%%%%%%%%%%%%%%%%%%%%%%%%%%%%%%%%%%%%%%%%%%%%%%%%%%%%%%%%
\section{Theoretical Results}\label{appendix:proofs}

In this appendix we report additional theoretical results together with the proofs of the results in \cref{sec:connection} that have been omitted from the main text for space and clarity of exposition reasons.

\subsection{Omitted Proofs}

We start by formally proving \cref{theo:connection,theo:connection_balance_maximal_sparsity}. We note that in the following proofs we will make use Shapley values axioms and properties as efficiency, null-player and symmetry. This are basic properties in the game theory literature, we refer the reader to Peters's game theory book \citep[][Chapter 17]{Peters2008} or Shapley's seminal work \citep{Shapley1951} for their formal definitions.

\begin{shapcfxconnectiontheorem}
\shapcfxconnectiontheoremtext
\begin{proof}
    Let's start by recalling that SHAP values calculation can be decomposed in the calculation of the SHAP values of single-reference games \citep{Merrick2020}:
    $$\phihat_i =  \E_{\xp \sim \Xp}[\phihat^{\xp}_i]
        \spaceornewline{\quad}
        \text{ where } \quad \phihat^{\xp}_i = \sum_{S \subseteq \F \setminus \{ i \}} w(S) \left[ v_{\xp}(S \cup \{i \}) - v_{\xp}(S) \right]
        \spaceornewline{\,\,,\,\,\,}
        v_{\xp}(S) = F\left(\tuple{\x_{S}, \xp_{\F \setminus S}}\right)$$

    We note that proving the thesis is equivalent to proving the following.
    \begin{equation}\tag{$\bigtriangleup$}\label{eq:proof:single_games}
        \phihat^{\xp}_i = \psivhat^{\xp}_i
        \quad \forall \xp \in \Xp \,,\, \forall i \in \F
    \end{equation}
    
    For each counterfactuals $\xp \in \Xp$, let's now consider the set of features that have been modified $C$ and its complement $\bar{C}$:
    $$
        {C} = \left\{ i \in \F : x'_i \neq x_i \right\} \quad\quad \bar{C} = \left\{ i \in \F : x'_i = x_i \right\}
    $$
    Proving \ref{eq:proof:single_games} is then equivalent to prove that $\forall \x' \in \Xp$:
    \begin{enumerate}[label=(\arabic*)]
    \item\label{eq:proof:zeros} $\forall i \in \bar{C}$ ,\, $\phihat^{\xp}_i = \psivhat^{\xp}_i = 0$;
    \item\label{eq:proof:ones1} $\forall i \in C$ ,\, $\phihat_i^{\xp} = \psivhat^{\xp}_i = 1$ if $|C| = 1$.
    \item\label{eq:proof:ones} $\forall i \in C$ ,\, $\phihat_i^{\xp} = \psivhat^{\xp}_i = 1 / \card{C}$ if $|C| > 1$.
    \end{enumerate}
    
    Let's start by first proving \ref{eq:proof:zeros}.
    It is trivial to observe that for all the features in $\bar{C}$ removing them has no effect because their value is equal in both the query instance and the counterfactual:
    $$
        \forall i \in \bar{C} \,,\,\, \forall S \subseteq \F \setminus \set{i} \,,\,\,\, v_{\xp}\left(S \cup \set{i}\right) - v_{\xp}\left(S\right) = 0 .
    $$
    Coincidentally, this is the definition of a \emph{null-player} (feature) therefore, by the null-player property of Shapley values, it follows that $\phiv^{\xp}_i = 0, \forall i \in \bar{C}$. This proves \ref{eq:proof:zeros}.

    In order to prove \ref{eq:proof:ones1}, let us now recall that by the \emph{efficiency} axiom of Shapley values, the attributions of the features must add up to the prediction of the model. Therefore --- since the features in $C$ are the only one with non-zero attribution --- it follows that:
    \begin{equation}\tag{A}\label{eq:proof:efficiency}
        \sum_{i \in C} \phihat_i^{\xp} = 1 \,.
    \end{equation}
    If $\card{C} = 1$ then \ref{eq:proof:ones1} trivially follows from (\ref{eq:proof:efficiency}).

    We will now prove \ref{eq:proof:ones}. Let us now recall that, since $\xp$ is maximally sparse by hypothesis, removing any of the features from the counterfactual will make it invalid:
    $$  
        \forall i \in C \,,\,\, \forall S \subset \F \setminus C
        \spaceornewline{\,\,,\,\,\,}
        v\bigl(S \cup C\bigr) = 1 \,\,\,\, \land \,\,\,\, v\bigl(S \cup (C \setminus \{i\})\bigr) = 0 \,.
    $$
    
    This implies that all the features in $C$ will have the same effect on the model prediction if removed:
    $$
    \forall i, j \in C : i \neq j\,, \,\,x_i' \neq \x_i\,, \,\,x_j' \neq \x_j \,, \spaceornewline{\quad}
    \forall S \subseteq \F \setminus \set{i, j} \,,\,\,\, v(S \cup \set{i}) = v(S \cup \set{j}) \,.
    $$
    This is the definition of symmetric players (features) therefore, by the \emph{symmetry} axiom of Shapley values, it follows that:
    \begin{equation}\tag{B}\label{eq:proof:symmetry}
        \phihat^{\xp}_i = \phihat^{\xp}_j, \forall i, j \in C \,.
    \end{equation}
    
    Then \ref{eq:proof:ones} follows from (\ref{eq:proof:symmetry}) and (\ref{eq:proof:efficiency}).

    The thesis then follows.
\end{proof}
\end{shapcfxconnectiontheorem}

\clearpage
\begin{shapcfxconnectionbalanced}
\shapcfxconnectionbalancedtext
\begin{proof}

    % Let's start by recalling that SHAP values calculation can be decomposed in the calculation of the SHAP values of single-reference games \citep{Merrick2020}:
    % $$\phihat_i =  \E_{\xp \sim \Xp}[\phihat^{\xp}_i]
    %     \spaceornewline{\quad}
    %     \text{ where } \quad \phihat^{\xp}_i = \sum_{S \subseteq \F \setminus \{ i \}} w(S) \left[ v_{\xp}(S \cup \{i \}) - v_{\xp}(S) \right]
    %     \spaceornewline{\,\,,\,\,\,}
    %     v_{\xp}(S) = F\left(\tuple{\x_{S}, \xp_{\F \setminus S}}\right)$$

    We note that proving the thesis is equivalent to proving the following.
    \begin{equation}\tag{$\bigtriangleup$}\label{eq:proof:single_games_b}
        \phihat_i = \psihat_i
        \quad \forall i \in \F
    \end{equation}
    
    Let's now consider the set of features that have been modified $C$ and its complement $\bar{C}$:
    $$
        {C} = \left\{ i \in \F : x'_i \neq x_i \right\} \quad\quad \bar{C} = \left\{ i \in \F : x'_i = x_i \right\}
    $$
    Proving \ref{eq:proof:single_games_b} is then equivalent to prove that $\forall \x' \in \Xp$:
    \begin{enumerate}[label=(\arabic*)]
    \item\label{eq:proof:zeros_b} $\forall i \in \bar{C}$ ,\, $\phihat^{\xp}_i = \psivhat^{\xp}_i = 0$;
    \item\label{eq:proof:ones_b1} $\forall i \in C$ ,\, $\phihat_i^{\xp} = \psivhat^{\xp}_i = 1$ if $|C| = 1$.
    \item\label{eq:proof:ones_b2} $\forall i \in C$ ,\, $\phihat_i^{\xp} = \psivhat^{\xp}_i = 1 / \card{C}$ if $|C| > 1$.
    \end{enumerate}
    We will now proceed by first proving \ref{eq:proof:zeros_b}.

    It is trivial to observe that for all the features in $\bar{C}$ removing them has no effect because their value is equal in both the query instance and the counterfactual:
    $$
        \forall i \in \bar{C} \,,\,\, \forall S \subseteq \F \setminus \set{i} \,,\,\,\, v_{\xp}\left(S \cup \set{i}\right) - v_{\xp}\left(S\right) = 0 .
    $$
    Coincidentally, this is the definition of a \emph{null-player} (feature) therefore, by the null-player property of Shapley values, it follows that $\phiv^{\xp}_i = 0, \forall i \in \bar{C}$. This proves \ref{eq:proof:zeros_b}.

    In order to prove \ref{eq:proof:ones_b1}, let us now recall that by the \emph{efficiency} axiom of Shapley values, the attributions of the features must add up to the prediction of the model. Therefore --- since the features in $C$ are the only one with non-zero attribution --- it follows that:
    \begin{equation}\tag{A}\label{eq:proof:efficiency_b}
        \sum_{i \in C} \phihat_i^{\xp} = 1 \,.
    \end{equation}
    then \ref{eq:proof:ones_b1} trivially follows from (\ref{eq:proof:efficiency_b}).
    
    In order to prove \ref{eq:proof:ones_b2}, let us now recall that Shapley values can be computed using the Harsanyi dividends \citep{Harsanyi1958}:
    \begin{equation}\tag{$\bigcirc$}\label{eq:harsanyi_shapley}
        \phihat_i = \sum_{\substack{S \in 2^{\F} \setminus \emptyset\\ i \in S}} \frac{\Delta_S}{|S|}
    \end{equation}
    where $\Delta_S$, called \emph{Harsanyi dividends}, are defined recursively as follows:
    \begin{equation}\tag{$\otimes$}\label{eq:harsanyi}
        \Delta_S = \begin{cases} 
            v(S) & \text{if } |S| = 1\\\ 
            v(S) - \sum_{T \subset S } \Delta_T & \text{otherwise}
        \end{cases}.
    \end{equation}
    
    Let us also recall that equal maximal sparsity requires all the counterfactuals $\xp$ to be such that:
    $$ 
     % \begin{matrix}
     |C| = 1 \,\,\, \lor \,\,\, \forall i,j \in C ,
     \sum_{\substack{S \in \WMS(\F)\\ i \in S}} \frac{\xi(S)}{|S|} = 
     \sum_{\substack{S \in \WMS(\F)\\ j \in S}} \frac{\xi(S)}{|S|}\\[5pt]
     $$
     where $\xi : 2^{\F} \rightarrow \R$ is defined as follows:
    $$
    \xi(S) = 
    \begin{cases}
        1                                   &  \text{ if } S \in \MS(\F)\\
        1 - \sum_{T \in \WMS(S)} \delta_T   & \text{ otherwise}
    \end{cases}
    $$
    and where we recall that:
    \begin{itemize}
        \item $\MS(S)$ is the set of all the (proper or improper) subsets of $S$ that give rise to a maximally sparse counterfactual:
        $$\MS(S) = \{ T \subseteq S : \tuple{ \x_{\bar{T}},\xp_{T} } \text{ is maximally sparse} \} \, \text{;}$$
        
        \item $\WMS(S)$ is the set of all the (proper or improper) subsets of $S$ that give rise to a weak maximally sparse counterfactual:
        $$\WMS(S) = \{ T \subseteq S : \tuple{ \x_{\bar{T}},\xp_{T} } \text{ is weakly maximally sparse} \} \, \text{.}$$
    \end{itemize}

    We note that if we prove that:
    \begin{enumerate}[label=C.\Roman*]
    \item \label{eq:proof:harsones} $\Delta_S = 1 \quad \forall S \in \MS(\F)$
    \item \label{eq:proof:harszeros} $\Delta_S = 0 \quad \forall S \in \F \setminus \WMS(\F)$
    \end{enumerate}
    then, by \cref{def:equal_maximal_sparsity} and the definition of Shapley values with the Harsanyi dividends (\ref{eq:harsanyi_shapley}), \ref{eq:proof:ones_b2} follows and, in turn, the thesis follows.

    Let's then prove (\ref{eq:proof:harsones}). If $S \in \MS(\F)$, by the definition of the set $\MS(\F)$, it holds that $\tuple{ \x_{\F \setminus S}, \xp_{S} }$ is a maximally sparse counterfactual. Therefore, it trivially follows, from \cref{def:maximally_sparse}, that:
    $$
        v(S) = 1 \quad \text{and} \quad \forall T \subset S \,,\,\, v(T) = 0
    $$
    Then, by the Harsanyi dividends definition (\ref{eq:harsanyi}), (\ref{eq:proof:harsones}) follows.

    We now prove \ref{eq:proof:harszeros}. If $S \in \F \setminus \WMS(\F)$ and $S \subset T : T \in \MS(\F)$, then it trivially follows that $\Delta_S = 0$.
    
    If that is not the case, then $S \supset T : T \in MS(S)$ and it must contain at least a feature $i \in S$ such that $i$ is spurious, or in more formally such that:
    $$
        \forall i \in \bar{C} \,,\,\, \forall S \subseteq \F \setminus \set{i} \,,\,\,\, v_{\xp}\left(S \cup \set{i}\right) - v_{\xp}\left(S\right) = 0 .
    $$ 
    Note that this is the definition of a \emph{null-player}. Therefore, by Remark 4 in \citet{Dehez2017} --- stating that ``a player is null iff the dividends associated to coalitions containing that player are all equal to zero.'' --- \ref{eq:proof:harszeros} follows.

    Then the thesis follows.
\end{proof}
\end{shapcfxconnectionbalanced}

\begin{shapcfxconnectionbalancedcorr}
    \shapcfxconnectionbalancedcorrtext
    \begin{proof}
    The corollary follows trivially from \cref{theo:connection_balance_maximal_sparsity}.
    \end{proof}
\end{shapcfxconnectionbalancedcorr}

\subsection{Sparsity}

As mentioned in \cref{sec:connection}, the different notions of sparsity of counterfactuals defined in this paper are theoretically connected between each others. In particular, maximal sparsity implies equal maximal sparsity that, in turn, implies weak maximal sparsity. We now formally prove such relationships between these three properties of counterfactuals that we defined in \cref{sec:connection}.

\begin{proposition}[Maximal Sparsity $\Rightarrow$ Weak Maximal Sparsity]\label{prop:maximal_to_weak}
    If $\xp$ is maximally sparse then $\xp$ is also weakly maximally sparse. 
    And more in general, if for any $T \subseteq \F$, if $S \in \MS(T)$ then $S \in \WMS(T)$.
\end{proposition}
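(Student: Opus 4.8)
The plan is to exploit the fact that maximal sparsity is a \emph{universal} statement over the changed features whereas weak maximal sparsity is only an \emph{existential} one, so the former immediately supplies the witness required by the latter. Both definitions share the set of changed features $C = \{i \in \F : x_i \neq x_i'\}$. For the first claim I would assume $\xp$ is maximally sparse and fix an arbitrary $i \in C$; to establish weak maximal sparsity at $i$ (\cref{def:weak_maximal_sparsity}) I must produce some $S \subseteq C \setminus \{i\}$ with $F\round{\tuple{\x_{S \cup \{i\}}, \xp_{\bar{S} \setminus \{i\}}}} \neq F(\xp)$. I would simply take $S = \emptyset$, which is trivially a subset of $C \setminus \{i\}$: then $S \cup \{i\} = \{i\}$ is a nonempty subset of $C$, and instantiating the maximal-sparsity condition (\cref{def:maximally_sparse}) at this singleton gives precisely $F\round{\tuple{\x_{\{i\}}, \xp_{\F \setminus \{i\}}}} \neq F(\xp)$. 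As $i \in C$ was arbitrary, $\xp$ is weakly maximally sparse.

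For the more general statement I would reduce to the base case. Membership $S \in \MS(T)$ means, by the definition of the operator $\MS(\cdot)$ introduced in the proof of \cref{theo:connection_balance_maximal_sparsity}, that the point $\tuple{\x_{\bar{S}}, \xp_S}$ is itself a maximally sparse counterfactual. Applying the first claim verbatim to this sub-counterfactual shows that it is weakly maximally sparse, which is exactly the condition defining $S \in \WMS(T)$. Hence $\MS(T) \subseteq \WMS(T)$, and the special case $T = \F$ recovers the first statement.

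The only real difficulty here is notational rather than mathematical: one must unwind the nested notation --- the spliced point $\tuple{\x_S, \xp_{\bar{S}}}$, its restriction to the changed set $C$, and the subset operators $\MS(\cdot)$ and $\WMS(\cdot)$ --- carefully enough to recognise that the empty set is an admissible witness and that reverting the single feature $\{i\}$ is just one instance of the maximal-sparsity quantifier. No computation is needed once the definitions are aligned.
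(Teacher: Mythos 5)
Your proof is correct and takes essentially the same approach as the paper, which dismisses the result as following ``trivially'' from \cref{def:weak_maximal_sparsity}: your choice of $S = \emptyset$ as the existential witness, so that the singleton $\{i\}$ instantiates the universal quantifier of \cref{def:maximally_sparse}, is precisely the unwinding of definitions the paper leaves implicit. The generalisation to $\MS(T) \subseteq \WMS(T)$ by applying the base case to the spliced point $\tuple{\x_{\bar{S}}, \xp_{S}}$ is likewise the intended reading, so no gap remains.
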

\begin{proof}
    The result follows trivially from \cref{def:weak_maximal_sparsity}.
\end{proof}

\begin{proposition}[Maximal Sparsity $\Rightarrow$ Equal Maximal Sparsity]\label{prop:maximal_to_equal}
    If $\xp$ is maximally sparse then $\xp$ is equal maximally sparse. % And more in general, for any $T \subseteq \F$, if $S \in \MS(T)$ then $\tuple{\x_{\bar{S}}, \xp_{S}}$ is equal maximally sparse.
\end{proposition}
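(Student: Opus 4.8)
The plan is to prove the statement directly from the definitions, reducing everything to a single structural fact about the family $\WMS(\F)$ under maximal sparsity. First I would dispose of the case $|C| = 1$, for which equal maximal sparsity holds immediately via the first disjunct of \cref{def:equal_maximal_sparsity}, and assume $|C| > 1$ henceforth.

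The key step is to establish the claim that, when $\xp$ is maximally sparse, $\WMS(\F) = \set{ S \subseteq \F : C \subseteq S }$, i.e.\ every weakly-maximally-sparse subset necessarily contains \emph{all} the modified features. For the inclusion $\supseteq$, note that for any $S \supseteq C$ the point $\tuple{\x_{\bar S}, \xp_S}$ coincides with $\xp$ itself: it takes the counterfactual values on $C$ and the query values elsewhere, and $\xp$ already agrees with $\x$ outside $C$ by definition; hence $\tuple{\x_{\bar S}, \xp_S} = \xp$, which is weakly maximally sparse by \cref{prop:maximal_to_weak}, so $S \in \WMS(\F)$. For the inclusion $\subseteq$, suppose $C \not\subseteq S$; then the reverted set $C \setminus S$ is a nonempty subset of $C$, so maximal sparsity gives $F(\tuple{\x_{\bar S}, \xp_S}) = F(\tuple{\x_{C \setminus S}, \xp_{\overline{C \setminus S}}}) \neq F(\xp)$, meaning $\tuple{\x_{\bar S}, \xp_S}$ is not even a valid counterfactual and therefore cannot lie in $\WMS(\F)$.

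Given the claim, the proof finishes in one line: since every $S \in \WMS(\F)$ contains $C$, each index $i \in C$ satisfies $i \in S$ for \emph{all} $S \in \WMS(\F)$, so the restricted summation domain $\set{S \in \WMS(\F) : i \in S}$ equals the whole of $\WMS(\F)$ independently of $i$. Consequently $\sum_{S \in \WMS(\F),\, i \in S} \xi(S)/|S|$ is the same number for every $i \in C$, which is precisely the equality required by \cref{def:equal_maximal_sparsity}; hence $\xp$ is equally maximally sparse. Notice that this argument never needs to evaluate $\xi$ at all — only that the index set over which one sums is identical for each modified feature.

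I expect the only genuine obstacle to be the $\subseteq$ direction of the claim, which hinges on translating between the two indexing conventions (reverting a subset of $C$ to query values, as in \cref{def:maximally_sparse}, versus keeping a subset of features at their counterfactual value, as in the definition of $\WMS(\F)$) and on checking that features outside $C$ never affect $F$ because $\x$ and $\xp$ agree there. An alternative, slicker route bypasses this bookkeeping entirely: applying \cref{theo:connection} to the singleton $\Xp = \set{\xp}$ yields $\phivhat = \psivhat$, and the reverse implication of \cref{theo:connection_balance_maximal_sparsity} then gives equal maximal sparsity directly. I would keep this as a cross-check but present the combinatorial argument as the primary proof, since it is self-contained and does not route a purely structural statement through the heavier SHAP-value machinery.
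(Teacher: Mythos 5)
Your proof is correct, and it follows the same basic strategy as the paper's own proof --- a direct structural analysis of the sparse-subset family under maximal sparsity --- but your key lemma is different and, in fact, sharper. The paper argues that maximal sparsity forces $\MS(\F) = \WMS(\F)$, substitutes $\xi(S) = 1$ on $\MS(\F)$, and then asserts that $\sum_{S \in \MS(\F),\, i \in S} 1/|S|$ ``is a constant''; that last assertion is the crux, and it is only true because under maximal sparsity every member of $\MS(\F)$ contains all of $C$ (the family collapses to $\{C\}$, up to adjoining unmodified features) --- a fact the paper uses implicitly but never states. Your lemma, that every $S \in \WMS(\F)$ contains $C$, supplies exactly that missing justification: it makes the index set $\{S \in \WMS(\F) : i \in S\}$ literally identical for every $i \in C$, so the two sums in \cref{def:equal_maximal_sparsity} coincide without ever evaluating $\xi$, and in this respect your write-up is more self-contained than the paper's. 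One minor caveat: whether $\WMS(\F)$ equals $\{S \subseteq \F : C \subseteq S\}$ (your claim) or just $\{C\}$ depends on whether the retained sets are allowed to range over all of $\F$ or only over $C$ --- the paper's appendix definition suggests the former, its running example the latter --- but this is immaterial to your argument, since under either convention every member contains $C$. Finally, your alternative route through \cref{theo:connection} and the reverse direction of \cref{theo:connection_balance_maximal_sparsity} is also valid and non-circular, since both of those results are proved independently of this proposition; keeping it as a cross-check rather than the primary argument is the right call.
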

\begin{proof}
    By \cref{def:maximally_sparse,def:weak_maximal_sparsity}, it follows that if $\x$ is maximally sparse than $\MS(\F) = \WMS(F)$.

    Therefore the following holds $\forall i \in C$ where $C = \{ i \in \F : x_i \neq x_i' \}$:
    $$
         \sum_{\substack{S \in \WMS(\F)\\ i \in S}} \frac{\xi(S)}{|S|} = \sum_{\substack{S \in \MS(\F)\\ i \in S}} \frac{\xi(S)}{|S|}
    $$
    Also, by \cref{def:equal_maximal_sparsity}, we can substitute $\xi(S)$, therefore:
    $$
         \sum_{\substack{S \in \WMS(\F)\\ i \in S}} \frac{\xi(S)}{|S|} = \sum_{\substack{S \in \MS(\F)\\ i \in S}} \frac{1}{|S|}
    $$
    which is a constant, thus the thesis follows.
\end{proof}

\begin{proposition}[Equal Maximal Sparsity $\Rightarrow$ Weak Maximal Sparsity]\label{prop:equal_to_weak}
    If $\xp$ is equal maximally sparse then $\xp$ is weak maximally sparse. % And more in general, for any $T \subseteq \F$, if $\tuple{\x_{\bar{S}}, \xp_{S}}$ is equal maximally sparse then $S \in \MS(T)$.
\end{proposition}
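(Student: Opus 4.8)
The plan is to prove the contrapositive in the non-degenerate case and to reduce \emph{weak maximal sparsity} to a statement about null players, for which the game-theoretic machinery already set up in \cref{theo:connection,theo:connection_balance_maximal_sparsity} applies verbatim. First I would dispose of the degenerate case $\card{C} = 1$ on its own: if only one feature is changed, then reverting it returns the query instance, so $F\round{\tuple{\x_{C}, \xp_{\notC}}} = F(\x) \neq F(\xp)$, and the single changed feature satisfies the condition of \cref{def:weak_maximal_sparsity}; hence $\xp$ is weakly maximally sparse irrespective of the hypothesis. Note that $\card{C}=1$ is also the first disjunct of \cref{def:equal_maximal_sparsity}, so the two notions coincide trivially here.

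The substance of the argument treats $\card{C} > 1$ and proceeds by contraposition. Suppose $\xp$ is \emph{not} weakly maximally sparse. By \cref{def:weak_maximal_sparsity} this means there is a changed feature $i \in C$ whose reversion never alters the prediction in any coalition; that is, $i$ is a null player (a \emph{spurious} feature, in the terminology of \cref{sec:feature_removal_strategy}) in the single-reference voting games analysed in \cref{sec:discuss-game-theory}. I would then import two facts established inside the proof of \cref{theo:connection_balance_maximal_sparsity}: (i) the weights $\xi(S)$ appearing in \cref{def:equal_maximal_sparsity} are exactly the Harsanyi dividends $\Delta_S$, which vanish outside $\WMS(\F)$ and equal $1$ on $\MS(\F)$; and (ii) by the null-player characterisation of \citet{Dehez2017}, a null player $i$ satisfies $\xi(S) = \Delta_S = 0$ for every coalition $S \ni i$. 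Consequently the equal-maximal-sparsity sum for $i$ collapses to $0$.

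To reach a contradiction I would exhibit a feature whose corresponding sum is strictly positive. Since $\xp$ is a valid counterfactual, a maximally sparse sub-counterfactual always exists, so $\MS(\F) \neq \emptyset$; choosing $S^{\ast} \in \MS(\F)$ and any $j \in S^{\ast}$, the term $\xi(S^{\ast})/\card{S^{\ast}} = 1/\card{S^{\ast}} > 0$ contributes to the sum for $j$. Via the Harsanyi--Shapley identity $\phihat_j = \sum_{S \ni j} \Delta_S / \card{S}$ together with fact (i), that sum equals the Shapley--Shubik value $\phihat_j$ of a non-null player in a monotone voting game, which is strictly positive. Thus the sums for $i$ and $j$ differ, contradicting \cref{def:equal_maximal_sparsity}. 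Equivalently, and more compactly, equal maximal sparsity forces all $\phihat_i$ with $i \in C$ to coincide, and the efficiency property (as used in the proof of \cref{theo:connection}) pins their common value to $1/\card{C} > 0$, which is irreconcilable with the null player $i$ having $\phihat_i = 0$.

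I expect the main obstacle to be bookkeeping rather than conceptual: carefully matching the existential quantifier of \cref{def:weak_maximal_sparsity} to the null-player definition, so that failure of weak maximal sparsity genuinely yields a \emph{null} feature (one with zero marginal contribution in every coalition) and not merely a non-pivotal one; and justifying that the remaining coalitional contributions to $\phihat_j$ cannot cancel the positive term $1/\card{S^{\ast}}$. The latter is guaranteed by the monotonicity of the single-reference voting games and the non-negativity of the Shapley--Shubik index, both of which follow from the structure established in \cref{sec:discuss-game-theory}.
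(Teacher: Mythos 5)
Your proposal is correct and, in its compact form --- equal maximal sparsity together with \cref{theo:connection_balance_maximal_sparsity} forces every changed feature to receive the common attribution $1/\card{C} > 0$, while a spurious feature witnessing the failure of weak maximal sparsity is a null player and must receive $0$ --- it is exactly the paper's proof of \cref{prop:equal_to_weak}, down to the ad absurdum structure and the appeal to the null-player property. The one caveat concerns your longer Harsanyi-dividend route: the single-reference games need not be monotone for an arbitrary classifier, so non-negativity of the Shapley--Shubik index cannot be assumed there, but this step is dispensable since efficiency (which you also invoke) already pins the common value to $1/\card{C}$ and yields the contradiction.
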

\begin{proof}
    Let's consider $C = \{ i \in \F : \x_i \neq x_i' \}$. If $|C| = 1$ then the thesis follows trivially. 
    If instead $|C| > 1$ and $\xp$ is equally maximally sparse, by \cref{theo:connection_balance_maximal_sparsity}, it follows that:
    $$\phihat_i = \frac{1}{|C|}$$
    
    Now, if we assume, ad absurdum, $\xp$ is not weakly maximally sparse, then it means that $C$ contains at least a spurious features which gets a non-zero feature attribution.
    This is absurd given that $\phihat_i$ is a Shapley value and thus satisfy the \emph{null-property} of Shapley values, by which a null-player (spurious feature) always get zero attribution.
\end{proof}

\subsection{Additional Results}

In \cref{sec:efficiency} we mentioned how the efficiency property \citep{Peters2008} can be reduced to a simpler form for \textsc{Bin-CF-SHAP} feature attributions. We now formally prove such result in \cref{prop:efficiency}.

\begin{proposition}\label{prop:efficiency}
    In the context of \textsc{Bin-CF-SHAP}, the efficiency property of Shapley values simplifies to the following expression.
    $$
    \sum_{i \in \F} \phihat_i = 1
    $$
\end{proposition}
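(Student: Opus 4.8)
The plan is to invoke the efficiency axiom of Shapley values directly and then evaluate the two boundary values of the characteristic function. Recall that for any Shapley-value attribution with characteristic function $v$, efficiency gives $\sum_{i \in \F} \phihat_i = v(\F) - v(\emptyset)$. So the whole proposition reduces to computing $v(\F)$ and $v(\emptyset)$ for the \textsc{Bin-CF-SHAP} game of \cref{def:binary-cf-shap}, and showing they equal $1$ and $0$ respectively.

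First I would evaluate the grand coalition. By definition $v(\F) = \E_{\xp \sim \Xp}[F(\tuple{\x_{\F}, \xp_{\emptyset}})]$, and since $\tuple{\x_{\F}, \xp_{\emptyset}}$ takes all feature values from $\x$, this is simply $\E_{\xp \sim \Xp}[F(\x)] = F(\x)$. Using the standing assumption made in \cref{sec:background} that $F(\x) = 1$, I conclude $v(\F) = 1$.

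Next I would evaluate the empty coalition. Again by definition $v(\emptyset) = \E_{\xp \sim \Xp}[F(\tuple{\x_{\emptyset}, \xp_{\F}})]$, and since $\tuple{\x_{\emptyset}, \xp_{\F}}$ takes all feature values from $\xp$, this equals $\E_{\xp \sim \Xp}[F(\xp)]$. Here I would use that every $\xp \in \Xp$ is a counterfactual, hence valid, so $F(\xp) \neq F(\x) = 1$, i.e.\ $F(\xp) = 0$ for all $\xp$. Thus the expectation vanishes and $v(\emptyset) = 0$.

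Combining the two evaluations yields $\sum_{i \in \F} \phihat_i = v(\F) - v(\emptyset) = 1 - 0 = 1$, which is the claim. There is no genuine obstacle here; the only points requiring care are making the boundary substitutions $\tuple{\x_{\F}, \xp_{\emptyset}} = \x$ and $\tuple{\x_{\emptyset}, \xp_{\F}} = \xp$ explicit, and invoking counterfactual validity to pin down $F(\xp) = 0$, which in turn relies on the without-loss-of-generality assumption $F(\x) = 1$.
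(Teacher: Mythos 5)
Your proposal is correct and follows essentially the same route as the paper's own proof: both invoke the efficiency axiom $\sum_{i \in \F} \phihat_i = v(\F) - v(\emptyset)$, evaluate the boundary coalitions of the \textsc{Bin-CF-SHAP} characteristic function to get $v(\F) = F(\x) = 1$ (via the standing assumption) and $v(\emptyset) = \E_{\xp \sim \Xp}[F(\xp)] = 0$ (via counterfactual validity), and conclude. If anything, your write-up is slightly more explicit about the boundary substitutions than the paper's version.
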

\begin{proof}
Let's recall that the efficiency property of Shapley values requires: \citep{Shapley1951}
$$
    \sum_{i \in \F} \phi_i= v(\x) - v(\emptyset)
$$
where $v$ is the characteristics function of the game for which we are computing Shapley values.

In particular, in the context of \textsc{Bin-CF-SHAP} values for which the characteristics function is defined as follows (see \cref{def:binary-cf-shap}):
$$
    v(S) = \E_{\xp \sim \Xp}[ F\left(\tuple{\x_{S}, \xp_{\F \setminus S}}\right)]
$$
the efficiency property simplifies to the following expression:
$$
    \sum_{i \in \F} = F(\x) - \E_{\x' \in \Xp}[F(\x')] .
$$
Since all $\x$ is the query instance, and $\xp \in \Xp$ are counterfactuals, by \cref{def:binary-cf-shap}, then it follows that:
$$
     \sum_{i \in \F} = 1 - 0  = 1.
$$
which proves the thesis.
\end{proof}

%%%%%%%%%%%%%%%%%%%%%%%%%%%%%%%%%%%%%%%%%%%%%%%%%%%%%%%%%%%%%%%%%%%%%%%%%%%%%%%%%%%%%%%%%%%%%%%%%%%%
%%%%%%%%%%%%%%%%%%%%%%%%%%%%%%%%%%%%%%%%%%%%%%%%%%%%%%%%%%%%%%%%%%%%%%%%%%%%%%%%%%%%%%%%%%%%%%%%%%%%
%%%%%%%%%%%%%%%%%%%%%%%%%%%%%%%%%%%%%%%%%%%%%%%%%%%%%%%%%%%%%%%%%%%%%%%%%%%%%%%%%%%%%%%%%%%%%%%%%%%%
\clearpage
\section{Experimental Setup}\label{appendix:setup}

\begin{table*}[ht!]
    %auto-ignore
\begin{center}
    \begin{small}
        \begin{tabular}{rcccccccc}
            \toprule
                \multirow{2}{*}{Dataset} &
                \multicolumn{3}{c}{Size} &
                Decision &
                \multicolumn{3}{c}{Model Performance$^{\dagger}$}\\
                & Features\! & \!Train Set\! & \!Test Set & Threshold$^{*}$ & ROC-AUC\! & \!Recall\! & \!Accuracy$^{\ddagger}$\! \\
            \midrule
                % \!\textbf{GMSC} (Give Me Some Credit) \cite{GMSC}          & 7 & 99,673 & 42,717       & 0.4905 & 83.9\% & 80.4\% & 73.2\%  \\
                \!\textbf{HELOC} (Home Equity Line Of Credit)  & 23 & 6,909 & 2,962        & 0.4993 & 80.1\% & 72.8\% & 73.0\% \\
                \!\textbf{Lending Club} (Loan Data)  & 20 & 961,326 & 411,998    & 0.6367 & 69.6\% & 62.5\% & 72.17\% \\ 
                % \!\textbf{WINE} (UCI Wine Quality) \cite{Cortez2009}       & 11 & 3,428 & 1,470        & 0.4614 & 83.2\% & 80.7\% & 78.2\% \\
                \!\textbf{Adult} (1994 US Census Income)       & 12 & 22,792 & 9,769        & 0.6811 & 92.3\% & 72.9\% & 86.6\% \\
            \bottomrule
        \end{tabular}
    \end{small}
\end{center}
    \protect\caption{
    Characteristics of the datasets and models used in the experiments.
    ($*$) The decision threshold is reported here in probability space (i.e., after passing the model output through a sigmoid);
    ($\dagger$) performance metrics are computed on the test set.
    % ($\ddagger$) we note that the AUC-ROC and recall are better suited metrics for this applications (i.e., a ``bad'' customer being accepted is a more undesirable outcome than a ``good'' customer being rejected).
    }
    \label{table:datasets}
\end{table*}

%%%%%%%%%%%%%%%%%%%%%%%%%%%%%%%%%%%%%%%%%%%%%%%%%%%%%%%%%%%%%%%%%%
\subsection{Datasets and Models}
To run the experiments we used 3 publicly available datasets. Table~\ref{table:datasets} describes in details the datasets.

We split the data using a stratified $70/30$ \emph{random} train/test split for HELOC and Adult. For Lending Club we split the data using a non-random $70/30$ train/test split based on the loan issuance date (available in the original data).

We trained an XGBoost model \citep{XGBoost} for each dataset. In particular, we run an hyperparameters search using Bayesian optimization using hyperopt~\citep{Hyperopt} for $1000$ iterations maximizing the average validation ROC-AUC under a 5-fold cross validation. To reduce model over-parameterization during the hyper-parameters optimization we penalized high model variance, i.e., for each cross-validation fold, instead of using $AUC_{val}$, we used $AUC_{val} + (AUC_{val} - AUC_{train})$ where $AUC_{train}$ and $AUC_{val}$ are the training and validation ROC-AUC, respectively.

To compute the decision threshold ($t$) we used a value such that the rate of positive prediction of the model (on the training set samples) was the same as the true rate of positive predictions (on the same samples).
%the ROC-AUC curve: we maximized the sum of the false positive rate (fall-out) and true positive rate (recall).
Table~\ref{table:datasets} shows the decision threshold and the performance of each model.

%%%%%%%%%%%%%%%%%%%%%%%%%%%%%%%%%%%%%%%%%%%%%%%%%%%%%%%%%%%%%%%%%%
\subsection{Feature Importance}
We used the following implementation in order to compute explanations.
\begin{itemize}
    \item \textbf{\textsc{SHAP}}. 
    We used the TreeSHAP implementation \citep{Lundberg2020} available through the \texttt{TreeExplainer} class in the \texttt{shap} package\footnote{The \texttt{shap} package can be found at \url{https://github.com/slundberg/shap}} (for Python).
    \item \textbf{\textsc{CF-SHAP}}. 
    We used CF-SHAP \citep{Albini2022} available through the \texttt{CFExplainer} class in the \texttt{cfshap} package\footnote{The \texttt{cfshap} package can be found at \url{https://github.com/jpmorganchase/cf-shap}} (for Python).
    \item \textbf{\textsc{Binary CF-SHAP}}. We used CF-SHAP in combinations with the KernelSHAP \citep{Lundberg2017} implementation available through the \texttt{KernelExplainer} class in the \texttt{shap} package. We used $10,000$ kernel samples to generate the KernelSHAP approximation. We note that, since we used KernelSHAP, the resulting {\textsc{Binary CF-SHAP}} explanations that we generated are an approximation of the exact Shapley values.
    \item \textbf{\textsc{CF-Freq}}. Given its simplicity, we implemented from scratch the explanation logic following the explanation definition \citep{Sharma2020,Mothilal2021}.
    \item \textbf{\textsc{Normalised CF-Freq}}. We implemented from scratch the explanation logic similarly to \textsc{CF-Freq}.    
\end{itemize}
We also remark, as mentioned in \cref{sec:background}, that we used the ``true-to-the-model'' interventional (a.k.a., non-conditional) version of SHAP (default setting of \texttt{shap} and \texttt{cfshap}).

\begin{algorithm*}[!th]
   \caption{Depth-first search-based algorithm to induce a maximally sparse counterfactual from any counterfactual}
   \label{alg:maxsparse}
\begin{algorithmic}
    \State \textbf{\textsc{MaxSparse}}($\x$, $\xp$, $\F$, $F$)
    \State {\bfseries Input:} query instance $\x$, counterfactual $\xp$, set of all features $\F$, $F$ model decision function
    \State $C = \{ i \in \F : x_i \neq x_i' \}$ $\quad\quad \triangleright$ Let's isolate the features that have been modified.
    \State $Fail = \{\} \quad\quad \triangleright$ Let's create a set for failed trials.
    \State $Succ = \{ C \} \quad\quad \triangleright$ Let's create a set for the successful trials.
    \State \textsc{MaxSparseRecurse}($\x$, $\xp$, null, $C$, $Succ$, $Fail$, $F$) $\quad\quad \triangleright$ Let's run the search recursively.

    \State $\triangleright$ We now select the maximally sparse counterfactual with minimum cost
    \State $\xp = $ null
    \State $c' = \infty$
    \For{$\bm{x}'' \in Succ$}
        \State $c'' =$ \textsc{cost}($\x$, $\bm{x}''$) $\quad\quad\triangleright$ We compute the cost/proximity of the counterfactual
        \If{$c'' < c'$ \textbf{or} $\xp$ \textbf{is} null}
            \State $\xp = \bm{x}''$
            \State $c' = c''$
        \EndIf
    \EndFor
    \State \textbf{Return} $\xp$
    
    \hfill\hfill
    
    \State \textbf{\textsc{MaxSparseRecurse}}($\x$, $\xp$, $\xp^P$, $C$, $Succ$, $Fail$, $F$)
    \State {\bfseries Input:} query instance $\x$, counterfactual $\xp$, parent of the counterfactual $\xp^P$, set of features $C$, successful trials $Succ$, failed trials $Fail$
    \If{$F(\xp) \neq F(\x)$}
        \State $\triangleright$ We remove the parent and add the current $\xp$ to the successful trials.
        \If{$\xp^P \in Succ$}
            \State $Succ.remove(\xp^P)$
        \EndIf
        \State $Succ.add(\xp)$
        \State $\triangleright$ We now expand the search recursively by removing one more feature from the counterfactual.
        \For{$i \in C$}
            \State $\C' = C \setminus \{ i \}$
            \State $\bm{x''} = \xp$
            \State $x_i'' = x_i$
            \If{$\bm{x''} \notin Fail \land \bm{x''} \notin Succ$}
                \State \textsc{MaxSparseRecurse}($\x$, $\bm{x}''$, $\xp$, $C'$, $Succ$, $Fail$, $F$)
            \EndIf
        \EndFor
    \Else
        \State $Fail.add(C)$
    \EndIf
\end{algorithmic}
\end{algorithm*}

%%%%%%%%%%%%%%%%%%%%%%%%%%%%%%%%%%%%%%%%%%%%%%%%%%%%%%%%%%%%%%%%%%
\subsection{Counterfactuals}
To compute the $K$-nearest neighbours we used the implementation available in \texttt{sklearn.neighbours}. To make our results indifferent to the size of the dataset we limited the $k$-nearest neighbours to be selected among a random sample of $10,000$ samples from the training set.

%%%%%%%%%%%%%%%%%%%%%%%%%%%%%%%%%%%%%%%%%%%%%%%%%%%%%%%%%%%%%%%%%%
\subsection{Maximally Sparse Counterfactuals}\label{appendix:maximal_sparse_cfx_method}

In \cref{sec:connection} we mentioned how it is always possible to generate a maximally sparse counterfactual from any counterfactual by selecting a subset of the features $C = \{i \in \F : x_i \neq \x_i' \}$ that have been modified in the counterfactual. 
In particular, as described in \cref{sec:experiments}, to run our experiments we devised an exhaustive search based algorithm. The pseudo-code for such algorithm is in \cref{alg:maxsparse}.
At a high-level \cref{alg:maxsparse} computes a maximally sparse counterfactual from any counterfactual as follows:
\begin{itemize}%[itemsep=0pt,topsep=0pt]
    \item It explores all the possible subset of features $T \subset C$ using depth-first search (implemented through recursion).
    \item It prunes the search when it encounter a subset $\widehat{T}$ of features that does not give rise to a counterfactual; in this way it avoids to search any subset $Q \subset \widehat{T}$.
    \item After having generated $\MS(\F)$ --- the set of all the maximally sparse counterfactual that can be induced from $\xp$ --- it computes their cost based on a cost function provided by the user, and returns the counterfactual with the minimum cost.
    \item In our experiments, as mentioned in \cref{sec:experiments}, we used the \emph{total quantile shift} \citep{Ustun2019} as cost function for counterfactuals.
\end{itemize}

%%%%%%%%%%%%%%%%%%%%%%%%%%%%%%%%%%%%%%%%%%%%%%%%%%%%%%%%%%%%%%%%%%
\subsection{Technical setup}
The experiments were run using a \texttt{c6i.32xlarge} AWS virtual machine with 128 vCPUs (64 cores of 3.5 GHz 3rd generation Intel Xeon Scalable processor) and 256GB of RAM. XGBoost parameter \texttt{nthread} was set to \texttt{15}.
We used a Linux machine running \texttt{Ubuntu 20.04}. We used \texttt{Python 3.8.13}, \texttt{shap 0.39.0}, \texttt{cfshap 0.0.2}, \texttt{sklearn 1.1.1} and \texttt{xgboost 1.5.1}.

%%%%%%%%%%%%%%%%%%%%%%%%%%%%%%%%%%%%%%%%%%%%%%%%%%%%%%%%%%%%%%%%%%
\subsection{Source Code}
The source code to reproduce the experimental results in the paper will be made available at \url{https://www.emanuelealbini.com/cf-vs-shap-aies23}.

%%%%%%%%%%%%%%%%%%%%%%%%%%%%%%%%%%%%%%%%%%%%%%%%%%%%%%%%%%%%%%%%%%%%%%%%%%%%%%%%%%%%%%%%%%%%%%%%%%%%
%%%%%%%%%%%%%%%%%%%%%%%%%%%%%%%%%%%%%%%%%%%%%%%%%%%%%%%%%%%%%%%%%%%%%%%%%%%%%%%%%%%%%%%%%%%%%%%%%%%%
%%%%%%%%%%%%%%%%%%%%%%%%%%%%%%%%%%%%%%%%%%%%%%%%%%%%%%%%%%%%%%%%%%%%%%%%%%%%%%%%%%%%%%%%%%%%%%%%%%%%
\clearpage
\section{Experimental Results}

\subsection{Explanations Difference}\label{appendix:additional_difference}

In \cref{sec:experiments} we showed how the explanations generated using different techniques differ in terms of their average pairwise Kendall-Tau rank correlations \citep{kendall1990correlation}. \cref{fig:additional_difference_spearman,fig:additional_difference_feature_agreement,fig:additional_difference_rank_agreement,fig:additional_difference_rbo} show the same results for additional metrics commonly used in the literature to measure the difference between the rankings that feature importance explanations provide. 
In particular, we show the results for Feature Agreement \citep{Ghorbani2019a}, Rank Agreement \citep{Krishna2022a}, Spearman Rank Correlation \citep{Spearman1987} and Rank Biased Overlap \citep{Webber2010, Sarica2022}.

\textit{Results} - We note that:
\begin{itemize}%[itemsep=0pt,topsep=0pt]
\item In general, the results are consistent with the results in terms of Kendall-Tau correlation presented in \cref{fig:difference} in the main text.
\item The results in terms {feature and rank agreement} suggest that explanations tend to be more similar in their the top-3 features by importance than in their top-10. This is consistent with the literature \citep{Krishna2022a} that shows how different XAI techniques tend to agree more on the most important features when compared to those that are ranked as least important.
\end{itemize}

\begin{figure*}[!ht]
    \centering
     \begin{subfigure}[b]{\textwidth}
         \centering
         \includegraphics[width=.85\textwidth]{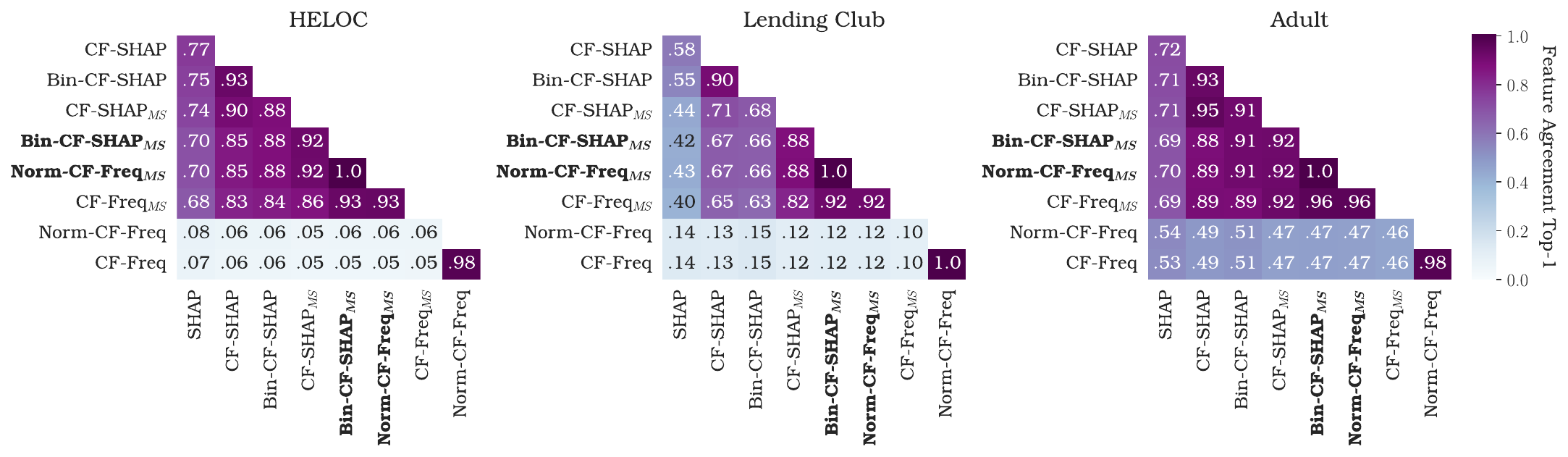}
         \caption{Top-1}
     \end{subfigure}
     \hfill
     \begin{subfigure}[b]{\textwidth}
         \centering
         \includegraphics[width=.85\textwidth]{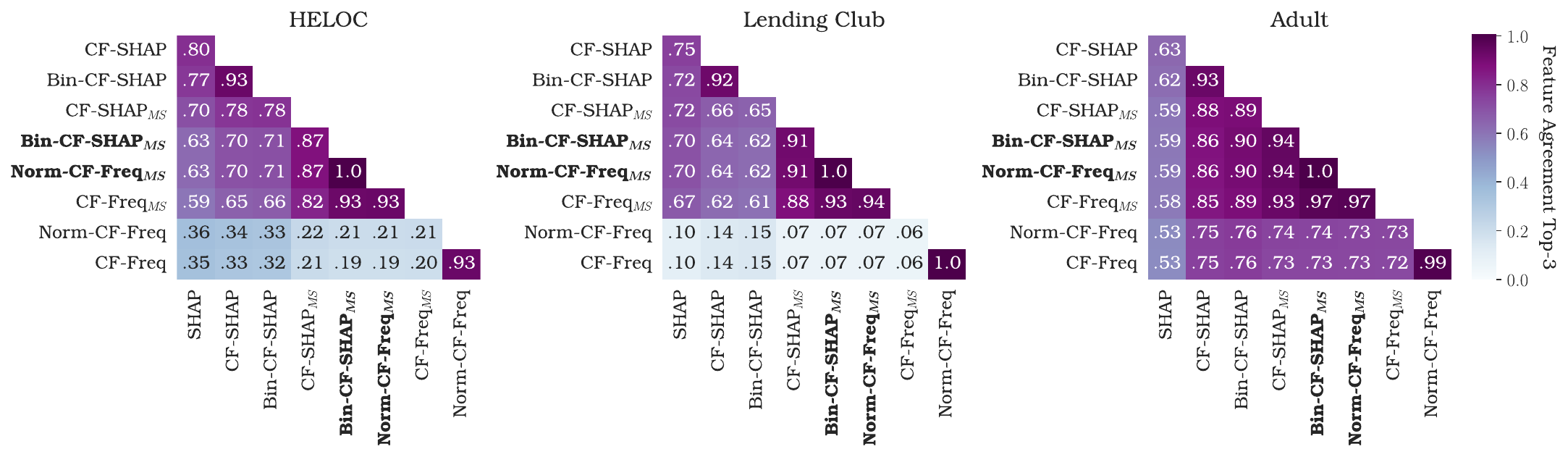}
         \caption{Top-3}
     \end{subfigure}
     \hfill
     \begin{subfigure}[b]{\textwidth}
         \centering
         \includegraphics[width=.85\textwidth]{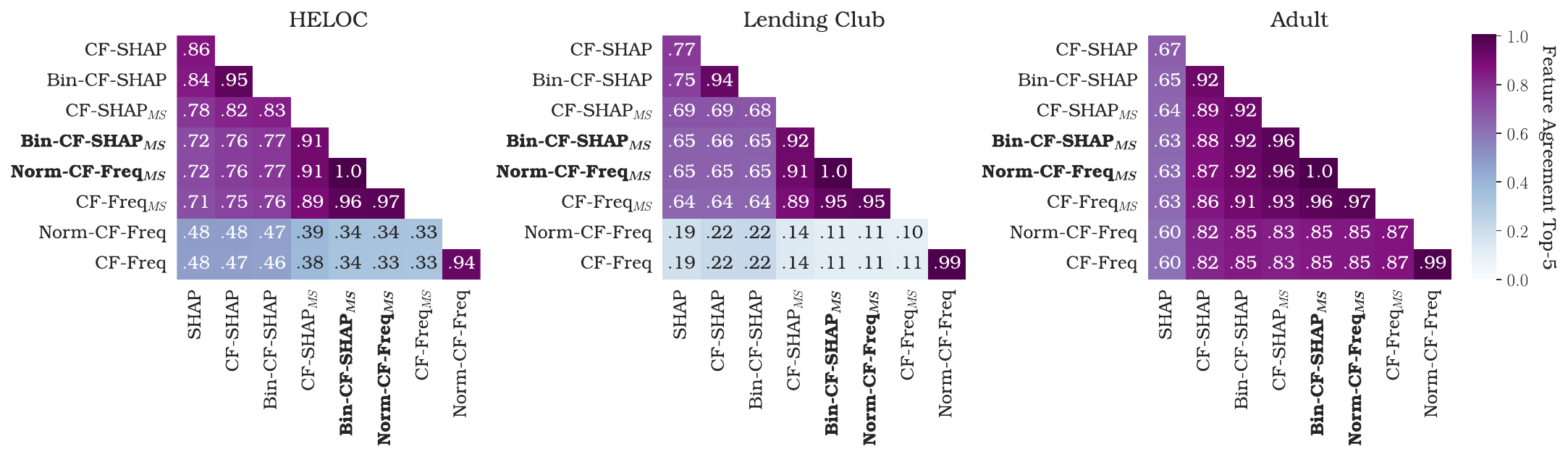}
         \caption{Top-5}
     \end{subfigure}
     \begin{subfigure}[b]{\textwidth}
         \centering
         \includegraphics[width=.85\textwidth]{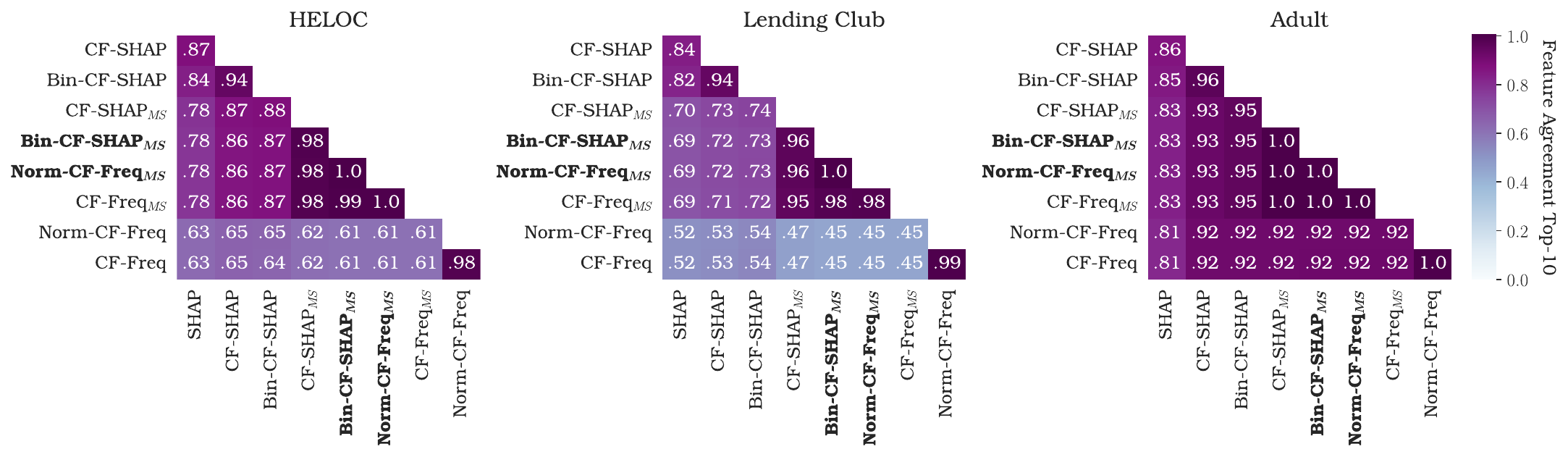}
         \caption{Top-10}
     \end{subfigure}
    \caption{Average pairwise Feature Agreement between explanations for different datasets. See \cref{appendix:additional_difference} for more details.}
    \label{fig:additional_difference_feature_agreement}
\end{figure*}

\begin{figure*}[!ht]
    \centering
     \begin{subfigure}[b]{\textwidth}
         \centering
         \includegraphics[width=.99\textwidth]{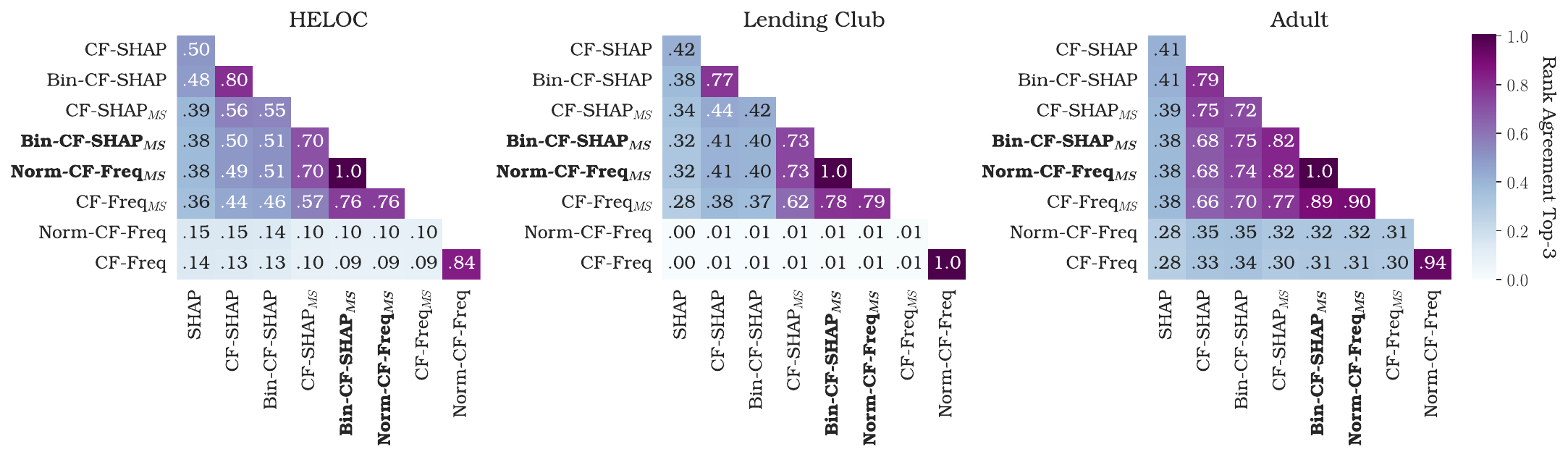}
         \caption{Top-3}
     \end{subfigure}
     \hfill
     \begin{subfigure}[b]{\textwidth}
         \centering
         \includegraphics[width=.99\textwidth]{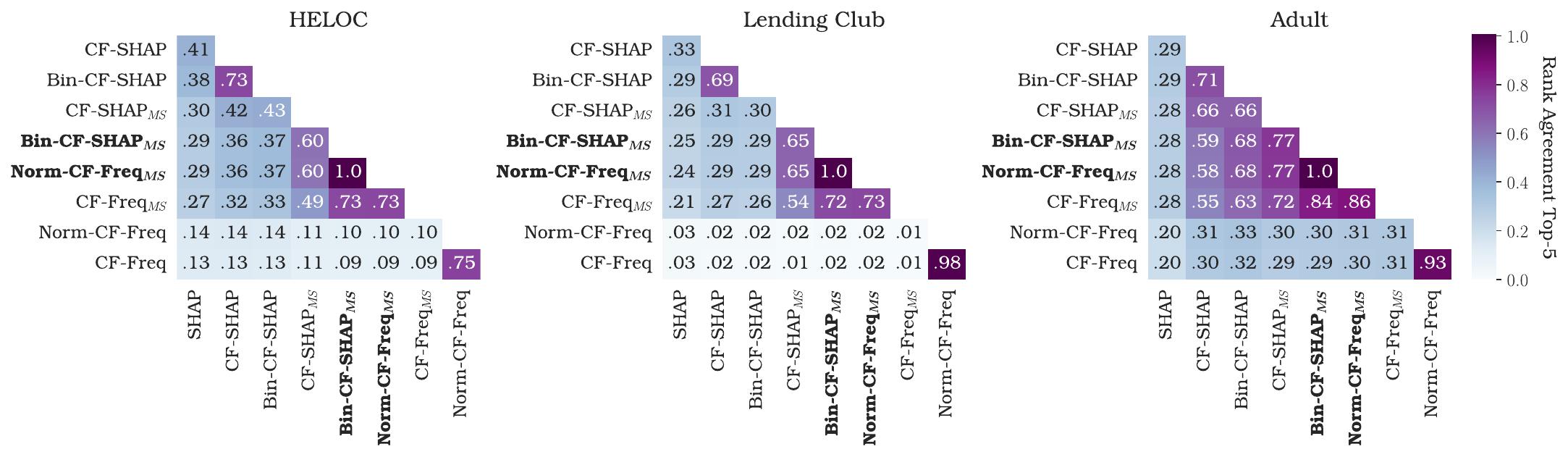}
         \caption{Top-5}
     \end{subfigure}
     \begin{subfigure}[b]{\textwidth}
         \centering
         \includegraphics[width=.99\textwidth]{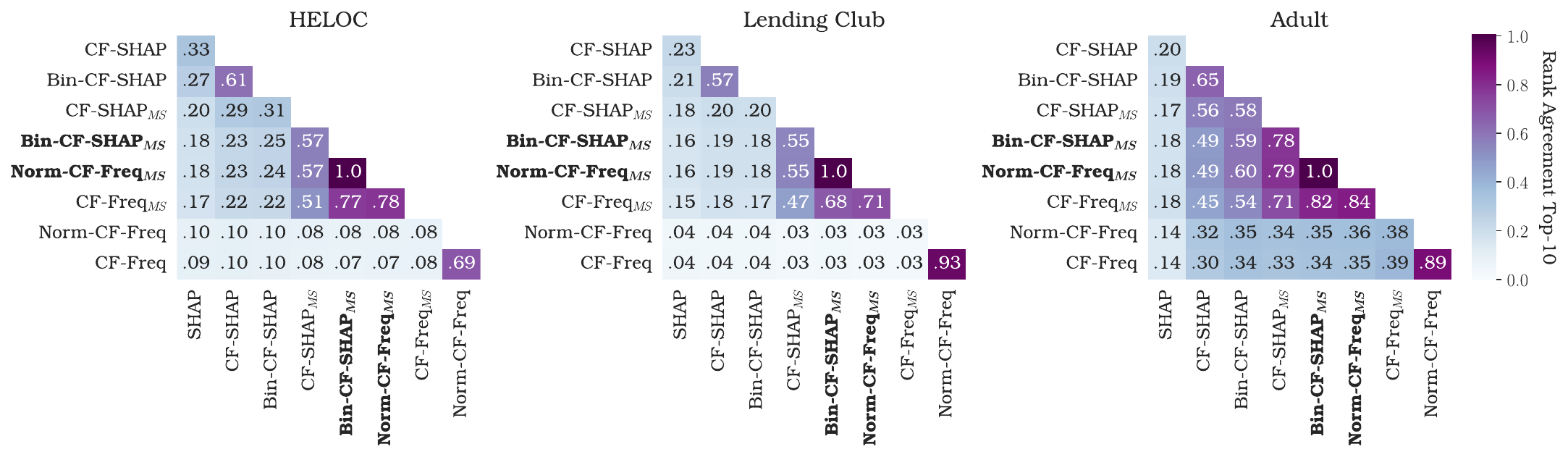}
         \caption{Top-10}
     \end{subfigure}
    \caption{Average pairwise Rank Agreement between explanations for different datasets. See \cref{appendix:additional_difference} for more details.}
    \label{fig:additional_difference_rank_agreement}
\end{figure*}

\begin{figure*}[!ht]
    \centering
     \includegraphics[width=.99\textwidth]{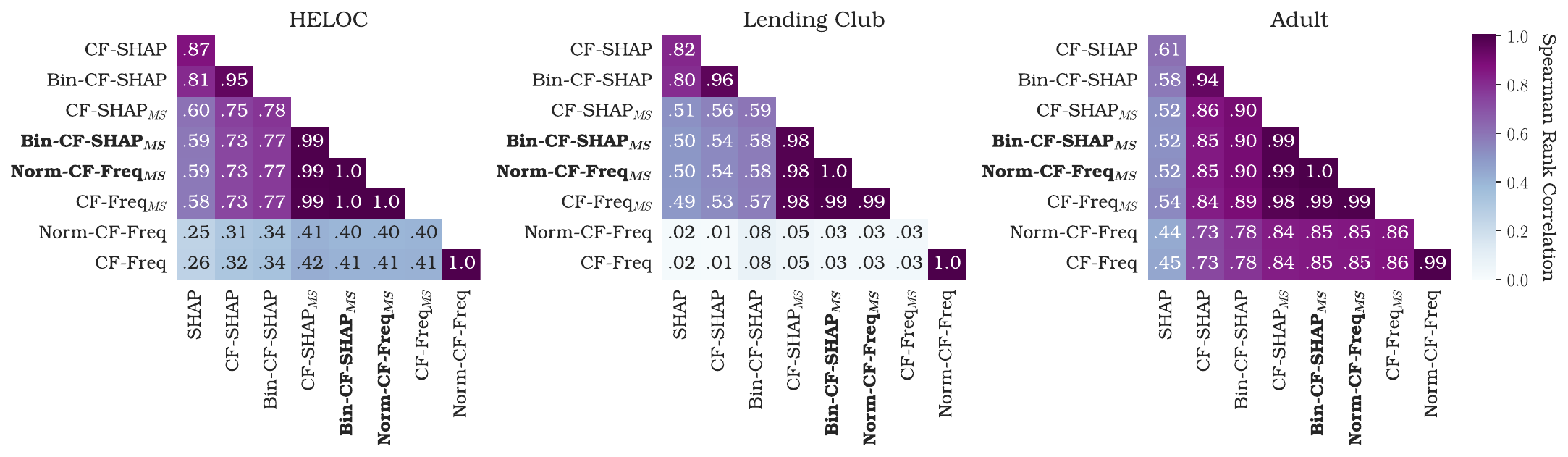}
    \caption{Average pairwise Spearman rank correlation between explanations for different datasets. See \cref{appendix:additional_difference} for more details.}
    \label{fig:additional_difference_spearman}
\end{figure*}

\begin{figure*}[!ht]
    \centering
     \includegraphics[width=.99\textwidth]{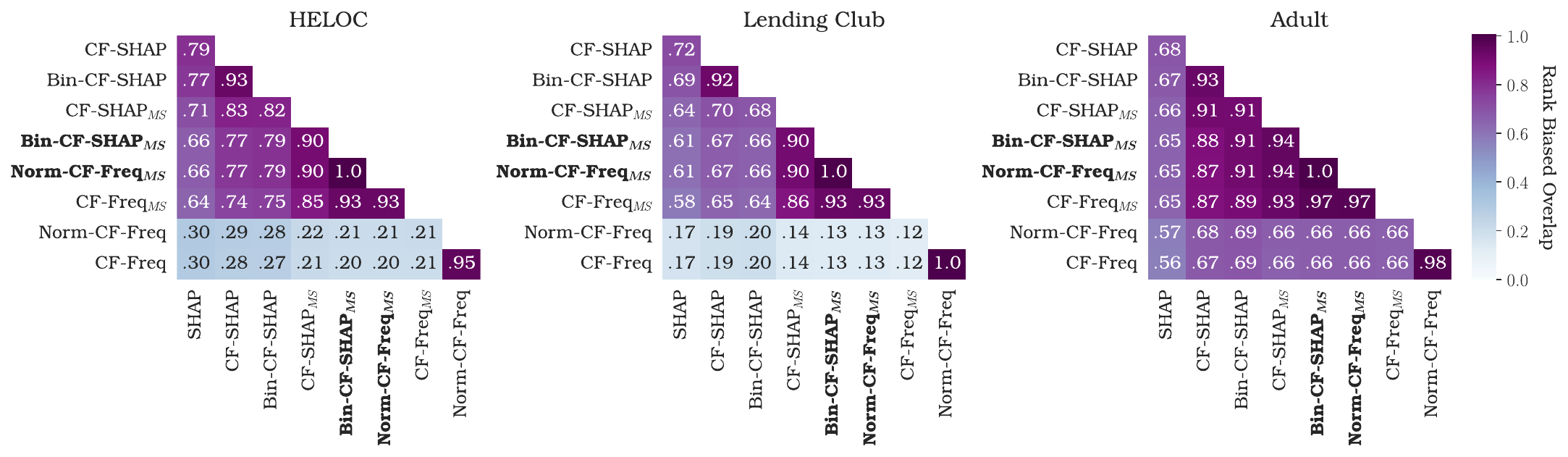}
    \caption{Average pairwise Rank Biased Overlap between explanations for different datasets. See \cref{appendix:additional_difference} for more details.}
    \label{fig:additional_difference_rbo}
\end{figure*}

\subsection{Counterfactual-ability and plausibility}

The counterfactual-ability and plausibility metrics proposed in \citet{Albini2022} have few hyper-parameters. In particular, they can be run using different strategies to induce a recourse from a feature importance explanations (called \emph{action functions}) and different ways to evaluate the cost of the recourse (called \emph{cost functions}). We refer the reader to \citet{Albini2022} for more details on the evaluation metrics and the hyper-parameters.

The results we reported in \cref{fig:experiment_performance} in the main text were obtained using \emph{random recourse} and \emph{total quantile shift cost}. To show the robustness of our evaluation under different action functions and cost functions we run the same experiments with the alternative definitions of cost and action functions that have been proposed in \cite{Albini2022}. 

In particular, in this appendix we report the results under the following alternative assumptions:
\begin{itemize}%[itemsep=0pt,topsep=0pt]
\item {random recourse} and {quantile shift cost with L2 norm;}
\item {proportional recourse} and {total quantile shift cost};
\item {proportional recourse} and {total quantile shift cost under L2 norm}.
\end{itemize}

\textit{Results} - \cref{fig:additional_costs} shows the results which are indeed consistent with those presented in \cref{sec:experiments}.

\begin{figure*}
    \centering
     \begin{subfigure}[b]{\textwidth}
         \centering
         \includegraphics[width=.99\textwidth]{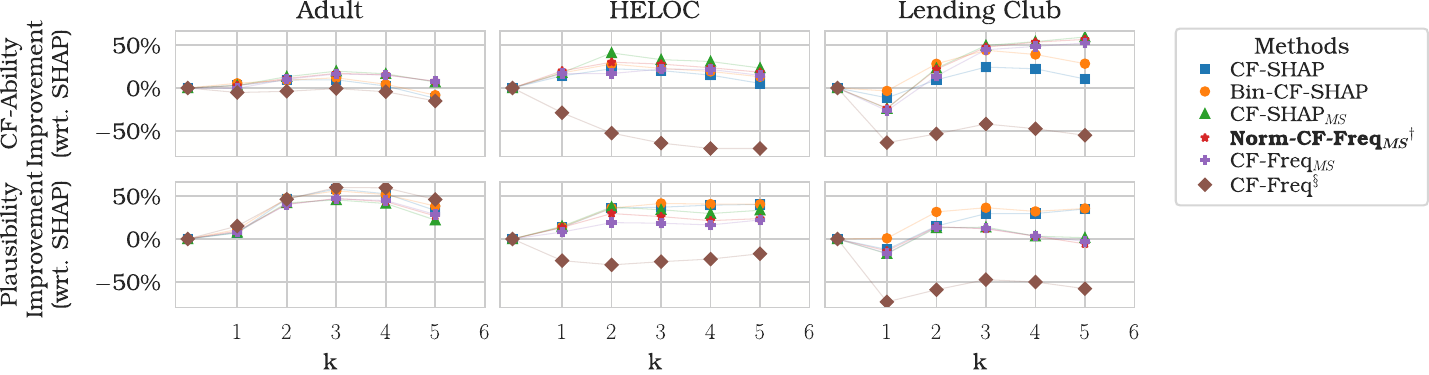}
         \caption{Random Recourse / L2 Cost Function}
     \end{subfigure}
     \hfill
     \begin{subfigure}[b]{\textwidth}
         \centering
         \includegraphics[width=.99\textwidth]{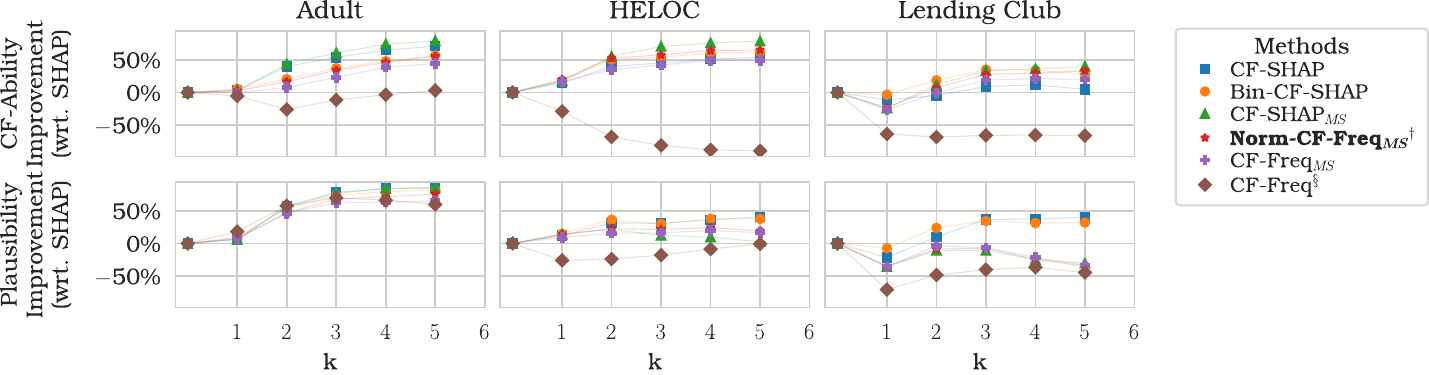}
         \caption{Proportional Recourse / L1 Cost Function}
     \end{subfigure}
     \hfill
     \begin{subfigure}[b]{\textwidth}
         \centering
         \includegraphics[width=.99\textwidth]{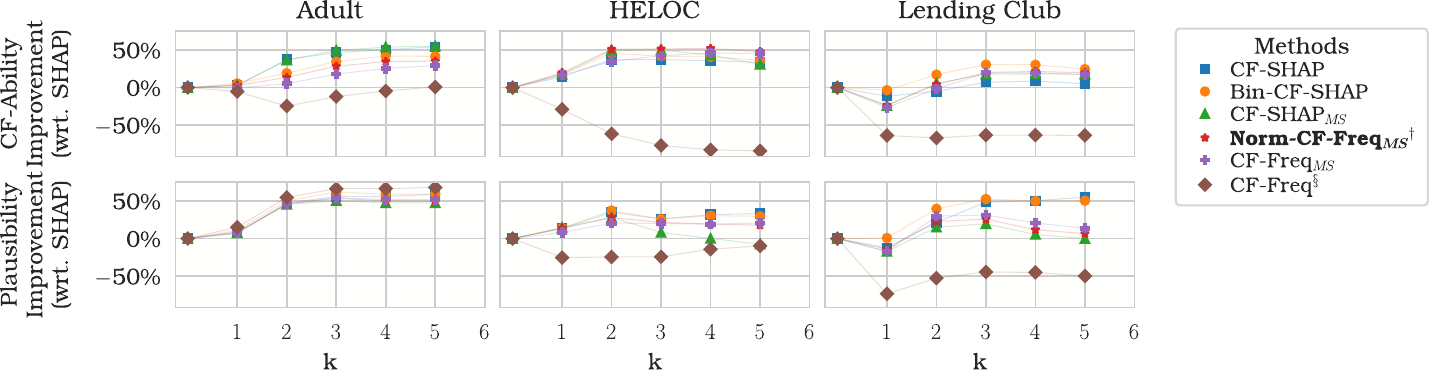}
         \caption{Proportional Recourse / L2 Cost Function}
     \end{subfigure}
    \caption{Counterfactual-ability and plausibility improvement (the higher the better) with respect to SHAP under different assumptions of recourse strategy and cost of counterfactuals. See \cref{appendix:additional_difference} for more details.}\label{fig:additional_costs}
\end{figure*}

\end{document}